\documentclass[11pt,letterpaper]{article}

\usepackage{amsmath,amssymb,amsthm,amsfonts}
\usepackage{bm}
\usepackage{deepthink}
\usepackage{hyperref}
\usepackage{url}
\usepackage{booktabs}
\usepackage{nicefrac}
\usepackage{microtype}
\usepackage{xcolor}
\usepackage{graphicx}
\usepackage{algorithm}
\usepackage{algorithmic}
\usepackage{wrapfig}
\usepackage{placeins}
\usepackage[nameinlink]{cleveref}
\usepackage[
  backend=biber,
  style=alphabetic,
  natbib=true,
  maxbibnames=100,
  minbibnames=100,
  maxcitenames=2,
  mincitenames=2
]{biblatex}
\usepackage[most]{tcolorbox}
\usepackage{subcaption}
\usepackage{titletoc}
\usepackage{dsfont}

\usepackage{amsmath,amsfonts,bm}

\def\eqref#1{equation~\ref{#1}}

\def\1{\bm{1}}

\DeclareMathAlphabet{\mathsfit}{\encodingdefault}{\sfdefault}{m}{sl}
\SetMathAlphabet{\mathsfit}{bold}{\encodingdefault}{\sfdefault}{bx}{n}

\newcommand{\E}{\mathbb{E}}

\newcommand{\R}{\mathbb{R}}

\newcommand{\Var}{\mathrm{Var}}

\newtheorem{theorem}{Theorem}
\newtheorem{lemma}{Lemma}
\newtheorem{proposition}{Proposition}
\newtheorem{corollary}{Corollary}

\usepackage{enumitem}

\theoremstyle{definition}
\newtheorem{remark}{Remark}

\crefname{assumption}{Assumption}{Assumptions}
\Crefname{assumption}{Assumption}{Assumptions}
\crefname{remark}{Remark}{Remarks}
\Crefname{remark}{Remark}{Remarks}
\crefname{equation}{Eq.}{Eqs.}
\Crefname{equation}{Equation}{Equations}
\crefname{figure}{Fig.}{Figs.}
\Crefname{figure}{Figure}{Figures}
\crefname{table}{Table}{Tables}
\Crefname{table}{Table}{Tables}
\crefname{section}{Sec.}{Secs.}
\Crefname{section}{Section}{Sections}
\crefname{appendix}{App.}{App.}
\Crefname{appendix}{Appendix}{Appendices}
\crefname{theorem}{Thm.}{Thms.}
\Crefname{theorem}{Theorem}{Theorems}
\crefname{lemma}{Lemma}{Lemmas}
\Crefname{lemma}{Lemma}{Lemmas}
\crefname{proposition}{Prop.}{Props.}
\Crefname{proposition}{Proposition}{Propositions}
\crefname{corollary}{Cor.}{Cors.}
\Crefname{corollary}{Corollary}{Corollaries}
\crefname{definition}{Def.}{Defs.}
\Crefname{definition}{Definition}{Definitions}
\crefname{assumption}{Assump.}{Assumps.}
\Crefname{assumption}{Assumption}{Assumptions}

\newcommand{\es}{E_s}

\title{Backward-Consistent Diffusion Sampling for Sparsely Observed PDE Inverse Problems}

\newcommand{\jointfirst}{\textsuperscript{\dag}}

\authorblock{
  \href{https://yidapan.org/}{\textbf{Yida Pan}}\textsuperscript{1,}\jointfirst,
  \href{https://ashiqwisc.github.io/}{\textbf{Muhammad H. Ashiq}}\textsuperscript{1,}\jointfirst, 
  \href{https://sites.google.com/view/jcy132/}{\textbf{Chanyong Jung}}\textsuperscript{1}, 
  \href{https://umjiayx.github.io/}{\textbf{Yixuan Jia}}\textsuperscript{1},
  \href{https://scholar.google.com/citations?user=oQbwqPIAAAAJ&hl=en}{\textbf{Jonah M. Miller}}\textsuperscript{2,1},
\href{https://qingqu.engin.umich.edu/}{\textbf{Qing Qu}}\textsuperscript{1,\ddag}, 
\href{https://sites.google.com/view/ismailalkhouri/about?authuser=0}{\textbf{Ismail R. Alkhouri}}\textsuperscript{2,1,\ddag}
}

\affiliation{
  \textsuperscript{1}University of Michigan \quad $\cdot$ \quad \textsuperscript{2}Los Alamos National Laboratory
}

\authornote{
  \jointfirst\ Joint first author \quad
  \textsuperscript{\ddag}\ Joint last author  
}

\abstracttext{
Recovering Partial Differential Equation (PDE) coefficient fields from extremely sparse observations is a severely ill-posed inverse problem for which generative machine learning methods (e.g., diffusion models) have become a leading way to encode the prior. Recent state-of-the-art diffusion solvers lift these priors to function spaces, finding a physics-consistent reconstruction in the output space of the diffusion denoiser. We prove that, in a discontinuous PDE setting, output space methods can result in failure to appropriately minimize the unobserved error with the correct coefficient field. Consequently, we propose \textbf{Fun}ction Space \textbf{B}ackward-\textbf{C}onsistent \textbf{S}ampling (FunBCS), an \textit{input space} optimization approach for solving PDE problems which aims to find the best input such that the denoiser reconstruction is physics-consistent. We then prove that FunBCS appropriately minimizes the unobserved error, unlike output space optimization methods. Per our theoretical analysis, we also provide insights on how to dynamically allocate the number of input space optimization steps used throughout the sampling process. Our evaluations, across four PDE inverse problems (including the discontinuous Darcy flow), demonstrate that FunBCS reduces the reconstruction error by $27$--$64\%$ while running $1.4$--$2.1\times$ faster when compared to the current state-of-the-art.

}

\keywords{Inverse Problems, Partial Differential Equations, Diffusion Models, Neural Operators, Function Space, AI for Science}

\date{\today}
\correspondence{\href{mailto:ismailal@umich.edu}{ismailal@umich.edu}}
\resources{\href{https://yidapan.org/funbcs-website/}{Project page} \quad $\cdot$ \quad \href{https://github.com/bamb00zled82/FunBCS}{Code}}

\headerlogo{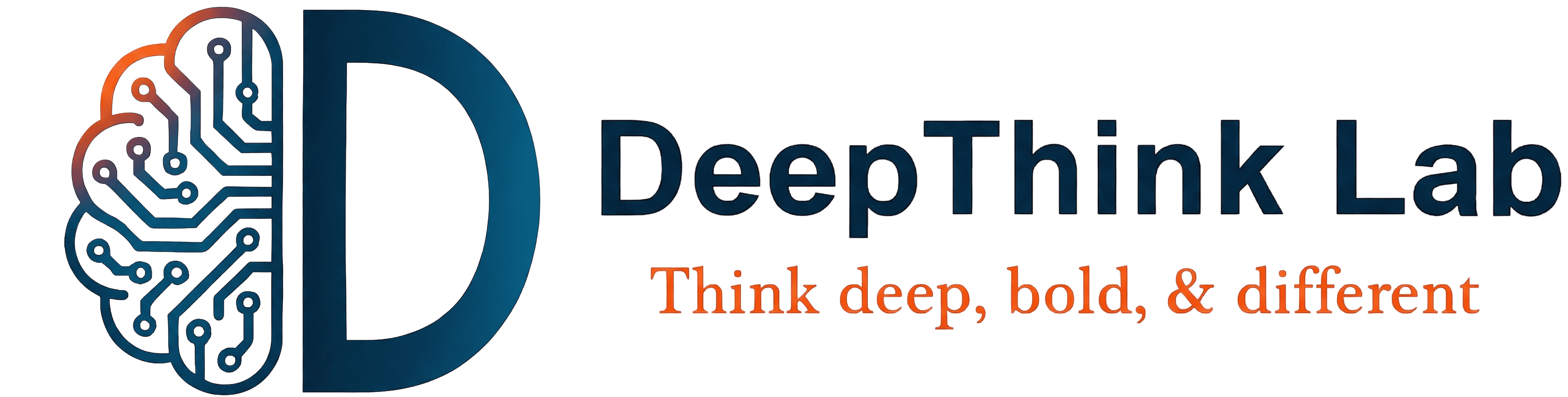}{https://deepthink-umich.github.io}

\begin{document}

\makeDeepthinkHeader

\begin{figure}[h]
\centering
\includegraphics[width=0.85\linewidth]{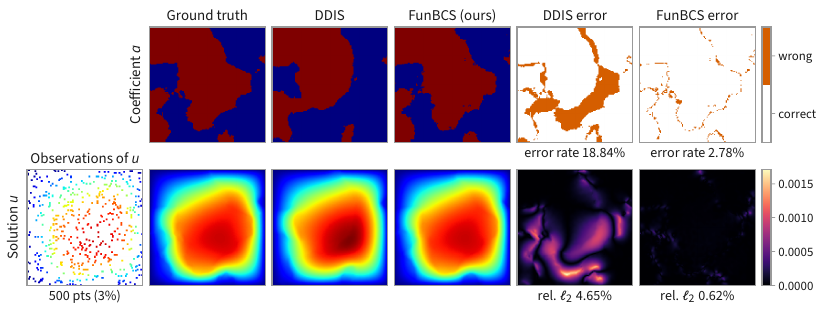}
\caption{\small{Darcy reconstruction from $3\%$ sparse solution observations (first column). The last two columns show the reconstruction error. DDIS, a SOTA output space optimization method, misplaces the discontinuities, mostly by assigning low permeability to regions where the true field is highly permeable, whereas FunBCS recovers the discontinuity structure significantly more faithfully.}}
\label{fig:darcy}
\end{figure}

\newpage
\tableofcontents

\section{Introduction}
\label{sec:intro}

Inverse problems governed by partial differential equations (PDEs) seek to estimate an unknown physical coefficient field $\bm{a}$ from sparse and sometimes noisy observations $\bm{u}_{\text{obs}}$. These problems are ubiquitous in observational science, where an incomplete observation must be described by a physical model, and arise in many applications including geophysics \citep{virieux2010overview,bui2013computational}, astrophysics \citep{PengKNML,GWParameterEsitmationReview}, biomedical imaging \citep{arridge1999optical,poudel2019survey}, and materials science \citep{bonnet2005inverse,avril2008overview}. Recovering the physical coefficient field $\bm{a}$ is \textit{ill-posed}, since the observations $\bm{u}_{\text{obs}}$ often correspond to several possible coefficient fields $\bm{a}$. Thus, one aims to sample from the posterior $\pi(\bm{a}|\bm{u}_{\text{obs}})$ \citep{stuart2010inverse}, e.g., classical methods use Markov Chain Monte Carlo (MCMC), which can exhibit  slow convergence for multimodal posteriors \citep{guan2007small} or in high dimensions \citep{roberts1997weak}. Additionally, practical use of classical methods often requires a fast surrogate for the forward model; these surrogates are frequently highly application-specific, limiting their generalizability \citep{tiglio2022reduced}.

Recently, generative diffusion models \citep{ho2020ddpm} have emerged as a powerful framework for representing expressive, multimodal  priors and enabling posterior sampling in high-dimensional inverse problems \citep{chung2023dps,li2024dcdp,zhang2025daps}. Since $\bm{a}$ and $\bm{u}_{\text{obs}}$ are discretized functions, a recent line of work lifts diffusion inverse solvers to infinite-dimensional function spaces \citep{lim2023scorefs}. Existing diffusion PDE inverse solvers, including the current state-of-the-art Decoupled Diffusion Inverse Solver (DDIS) \citep{lin2026ddis}, first denoise a  diffusion state $\bm{a}_t$ to obtain an estimate of the clean coefficient field %
$\hat{\bm{a}}_0$
and then separately modify this estimate to enforce measurement consistency so that $\bm{a}_0 \approx \hat{\bm{a}}_0$ and $\bm{u}_{\mathrm{obs}} \approx \mathcal{A}(\bm{a}_0)$, where $\mathcal{A}$ is the forward operator. We refer to this class of methods as \textit{output space optimization} methods, since this amounts to optimizing in the denoiser's output space while holding the denoiser input fixed (\Cref{fig:main}a).

\begin{figure}[t]
\centering
\includegraphics[width=\linewidth]{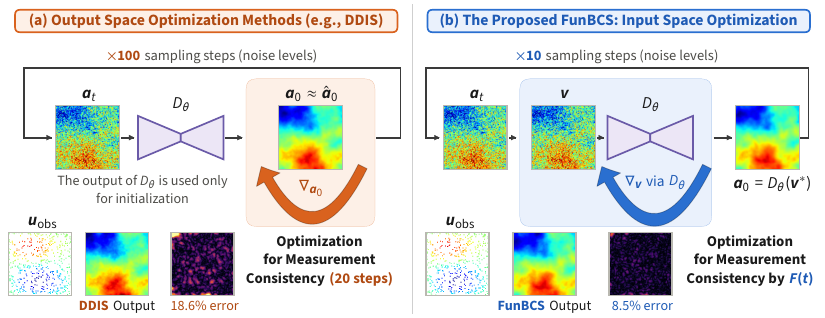}
\caption{Overview of the differences between output space optimization methods (e.g., DDIS) (a) and our input space optimization method (b). DDIS enforces measurement consistency on a reconstruction near the denoiser's ($D_\theta$) output, requiring $20$ optimization steps at each of its $100$ noise levels. In contrast, our input space optimization jointly incorporates measurement consistency and denoising by iteratively computing the denoiser output, requiring $10$ noise levels for which the number of input-space optimization steps, $F(t)$, is dynamically allocated using the scheduler presented in \cref{sec:efficiency}.}
\label{fig:main}
\vspace{-0.4cm}
\end{figure}

However, output space optimization can be ineffective for sparsely observed PDE inverse problems. Measurement consistency updates may leave the unobserved component of the reconstruction $\bm{a}_0$ unchanged from the denoiser prediction, allowing errors in directions invisible to the forward operator to persist. This limitation is particularly pronounced for discontinuous $\bm{a}$, whose prior can exhibit distinct modes corresponding to different discontinuity locations. When these modes differ substantially in unobserved directions, measurement-consistency updates alone may fail to correct an erroneous mode selected by the denoiser. This motivates the following research question:
\begin{tcolorbox}
    \begin{center}
    \textit{Can we obtain a diffusion-based PDE inverse solver that modifies both the observed and unobserved components of the denoiser prediction?}
\end{center}
\end{tcolorbox}

\noindent 
\textbf{Contributions}.
We address the question affirmatively, showing that \textit{input space optimization}, or selecting the \textit{input} to the denoiser such that measurement consistency is satisfied (\cref{fig:main}b), can effectively update unobserved components. For finite-dimensional diffusion, input-space optimization is \textit{backward-consistent} sampling \citep{alkhouri2025sitcom}, where the reconstruction is constrained to be in the denoiser's range. Our contributions can be summarized as follows:

\begin{itemize}[leftmargin=*]
    \item \textbf{Theoretical analysis of output space optimization}. We prove that, on a discontinuous Poisson PDE inverse problem, output space optimization methods, such as DDIS, can fail to change the unobserved components of the denoiser prediction and hence fail to recover the correct prior mode. 
    \item \textbf{Input space optimization analysis, scheduling function, and algorithm}. We theoretically demonstrate that input space optimization can change the unobserved component, thus obtaining less unobserved error than DDIS and recovering the correct prior mode. Furthermore, we analyze where to allocate most of the input optimization steps across the sampling process, and use this to design a scheduling function for various noise levels. Combining these, we introduce \textbf{Fun}ction Space \textbf{B}ackward-\textbf{C}onsistent \textbf{S}ampling (FunBCS), our diffusion input space optimization method. 
    \item \textbf{Comprehensive evaluation}. We extensively evaluate our method across four smooth and discontinuous PDE inverse problems. Compared to the current state-of-the-art, our method reduces the reconstruction error by $27$--$64\%$ (\cref{fig:darcy}) while running $1.4$--$2.1\times$ faster. For example, on the same hardware, FunBCS at $8$ seconds per sample achieves $11.8\%$ relative $\ell_2$ error on Poisson, whereas DDIS, at best, obtains $13.7\%$ error while running for $72$ seconds (\cref{fig:runtime}). 
\end{itemize}

\section{Background}
\label{sec:prelim}

\textbf{PDE inverse problems}. A partial differential equation (PDE) can be written as: 
\begin{equation}
    \begin{cases}
        \mathcal{J}_{a(\cdot)}u(\cdot) = f(\cdot) &\text{ on } \Omega, \\ 
        u(\cdot) = g(\cdot) &\text{ on } \partial \Omega\:,
    \end{cases}
    \label{eq:PDE}
\end{equation}
for functions $u(\cdot) \in \mathcal{H}$ and $a(\cdot) \in \mathcal{F}$, where $\mathcal{H}, \mathcal{F}$ are separable Hilbert spaces, $\Omega \subset \mathbb{R}^d$ is a domain, $\partial\Omega$ is the domain boundary, and $\mathcal{J}_{a(\cdot)}$ is a (potentially nonlinear) differential operator that depends on $a(\cdot)$. The goal of a PDE inverse problem is to reconstruct physical coefficient field $a(\cdot)$ from sparse observations of PDE solution $u(\cdot)$, assuming known $f(\cdot)$ and $g(\cdot)$. Specifically, we have: 
\begin{equation}
    u_{\mathrm{obs}}(\cdot) = \mathcal{A}(a(\cdot)) + \eta(\cdot)\:,
    \label{eq:inverse_problem}
\end{equation}
where $\mathcal{A}$ is a \textit{measurement operator} and $\eta(\cdot)$ is a noise term. Letting $\mathcal{G} : \mathcal{F} \to \mathcal{H}$ be the solution operator which  maps coefficient fields to PDE solution fields, and $\mathcal{O} : \mathcal{H} \to \mathcal{H}$ be an observation operator (e.g., a mask), we can write $\mathcal{A} = \mathcal{O} \circ \mathcal{G}$. Critically, $u_{\mathrm{obs}}(\cdot)$ often corresponds to several possible $a(\cdot)$, and small variations in $u_{\mathrm{obs}}(\cdot)$ result in drastically different $a(\cdot)$; that is, $\mathcal{A}$ is not injective and the inverse problem corresponding to \cref{eq:inverse_problem} is severely ill-posed. To address this, research on PDE inverse problems often takes the Bayesian perspective \citep{stuart2010inverse}, where one assumes a prior $\pi(a(\cdot))$ on $\mathcal{F}$ and aims to sample from the posterior $\pi(a(\cdot) | u_{\mathrm{obs}}(\cdot))$ via Bayes' theorem, i.e., $\pi(a(\cdot) | u_{\mathrm{obs}}(\cdot)) \propto \pi(u_{\mathrm{obs}}(\cdot) | a(\cdot)) \pi(a(\cdot))$. 

Throughout, we consider the \textit{discretized} version of the problem in \cref{eq:inverse_problem}; in particular, we select $n$ collocation points $\bm{x}_i \in \Omega$ for $i = 1, ..., n$, and evaluate only at those finitely many points\footnote{Without loss of generality, we note that not all traditional numerical methods use collocation points or pointwise values. For example, finite volume and finite element methods may use \textit{averages} and spectral methods may use Fourier coefficients \citep{quarteroni2009numerical}.}. For example, we discretize $a(\cdot)$ as $\bm{a} = (a(\bm{x}_1), ..., a(\bm{x}_n))$; we respectively discretize $u(\cdot),u_{\mathrm{obs}}(\cdot), \eta(\cdot), \mathcal{G},$ and $\mathcal{O}$ as $\bm{u} \in \mathbb{R}^n$, $\bm{u}_{\mathrm{obs}}, \bm{\eta} \in \mathbb{R}^m$, $\mathcal{G}_h : \mathbb{R}^n \to \mathbb{R}^n$, and $\mathcal{O}_h : \mathbb{R}^n \to \mathbb{R}^m$ where $m \ll n$. Importantly, note that $\mathcal{G}_h$ and $\mathcal{O}_h$ can remain nonlinear in this setting, and thus do not necessarily admit matrix representations. We thus let $\mathcal{A}_h = \mathcal{O}_h\mathcal{G}_h$. A detailed overview of our notation is provided in \cref{sec:notation}.

\noindent 
\textbf{Function space diffusion}. One important way of obtaining a prior $\pi(\bm{a})$ for the PDE inverse problem in \cref{eq:inverse_problem} is to set $\pi(\bm{a}_0) := \pi(\bm{a})$ and sample $\bm{a}_0 \sim \pi(\bm{a}_0)$ by running a learned reverse diffusion process initialized with a sample from a simple distribution $\bm{a}_T \sim \pi(\bm{a}_T)$. Since $a(\cdot)$ is a function, this involves training diffusion on (discretized) functions \citep{lim2023scorefs}. After discretization, this is equivalent to the standard diffusion process \citep{ho2020ddpm} with two slight alterations, specifically that the noise has kernel matrix $\bm{K}$ as its covariance instead of $\bm{I}$ and that $\bm{K}$ multiplies the score in Tweedie's formula. More specifically, we have the variance exploding (VE) diffusion forward process \citep{song2021sde} at $0 < t_0<\cdots<t_N=T$, where $N$ is the number of sampling steps:
\begin{equation}
 \bm{a}_{t_{k+1}}=\bm{a}_{t_k}
 +\sqrt{\sigma_{t_{k+1}}^2-\sigma_{t_k}^2}\,\bm{\varepsilon}_k,
 \qquad k=0,\ldots,N-1,
 \label{eq:ve-forward-discrete}
\end{equation}
where $\bm{\varepsilon}_k\sim\mathcal{N}(\bm{0}, \bm{K})$ are independent across $k$. We perform reverse diffusion with respect to Tweedie's formula given by:  
\begin{equation}
     D^*(\bm{a}_t,t) := \mathbb{E}[\bm{a}_0 | \bm{a}_t] =\bm{a}_t+\sigma_t^2 \bm{K} \nabla_{\bm{a}_t} \log p_t(\bm{a}_t). 
    \label{eq:tweedie-denoiser}
\end{equation}
For a learned score $\bm{s}_{\bm{\theta}} \approx \nabla_{\bm{a}_t} \log p_t(\bm{a}_t)$ \citep{song2021sde}, we denote the denoiser by $D_{\bm{\theta}} \approx \mathbb{E}[\bm{a}_0 | \bm{a}_t]$. We provide a more detailed formal characterization of function space diffusion in \cref{sec:function_space_diffusion}.

\noindent 
\textbf{Diffusion PDE inverse solvers}. Diffusion models \citep{sohl2015deep} are successful at modeling high-dimensional priors \citep{rombach2022high}, including in function spaces \citep{lim2023scorefs}. Thus, several diffusion solvers have been developed for PDE inverse problems, with standard \citep{huang2024diffusionpde} or function space \citep{yao2025fundps,lin2026ddis} diffusion priors. At a sampling step $t_k$, these solvers perform two key operations:  (\textbf{D}) using the diffusion denoiser to estimate the coefficient field and (\textbf{M}) enforcing measurement consistency. We use the term \textit{output space optimization} to describe methods where (\textbf{D}) and (\textbf{M}) are \textit{disjoint}, since this amounts to optimizing in the output space of the denoiser while leaving the denoiser input fixed. From this perspective, DiffusionPDE \citep{huang2024diffusionpde}, FunDPS \citep{yao2025fundps}, and Decoupled Diffusion Inverse Solver (DDIS) \citep{lin2026ddis} are all output space optimization procedures. We discuss these solvers in further detail in \cref{sec:recurrences}. We provide an extensive related work section in \cref{sec:related_work} covering classical numerical solvers, diffusion inverse solvers and theory, approaches to updating unobserved components, and guidance scheduling.

\section{Analysis of Output Space Optimization Sampling} 
\label{sec:analysis}

 We instantiate DDIS as an example output space optimization method for which we provide a discontinuous PDE inverse problem, demonstrating that unobserved differences in prior modes arise naturally. Consequently, DDIS does not alter the denoiser's unobserved component, and hence fails to recover the correct prior mode.  All proofs are deferred to \cref{sec:proofs}. 
 
 To sample from $\pi(\bm{a} | \bm{u}_{\mathrm{obs}})$, diffusion inverse solvers, as discussed earlier, do the following at timestep $t_k$ with noisy instance $\bm{a}_{t_k}$: %

\vspace{-3mm}
\begin{itemize}
    \item (\textbf{D}) compute the estimated output sample $\hat{\bm{a}}_0^{(t_k)}$ via, e.g., \cref{eq:tweedie-denoiser} from $\bm{a}_{t_k}$ and 
    \item (\textbf{M}) enforce \textit{measurement consistency} on the reconstruction $\bm{a}_0^{(t_k)} \approx \hat{\bm{a}}_0^{(t_k)}$, i.e., incorporate likelihood information $\pi(\bm{u}_{\mathrm{obs}}|\bm{a})$ so that $\bm{u}_{\mathrm{obs}} \approx \mathcal{A}_h(\bm{a}_0^{(t_k)})$. 
\end{itemize}

The defining characteristic of output space inverse solvers is that the operations \textbf{(D)} and \textbf{(M)} are performed \textit{separately} at time $t_k$. Notably, output space inverse solvers may perform other operations before, between, or after \textbf{(D)} and \textbf{(M)}. %
Next, we formalize the DDIS sampler. Fix consecutive diffusion times $t_k>t_{k-1}$. DDIS assumes Gaussian likelihood at timestep $0$, with $\gamma > 0$, yielding \[p(\bm{u}_{\mathrm{obs}}\mid \bm{a}_0)
    \propto\exp\!\left\{-\frac{\|\mathcal{A}_h \bm{a}_0-\bm{u}_{\mathrm{obs}}\|^2}
                                   {2\gamma^2}\right\}.\] 
At time $t_k$, DDIS approximates the denoising law $\pi(\bm{a}_0 \mid \bm{a}_{t_k})$ with $\pi_D(\bm{a}_0 \mid \bm{a}_{t_k}^{(D)})$ which has density $q_D(\bm{a}_0 \mid \bm{a}_{t_k}^{(D)})=\mathcal{N}(\bm{a}_0;\hat{\bm{a}}_0^{(t_k)},r_{t_k}^2\bm I)$ with $\hat{\bm{a}}_0^{(t_k, D)} := D_{\bm{\theta}}(\bm{a}_{t_k}^{(D)},t_k)$, where superscript $(D)$ denotes DDIS. Setting $r_{t_k}=\sigma_{t_k}$ gives: %
\begin{equation}
    q_{D,t_k}(\bm{a}_0\mid \bm{a}_{t_k}^{(D)},\bm{u}_{\mathrm{obs}})
    \propto\exp\!\left\{
       -\frac{\|\mathcal{A}_h \bm{a}_0-\bm{u}_{\mathrm{obs}}\|^2}{2\gamma^2}
       -\frac{\|\bm{a}_0-\hat{\bm{a}}_0^{(t_k, D)}\|^2}{2\sigma_{t_k}^2}
    \right\}.
    \label{eq:ddis-correction}
\end{equation}
The next iteration of DDIS is given by: 
\begin{equation}
    \bm{a}_{t_{k-1}}^{(D)}=\tilde{\bm{a}}_0^{(t_k)}+\sigma_{t_{k-1}}\bm{\varepsilon}_k^{(D)}, \qquad \tilde{\bm{a}}_0^{(t_k)} \sim q_{D,t_k}(\cdot | \bm{a}_{t_k}^{(D)}, \bm{u}_{\mathrm{obs}}), \; \bm{\varepsilon}_k^{(D)} \sim \mathcal{N}(\bm{0}, \bm{K}),
    \label{eq:DDIS-update}
\end{equation}
where $\varepsilon_{k}^{(D)}$ are drawn independently. In practice, DDIS uses Langevin dynamics \citep{roberts1996exponential} to sample from \cref{eq:ddis-correction} for use in \cref{eq:DDIS-update} at every diffusion timestep $t_k$. 
In particular, DDIS finds $\tilde{\bm{a}}_0^{(t_k)}$ close to $\hat{\bm{a}}_0^{(t_k, D)}$, enforces measurement consistency on $\bm{a}_0$, and then renoises to $t_{k - 1}$. This is run until time $t_0$ to obtain a final reconstructed physical coefficient field. %

Next, towards explaining why output space optimization can fail, we provide an example PDE inverse problem on which we will show that DDIS fails:  
\begin{align}
    &u^{\prime\prime}(\cdot) = a(\cdot),\; \text{ on } (0,1), ~~~~
    u(1) = u(0) = 0.
    \label{eq:Poisson's equation}
\end{align}
 We now assume that $a(\cdot)$ has a single discontinuity at  $z \in (0,1)$, so that admissible coefficient fields are $\{ a(\cdot) \in L^2(0,1): a(x) = a^{(1)} \mathds{1}_{(0, z)} + a^{(2)} \mathds{1}_{[z, 1]}, \; z \in (0,1)\}$, where $L^2$ are square-integrable functions. We assume that the coefficients $a^{(1)}, a^{(2)}$ are known beforehand so that the uncertainty is only over the discontinuity location; upon reconstruction one can place $a^{(1)}, a^{(2)}$ on either side of $z$. Furthermore, we consider two physical coefficient fields $a_-(\cdot)$ and $a_+(\cdot)$ with respective distinct discontinuity locations $z_- \neq z_+$. With discretization at $n$ collocation points $\bm{x}_i \in (0,1)$, consider a prior which weights these two discontinuity locations equally, i.e., \[\pi(\bm{a}_0)=\tfrac12\delta_{\bm{a}_-}+\tfrac12\delta_{\bm{a}_+}.\] 
 
 In this setting, we consider a noiseless Bayesian inverse problem per \cref{eq:inverse_problem} given by $\bm{u}_{\mathrm{obs}} = \bm{OG}(\bm{a}_+)$, where $\bm{G}$ is a linear 
 solution operator and $\bm{O}$ is a selection matrix. Thus, given $\bm{u}_{\mathrm{obs}}$, a successful diffusion inverse solver should recover $\bm{a}_+$ at the end of the sampling process. Note that while the prior on $\bm{a}_+$ and $\bm{a}_-$ is not exactly learnable by a diffusion model, it can be approximated by a mixture of Gaussians with small variance. The noiseless setting of $\bm{\eta} = \bm{0}$ matches both our experiments in \cref{sec:exp} and the DDIS experiments in \citet{lin2026ddis}.

  Note that this setting is a linear analogue of the nonlinear Darcy inverse problem in \cref{fig:darcy} where we assume that $a^{(1)}$ and $a^{(2)}$ are known, and find the correct discontinuity location for linear Poisson. We simplify this setting by assuming only two possible discontinuity locations; we find that this is sufficient to observe the limitations of output space optimization methods, and the benefits of input space optimization (which we will discuss in the upcoming sections). 
  
\begin{remark}
  Let $\bm{A} := \bm{OG}$ be the forward operator and $\bm{P}_{\mathrm{ker} \bm{A}}$ be the orthogonal projection matrix onto its nullspace.
We prove in \cref{prop:finite_diff_assumps}, described in \cref{proof:proof_of_finite_diff_assumps}, that the setting described above yields that the measurements differ ($\bm{A}\bm{a}_+ \neq \bm{A}\bm{a}_-$) and that the nullspace components differ ($\bm{P}_{\mathrm{ker} \bm{A}}\bm{a}_+ \neq \bm{P}_{\mathrm{ker} \bm{A}}\bm{a}_-$) with high probability. We thus take these as assumptions throughout. \end{remark}
 In the setting above, we find that output space optimization can do no better than the denoiser in unobserved error. This issue is critical if the denoiser chooses the incorrect prior mode. Formally: 
 \begin{proposition}[DDIS preserves denoiser nullspace component]
 \label{prop:ddis_fail_prelim}
     Fix $t_k > t_{k - 1}$. Suppose $\bm{A}\bm{a}_+ \neq \bm{A}\bm{a}_-$ and $\bm{P}_{\mathrm{ker} \bm{A}}\bm{a}_+ \neq \bm{P}_{\mathrm{ker} \bm{A}}\bm{a}_-$ and let $\bar{\bm{a}}_{D,t_k - 1} := \mathbb{E}[\bm{a}_{t_k - 1}^{(D)} | \bm{a}_{t_k}^{(D)}, \bm{u}_{\mathrm{obs}}]$, where $\bm{a}_{t_k - 1}^{(D)}, \bm{a}_{t_k}^{(D)}$ are given by \cref{eq:DDIS-update} under the exact denoiser $D^*(\cdot, t_k)$ in \cref{eq:tweedie-denoiser}. We then have that: 
     \begin{equation}
          \|\bm{P}_{\mathrm{ker} \bm{A}} (\bar{\bm{a}}_{D, t_k - 1} - \bm{a}_+)\|_2 = \|\bm{P}_{\mathrm{ker} \bm{A}} (D^*(\bm{a}_{t_k}^{(D)}, t_k) - \bm{a}_+)\|_2\;.
          \label{eq:ddis_does_no_better}
     \end{equation}
 \end{proposition}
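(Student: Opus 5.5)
The plan is to compute $\bar{\bm{a}}_{D,t_{k-1}}$ in closed form and then check that its nullspace component equals that of the denoiser output. The Gaussian structure of \cref{eq:ddis-correction} makes this direct.

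\textbf{Step 1: the correction step is an exact Gaussian.} Under the linear forward operator $\bm{A}=\bm{OG}$, the exponent in \cref{eq:ddis-correction} is quadratic in $\bm{a}_0$. So $q_{D,t_k}(\cdot\mid\bm{a}_{t_k}^{(D)},\bm{u}_{\mathrm{obs}})$ is Gaussian, with precision $\bm{\Lambda}:=\gamma^{-2}\bm{A}^\top\bm{A}+\sigma_{t_k}^{-2}\bm{I}$. Its mean is
\[
\bm{m} \;=\; \bm{\Lambda}^{-1}\bigl(\gamma^{-2}\bm{A}^\top\bm{u}_{\mathrm{obs}}+\sigma_{t_k}^{-2}\hat{\bm{a}}_0\bigr),
\qquad \hat{\bm{a}}_0:=D^*(\bm{a}_{t_k}^{(D)},t_k).
\]

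\textbf{Step 2: the expectation of the next iterate.} By \cref{eq:DDIS-update}, the renoising noise $\bm{\varepsilon}_k^{(D)}$ is independent and has zero mean. Hence $\bar{\bm{a}}_{D,t_{k-1}}=\mathbb{E}[\tilde{\bm{a}}_0^{(t_k)}\mid\bm{a}_{t_k}^{(D)},\bm{u}_{\mathrm{obs}}]=\bm{m}$. If one instead models the Langevin output, exactness of the Langevin sampler is assumed, so that $\tilde{\bm{a}}_0$ is an exact draw from $q_{D,t_k}$.

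\textbf{Step 3: the key projection identity.} Write $\bm{P}:=\bm{P}_{\ker\bm{A}}$. Then $\bm{A}\bm{P}=\bm{0}$, and since $\operatorname{range}(\bm{A}^\top)=(\ker\bm{A})^\perp$, also $\bm{P}\bm{A}^\top=\bm{0}$. Because $\bm{P}$ is symmetric, $\bm{P}\bm{A}^\top\bm{A}=\bm{0}=\bm{A}^\top\bm{A}\bm{P}$, so $\bm{P}$ commutes with $\bm{\Lambda}$. Moreover $\bm{P}\bm{\Lambda}=\sigma_{t_k}^{-2}\bm{P}$, which gives $\bm{P}\bm{\Lambda}^{-1}=\sigma_{t_k}^{2}\bm{P}$. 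Applying $\bm{P}$ to $\bm{m}$:
\[
\bm{P}\bm{m}=\sigma_{t_k}^2\bm{P}\bigl(\gamma^{-2}\bm{A}^\top\bm{u}_{\mathrm{obs}}+\sigma_{t_k}^{-2}\hat{\bm{a}}_0\bigr)=\bm{P}\hat{\bm{a}}_0 .
\]
Subtracting $\bm{P}\bm{a}_+$ from both sides and taking norms yields \cref{eq:ddis_does_no_better}. In fact the two vectors are equal, not just their norms.

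\textbf{Role of the hypotheses.} The assumptions $\bm{A}\bm{a}_+\neq\bm{A}\bm{a}_-$ and $\bm{P}\bm{a}_+\neq\bm{P}\bm{a}_-$ are not needed for the identity itself. They only ensure the statement is nontrivial: under the two-atom prior, $D^*$ is a convex combination of $\bm{a}_\pm$, so its nullspace error is generally nonzero. I would remark on this rather than use it.

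\textbf{Expected obstacle.} The main point of care is Step 2, the interpretation of the conditional expectation. The $t_k - 1$ in the statement should be read as $t_{k-1}$. The expectation is over both the correction sample and the renoising noise, with $\bm{a}_{t_k}^{(D)}$ fixed, so the denoiser output is deterministic. Once this is settled, the rest is linear algebra.
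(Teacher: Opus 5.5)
Your proposal is correct and follows the paper's proof essentially step for step. The paper likewise computes the Gaussian mean of \cref{eq:ddis-correction} in closed form (\cref{lemma:ddis-projected-mean}), uses the independent zero-mean renoising to identify it with $\bar{\bm{a}}_{\mathrm D,t_{k-1}}$ (\cref{lemma:renoising}), and removes the data term via $\bm{P}_{\ker\bm{A}}\bm{A}^\top=\bm{0}$ (\cref{lemma:ddis-kernel}). The only differences are cosmetic: the paper multiplies through by your precision matrix $\bm{\Lambda}$ and projects the resulting stationarity equation, whereas you use $\bm{P}_{\ker\bm{A}}\bm{\Lambda}^{-1}=\sigma_{t_k}^2\bm{P}_{\ker\bm{A}}$ directly, and, as you noted, the paper's proof also never uses the two nondegeneracy hypotheses.
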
 
 \begin{remark}
 \label{remark:ddis_fail_prelim_remark}
      Per \cref{eq:ddis_does_no_better}, DDIS at step $t_{k -1 }$ does not grow any closer in nullspace to $\bm{a}_+$ than the denoiser does at step $t_k$. This is precisely because the measurement consistency error in \cref{eq:ddis-correction} does not update the unobserved components, while the second term aligns the reconstruction with the denoiser. While the  reconstruction $\tilde{\bm{a}}_0^{(t_k)}$ influences the denoiser input at the next diffusion timestep after renoising, this merely gives the denoiser another chance to correct itself, which is not guaranteed.  Consequently, if the denoiser remains mistaken at later timesteps, particularly near the end of the reverse process, the corresponding nullspace error may persist in the final reconstruction.  We provide experimental validation for this on Darcy flow in \cref{sec:exp}.
 \end{remark}

      We note that in practice DDIS uses the probability flow ODE instead of the denoiser; per \cref{lemma:ddis-kernel} in \cref{sec:lemmas}, the above \cref{prop:ddis_fail_prelim} still holds in this case.  \cref{prop:ddis_fail_prelim} immediately yields that DDIS fails to recover $\bm{a}_+$ more generally:

 \begin{corollary}[DDIS does not recover $\bm{a}_+$]
 \label{corollary:ddis-fail}
Fix $t_k > t_{k-1}$. With the same assumptions as \cref{prop:ddis_fail_prelim}, let $k_{t_k}(\bm{v}) = \mathbb{P}(\bm{a}_0 =\bm{a}_+ | \bm{a}_{t_k} = \bm{v})$ be the regular conditional probability of recovering the correct $\bm{a}_+$. Then, this probability is well-defined and we have that: 
\begin{equation}
    \|\bar{\bm{a}}_{D, t_k - 1} - \bm{a}_+\|_2  \geq (1-k_{t_k}(\bm{a}_{t_k}^{(D)}))||\bm{P}_{\mathrm{ker} \bm{A}}\bm{d}||_2 > 0. 
    \label{eq:ddis-fail}
\end{equation}
 \end{corollary}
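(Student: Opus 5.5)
Here $\bm{d}$ is presumably $\bm{a}_+ - \bm{a}_-$; in this two-point prior it is the only natural difference vector. The plan is to bound the full error from below by its nullspace component and then compute that component exactly using \cref{prop:ddis_fail_prelim}.

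\textbf{Step 1: well-definedness of $k_{t_k}$.} Under the VE forward process, $\bm{a}_{t_k} = \bm{a}_0 + \sigma_{t_k}\bm{\xi}$ with $\bm{\xi}\sim\mathcal{N}(\bm{0},\bm{K})$ independent of $\bm{a}_0$. Assume $\bm{K}$ is positive definite. Bayes' rule with the two Gaussian densities then gives the explicit, everywhere-positive expression
\[
k_{t_k}(\bm{v}) = \frac{\phi(\bm{v}-\bm{a}_+)}{\phi(\bm{v}-\bm{a}_+)+\phi(\bm{v}-\bm{a}_-)},
\]
where $\phi$ is the $\mathcal{N}(\bm{0},\sigma_{t_k}^2\bm{K})$ density. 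This is a measurable version of the conditional probability, valid for every $\bm{v}$ and not only almost surely, so evaluating it at $\bm{a}_{t_k}^{(D)}$ makes sense. Moreover $0<k_{t_k}(\bm{v})<1$.

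\textbf{Step 2: write the exact denoiser.} By \cref{eq:tweedie-denoiser},
\[
D^*(\bm{v},t_k) = k\,\bm{a}_+ + (1-k)\,\bm{a}_- , \qquad k = k_{t_k}(\bm{v}).
\]
Hence
\[
D^*(\bm{v},t_k) - \bm{a}_+ = -(1-k)(\bm{a}_+ - \bm{a}_-) = -(1-k)\bm{d}.
\]

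\textbf{Step 3: combine with \cref{prop:ddis_fail_prelim}.} Since $\bm{P}_{\mathrm{ker}\bm{A}}$ is an orthogonal projection, it is non-expansive, so
\[
\|\bar{\bm{a}}_{D,t_k-1}-\bm{a}_+\|_2 \ge \|\bm{P}_{\mathrm{ker}\bm{A}}(\bar{\bm{a}}_{D,t_k-1}-\bm{a}_+)\|_2 .
\]
By \cref{eq:ddis_does_no_better} the right-hand side equals $\|\bm{P}_{\mathrm{ker}\bm{A}}(D^*(\bm{a}^{(D)}_{t_k},t_k)-\bm{a}_+)\|_2$. Substituting Step 2, this is $(1-k_{t_k}(\bm{a}^{(D)}_{t_k}))\,\|\bm{P}_{\mathrm{ker}\bm{A}}\bm{d}\|_2$, which gives the first inequality in \cref{eq:ddis-fail}.

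\textbf{Strict positivity.} The factor $1-k$ is strictly positive by Step 1. The factor $\|\bm{P}_{\mathrm{ker}\bm{A}}\bm{d}\|_2$ is strictly positive by the assumption $\bm{P}_{\mathrm{ker}\bm{A}}\bm{a}_+\neq\bm{P}_{\mathrm{ker}\bm{A}}\bm{a}_-$. Their product is therefore strictly positive.

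\textbf{Main obstacle.} The only delicate point is Step 1: producing a version of the conditional probability that is defined pointwise and is strictly less than $1$. Both properties hinge on $\bm{K}$ being nondegenerate, so that the Gaussian densities exist and are everywhere positive. If $\bm{K}$ is only positive semidefinite, I would instead work with densities relative to Lebesgue measure on the affine support, and must check separately that $\bm{a}_+-\bm{a}_-$ lies in the range of $\bm{K}$. Otherwise the posterior can degenerate to a point mass and $k$ could equal $1$.
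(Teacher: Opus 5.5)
Your proposal is correct and follows the paper's proof essentially step for step: the paper also gets well-definedness and $0<k_{t_k}<1$ from the explicit Bayes formula for the two-point prior, writes $D^*(\bm{a}_{t_k}^{(D)},t_k)-\bm{a}_+ = -(1-k_{t_k}(\bm{a}_{t_k}^{(D)}))\bm{d}$, and combines \cref{prop:ddis_fail_prelim} with the non-expansiveness of $\bm{P}_{\mathrm{ker}\bm{A}}$ and the assumption $\bm{P}_{\mathrm{ker}\bm{A}}\bm{d}\neq\bm{0}$. Your concern about a merely semidefinite $\bm{K}$ does not arise, since the paper assumes $\bm{K}$ is positive definite throughout.
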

\begin{remark}
    Note that \cref{corollary:ddis-fail} holds for $t_k$ arbitrary, including at the end of the sampling process, and that \cref{eq:ddis-fail} continues to hold as $\sigma_{t_0} \to 0$. Furthermore, as discussed in \cref{remark:ddis_fail_prelim_remark}, DDIS' inability to recover the true $\bm{a}_+$ is exacerbated when the wrong mode is preferred (i.e., when $k_{t_k}(\bm{a}_{t_k}^{(D)}) < \frac{1}{2}$), particularly near the end of the reverse process; we confirm this empirically on Darcy flow in \cref{sec:exp}. 
    \end{remark}

\section{Input Space Optimization Sampling in Function Space}
\label{sec:method}

In this section, we propose merging steps (\textbf{D}) and (\textbf{M}) discussed above, iteratively denoising and finding the \textit{input} to the denoiser such that measurement consistency holds. We demonstrate theoretically that this scheme does not fail in the same Poisson setting of the previous section. Subsequently, we provide insights and a procedure on how to select the number of optimization steps for the input at each sampling iteration, and present the FunBCS algorithm. To motivate input space optimization, we note the following fact about the denoiser in the setting of \cref{sec:analysis}: 
\begin{proposition}[Denoiser nullspace component]
  \label{prop:exact_denoiser}
    Suppose $\bm{A}\bm{a}_+ \neq \bm{A}\bm{a}_-$ and $\bm{P}_{\mathrm{ker} \bm{A}}\bm{a}_+ \neq \bm{P}_{\mathrm{ker} \bm{A}}\bm{a}_-$  hold. Let $\bm{d} = \bm{a}_+ - \bm{a}_-$ be the difference between the two prior modes, and $\bm{v}$ be a fixed denoiser input. We then have: 
    \begin{equation}
    \begin{aligned}
             &D^*(\bm{v}, t_k) = \bm{a}_-  + k_{t_k}(\bm{v})\bm{d}, ~~~
            & \bm{P}_{\mathrm{ker} \bm{A}}D^*(\bm{v}, t_k) =  \bm{P}_{{\mathrm{ker} \bm{A}}}  \bm{a}_- + k_{t_k}(\bm{v})\bm{P}_{{\mathrm{ker} \bm{A}}} \bm{d}\;. 
    \end{aligned}
    \end{equation}
\end{proposition}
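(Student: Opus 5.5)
The plan is to compute the exact denoiser $D^*(\bm{v},t_k)=\mathbb{E}[\bm{a}_0\mid \bm{a}_{t_k}=\bm{v}]$ directly from the two-atom prior $\pi(\bm{a}_0)=\tfrac12\delta_{\bm{a}_-}+\tfrac12\delta_{\bm{a}_+}$. Then I rewrite the resulting convex combination in terms of $\bm{d}=\bm{a}_+-\bm{a}_-$ and finally apply the linear map $\bm{P}_{\mathrm{ker}\bm{A}}$.

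\textbf{Step 1: the conditional law of $\bm{a}_0$.} Under the VE forward process, \cref{eq:ve-forward-discrete} telescopes to $\bm{a}_{t_k}=\bm{a}_0+\bm{\xi}$. Here $\bm{\xi}\sim\mathcal{N}(\bm{0},(\sigma_{t_k}^2-\sigma_{0}^2)\bm{K})$, or $\mathcal{N}(\bm 0,\sigma_{t_k}^2\bm K)$ under the paper's convention, and $\bm{\xi}$ is independent of $\bm{a}_0$. Given $\bm{a}_0$, the law of $\bm{a}_{t_k}$ therefore has a Gaussian density $\phi(\bm{v}-\bm{a}_0)$. I will assume $\bm{K}$ is positive definite so that this density exists.

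Because $\bm{a}_0$ takes only two values, the joint law is a mixture of two Gaussians. A regular conditional probability therefore exists and is given explicitly by Bayes' rule:
\[
k_{t_k}(\bm{v})=\frac{\phi(\bm{v}-\bm{a}_+)}{\phi(\bm{v}-\bm{a}_+)+\phi(\bm{v}-\bm{a}_-)},\qquad \mathbb{P}(\bm{a}_0=\bm{a}_-\mid \bm{a}_{t_k}=\bm{v})=1-k_{t_k}(\bm{v}).
\]
This formula is well defined for every $\bm{v}$ because the denominator is strictly positive. It is also a version of the conditional probability, which one checks by integrating against test sets. This settles the well-definedness claim invoked in \cref{corollary:ddis-fail}.

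\textbf{Step 2: the denoiser identity.} The conditional expectation is the weighted average of the two atoms:
\[
D^*(\bm{v},t_k)=k_{t_k}(\bm{v})\bm{a}_+ + \bigl(1-k_{t_k}(\bm{v})\bigr)\bm{a}_-.
\]
Rearranging gives $D^*(\bm{v},t_k)=\bm{a}_-+k_{t_k}(\bm{v})\bm{d}$.

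Tweedie's form in \cref{eq:tweedie-denoiser} is just another expression of this conditional mean, so no separate computation is needed. As a consistency check, one can confirm that $\bm{v}+\sigma_{t_k}^2\bm{K}\nabla\log p_{t_k}(\bm{v})$ reproduces the same mixture average, since $\bm{K}\nabla\phi(\bm{v}-\bm{a})=-\phi(\bm{v}-\bm{a})(\bm{v}-\bm{a})/\sigma_{t_k}^2$.

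\textbf{Step 3: projecting onto the nullspace.} Applying the linear operator $\bm{P}_{\mathrm{ker}\bm{A}}$ to the identity from Step 2 gives the second claimed equality immediately.

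The hypotheses $\bm{A}\bm{a}_+\neq\bm{A}\bm{a}_-$ and $\bm{P}_{\mathrm{ker}\bm{A}}\bm{d}\neq\bm{0}$ are not needed for the identities themselves. Their role is to make the statement informative: the nullspace component of the denoiser output genuinely moves with $k_{t_k}(\bm{v})$.

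\textbf{Main obstacle.} The only delicate point is measure-theoretic bookkeeping, namely justifying the regular conditional probability and the Bayes formula for a prior that is a Dirac mixture rather than a density. I would handle this by computing with the Gaussian likelihood in $\bm{v}$ rather than with a density in $\bm{a}_0$. The remaining steps are routine algebra.
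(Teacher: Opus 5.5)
Your proposal is correct and follows essentially the same route as the paper. \cref{lemma:two-point-denoiser} likewise defines $k_{t_k}$ as the Bayes ratio of the two Gaussian likelihoods, checks that it is a version of the conditional probability by integrating against Borel sets, and writes $D^*$ as the two-atom average. The paper then gets the projected identity from the orthogonal decomposition of $\bm{d}$, which is just your linearity of $\bm{P}_{\ker\bm{A}}$ written out, and you are right that the two hypotheses play no role in the identities themselves.
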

Per \cref{prop:exact_denoiser}, the unobserved component of $D^*(\cdot, t_k)$ can be changed by updating $k_t(\bm{v})$; since $k_t(\cdot)$ is a function of the denoiser input, it suffices to optimize in the input space of the denoiser. Thus, to enable input space optimization, at every $t_k$, we propose solving: 
\begin{equation}
    \arg \min_{\bm{v}} \|\mathcal{O}_h \circ \mathcal{G}_{h, \bm{\phi}} \circ D_{\bm{\theta}}(\bm{v}, t_k) - \bm{u}_{\mathrm{obs}}\|_2^2\;, \qquad \mathcal{A}_h = \mathcal{O}_h \circ \mathcal{G}_{h, \bm{\phi}}\;, 
    \label{eq:backward_consistency}
\end{equation}
for a neural surrogate $\mathcal{G}_{h, \bm{\phi}}$, since $\mathcal{G}_h$ may be difficult to differentiate. Using  superscript ($B$) for FunBCS, consider the gradient flow on $\bm{v}$ with respect to the objective in \cref{eq:backward_consistency} with $\gamma > 0$ and optimization time $s \geq 0$:  
\begin{equation}
    \frac{d\bm{v}(s)}{ds}
    =-\frac{1}{2\gamma^2}\left.\nabla_{\bm{v}}
       \|\mathcal{O}_h \circ \mathcal{G}_{h, \bm{\phi}} \circ D_{\bm{\theta}}(\bm{v}, t_k)-\bm{u}_{\mathrm{obs}}\|^2\right|_{\bm{v} = \bm{v}(s)},
    \qquad \bm{v}(0)=a_{t_k}^{(B)}\;.
    \label{eq:sitcom-input-flow}
\end{equation}
\begin{remark}
After discretizing \cref{eq:sitcom-input-flow}, this amounts to iteratively denoising and optimizing the input, thus merging steps (\textbf{D}) and (\textbf{M}). Under the exact denoiser, the solution of \cref{eq:sitcom-input-flow} exists and is unique per \cref{lemma:sitcom-existence} in  \cref{sec:lemmas}, and the probability $k_{t_k}(\bm{v}(s))$ is monotonically increasing in $s$ per \cref{lemma:sitcom-scalar-flow} in \cref{sec:lemmas}.  Additionally, input space optimization is equivalent to \textit{preconditioned} output space optimization, with the preconditioner (which we further discuss in \cref{sec:preconditioning}) \[\nabla_{\bm{v}}D^*(\bm{v}, t_k)\nabla_{\bm{v}}D^*(\bm{v}, t_k)^T|_{\bm{v} = \bm{v}(s)}.\] 
\end{remark}

\begin{algorithm}[t]
\caption{Function Space Backward-Consistent Sampling (FunBCS)}
\label{alg:sampler}
\begin{algorithmic}[1]
\REQUIRE observations $\bm{u}_{\mathrm{obs}}$; observation operator $\mathcal{O}$; denoiser $D_{\bm{\theta}}$; neural operator surrogate $\mathcal{G}_{h,\phi}$; diffusion schedule $\{\sigma_{t_k}\}$; optimization step function $F(t_k)$; Adam step size $\alpha$.
\STATE $\bm{a}_{t_N}^{(B)} \leftarrow \sigma_{t_N}\,\varepsilon_N$, \quad $\varepsilon_N \sim \mathcal{N}(0, \bm{K})$ \hfill $\triangleright$ initial latent drawn from the training GRF
\FOR{$k = N , \ldots, 1$}
  \STATE $\bm{v}_{0} \leftarrow \bm{a}^{(B)}_{t_k}$ \hfill $\triangleright$ warm start at the current noisy state
  \FOR{$s = 0, \ldots, F(t_k)-1$}
    \STATE $\mathcal{J} \leftarrow \tfrac{1}{2\gamma^2}\lVert\mathcal{O}_h \circ \mathcal{G}_{h,\phi}(D_{\bm{\theta}}(\bm{v}_s, \sigma_{t_k}))- \bm{u}_{\mathrm{obs}}\rVert^2_2$ \hfill $\triangleright$ compute denoiser output (\textbf{D})  
    \STATE $\bm{v}_{s + 1} \leftarrow \mathrm{AdamStep}_\alpha(\bm{v}_s,\, \nabla_{\bm{v_s}} \mathcal{J})$ \hfill $\triangleright$ enforce measurement consistency (\textbf{M}) 
  \ENDFOR
  \STATE $\bm{a}_{{t_k}-1}^{(B)} \leftarrow D_{\bm{\theta}}(\bm{v}_{F(t_k)}, \sigma_{t_k}) + \sigma_{t_{k-1}}\,\bm{\varepsilon}_K$, \quad $\bm{\varepsilon}_K\sim \mathcal{N}(0, \bm{K})$ \hfill $\triangleright$ renoise
\ENDFOR
\RETURN $\bm{a}_{t_0}^{(B)}$
\end{algorithmic}
\end{algorithm}

Solving \cref{eq:backward_consistency} with \cref{eq:sitcom-input-flow} and renoising yields the update rule at a diffusion time $t_k$ and optimization time $s$, for $\bm{\varepsilon}_k^{(B)}$ drawn independently: 
\begin{equation}
    \bm{a}_{t_{k-1}}^{(B)}=D_{\bm{\theta}}(\bm{v}(s),t_k)+\sigma_{t_{k-1}}\bm{\varepsilon}_k^{(B)},
      \qquad \bm{\varepsilon}_k^{(B)}\sim\mathcal N(\bm{0},\bm{K})\;.
      \label{eq:sitcom-update-rule}
\end{equation}

\subsection{Analysis of Input Space Optimization}
\label{sec:analysis_of_input_space}

To analyze input space optimization, we revisit the setting of \cref{sec:analysis}, where we assume access to the exact operator $\bm{G}$. We begin by analyzing how the recurrence in \cref{eq:sitcom-update-rule} changes the nullspace error: 
\begin{proposition}[Input space optimization reduces nullspace error better]
\label{prop:sitcom_ker_contract}
Under the same assumptions as \cref{prop:ddis_fail_prelim}, we have that
$\mathbb{E}[\bm{a}_{t_k - 1}^{(B)} | \bm{a}_{t_k}^{(B)}, \bm{u}_{\mathrm{obs}}] = D^*(\bm{v}(s), t_k)$ and for any $s > 0$: 
\begin{equation}
      \| \bm{P}_{\mathrm{ker} \bm{A}}(D^*(\bm{v}(s), t_k) - \bm{a}_+)\|_2 \leq \| \bm{P}_{\mathrm{ker} \bm{A}}(D^*(\bm{a}_{t_k}^{(B)}, t_k) - \bm{a}_+)\|_2\;.
\end{equation}
Additionally, there exists an $s^* > 0$ such that $s \geq s^*$ yields:
\begin{equation}
    \| \bm{P}_{\mathrm{ker} \bm{A}}(D^*(\bm{v}(s), t_k) - \bm{a}_+)\|_2 \leq \| \bm{P}_{\mathrm{ker} \bm{A}}(\bar{\bm{a}}_{D, t_k - 1} - \bm{a}_+)\|_2\;.
\end{equation}
\end{proposition}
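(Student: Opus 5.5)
The plan is to reduce every quantity to the scalar posterior weight $k_{t_k}(\cdot)$ through \cref{prop:exact_denoiser}, and then use the monotonicity of that weight along the input flow.

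\textbf{Step 1: the conditional mean.} By \cref{eq:sitcom-update-rule} with the exact denoiser, $\bm{a}_{t_{k-1}}^{(B)} = D^*(\bm{v}(s),t_k) + \sigma_{t_{k-1}}\bm{\varepsilon}_k^{(B)}$. Given $\bm{a}_{t_k}^{(B)}$ and $\bm{u}_{\mathrm{obs}}$, the flow \cref{eq:sitcom-input-flow} is deterministic, and it has a unique solution by \cref{lemma:sitcom-existence}. So $\bm{v}(s)$ is a measurable function of the conditioning variables. The noise $\bm{\varepsilon}_k^{(B)}$ is independent with mean zero, hence $\mathbb{E}[\bm{a}_{t_k-1}^{(B)}\mid \bm{a}_{t_k}^{(B)},\bm{u}_{\mathrm{obs}}] = D^*(\bm{v}(s),t_k)$.

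\textbf{Step 2: reduce the nullspace error to a scalar.} \Cref{prop:exact_denoiser} gives $\bm{P}_{\ker\bm A}(D^*(\bm v,t_k)-\bm a_+) = (k_{t_k}(\bm v)-1)\bm{P}_{\ker\bm A}\bm d$, since $\bm a_- - \bm a_+ = -\bm d$. Therefore
\[
\|\bm{P}_{\ker\bm A}(D^*(\bm v,t_k)-\bm a_+)\|_2=(1-k_{t_k}(\bm v))\,\|\bm{P}_{\ker\bm A}\bm d\|_2,
\]
using $k\in[0,1]$ and $\bm{P}_{\ker\bm A}\bm d\neq 0$. The first inequality is then equivalent to $k_{t_k}(\bm v(s))\ge k_{t_k}(\bm v(0))$, where $\bm v(0)=\bm a_{t_k}^{(B)}$. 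This is exactly the monotonicity of $s\mapsto k_{t_k}(\bm v(s))$ from \cref{lemma:sitcom-scalar-flow}.

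\textbf{Step 3: comparison with DDIS.} I would compute the right-hand side through \cref{prop:ddis_fail_prelim}. It gives $\|\bm P_{\ker\bm A}(\bar{\bm a}_{D,t_k-1}-\bm a_+)\|_2=(1-k_{t_k}(\bm a^{(D)}_{t_k}))\|\bm P_{\ker\bm A}\bm d\|_2$. Running both samplers from the same state, $\bm a^{(D)}_{t_k}=\bm a^{(B)}_{t_k}$, the first part already gives the inequality for every $s\ge 0$, so $s^*$ can be taken arbitrarily small.

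For a stronger, state-independent reading, one would show $k_{t_k}(\bm v(s))\to 1$. The plan is as follows.
\begin{itemize}
\item Along the flow, the objective is $\|\bm A D^*(\bm v)-\bm u_{\mathrm{obs}}\|^2=(1-k)^2\|\bm A\bm d\|^2$, which follows from $\bm u_{\mathrm{obs}}=\bm A\bm a_+$ and $\bm A\bm d\neq 0$.
\item Its gradient is $-2(1-k)\|\bm A\bm d\|^2\nabla_{\bm v}k$.
\item For the two-point prior, $k=\mathrm{sigmoid}(\ell(\bm v))$ with $\ell$ affine in $\bm v$: a log-likelihood ratio of Gaussians $\mathcal N(\bm a_\pm,\sigma_{t_k}^2\bm K)$. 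Hence $\nabla k=k(1-k)\bm c$ for a constant nonzero vector $\bm c=\sigma_{t_k}^{-2}\bm K^{-1}\bm d$.
\item Then $\dot\ell=\bm c^\top\dot{\bm v}\propto k(1-k)^2\|\bm c\|^2>0$. This yields a scalar ODE $\dot\ell=C\,\mathrm{sig}(\ell)(1-\mathrm{sig}(\ell))^2$, whose solutions increase to $+\infty$, so $k\to1$.
\end{itemize}
Choosing $s^*$ with $k_{t_k}(\bm v(s^*))\ge k_{t_k}(\bm a^{(D)}_{t_k})$, which is possible because $k<1$ always, and invoking monotonicity gives the claim for all $s\ge s^*$.

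\textbf{Main obstacle.} The delicate point is this last step. It requires the explicit sigmoid/affine structure of $k$, the invertibility of $\bm K$, and a check that the ODE for $\ell$ has no finite-time blow-up and diverges rather than converging to a finite limit. The latter holds because the right-hand side is positive and bounded below on $[\ell(0),M]$ for every $M$.
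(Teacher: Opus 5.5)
Your proposal is correct and follows the paper's proof essentially step for step. Like the paper, you get the conditional mean from the deterministic, unique flow plus independent zero-mean noise. You then write both nullspace errors as $(1-w)\|\bm{P}_{\ker\bm{A}}\bm{d}\|_2$, using the two-point denoiser and DDIS's kernel preservation. Finally, you obtain $s^*$ from the monotonicity of $w_S$, $w_S(s)\to 1$, and $w_D<1$.

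The only difference is cosmetic. You prove $w_S\to 1$ through the logit ODE $\dot\ell = C\,\mathrm{sig}(\ell)(1-\mathrm{sig}(\ell))^2$, whereas the paper argues by contradiction directly on $w' = Cw^2(1-w)^3$ in its convergence lemma. This is the same argument in different coordinates.
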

\begin{remark}
    As expected per our discussion in \cref{prop:exact_denoiser}, the input space optimization recurrence in \cref{eq:sitcom-update-rule} contracts nullspace error better than both the denoiser evaluated at the previous iterate and DDIS, since it can update $k_t(\bm{v})$.  We demonstrate \cref{prop:sitcom_ker_contract} empirically in \cref{sec:discont_poisson}. 
\end{remark} 

Furthermore, input space optimization \textit{can} recover the correct $\bm{a}_+$ in this PDE inverse setting: 
\begin{theorem}[Input space optimization can recover $\bm{a}_+$]
\label{thm:sitcom-success}
Consider diffusion times $t_k > t_{k-1}$. Under the same assumptions as \cref{prop:ddis_fail_prelim}, we have that:
\begin{equation}
\pi_s(\bm{a}_{t_k - 1}^{(B)} | \bm{a}_{t_k}^{(B)}, \bm{u}_{\mathrm{obs}}) = \mathcal{N}(D^*(\bm{v}(s), t_k), \sigma_{t_k - 1}^2\bm{K}) \Rightarrow \mathcal{N}(\bm{a}_+, \sigma_{t_{k - 1}}^2\bm{K}),~\text{as}~s \to \infty\:.
\label{eq:sitcom-success-eq}
\end{equation}
At the final step $t_{k - 1} = t_0$ the limiting law converges in distribution to $\delta_{\bm{a}_+}$ as $\sigma_{t_0} \to 0$.  
\end{theorem}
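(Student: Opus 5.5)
The plan is to reduce everything to the scalar flow of $k_{t_k}(\bm{v}(s))$ and then push convergence through the Gaussian renoising step.

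\textbf{Conditional law.} From \cref{eq:sitcom-update-rule} under the exact denoiser, conditional on $\bm{a}_{t_k}^{(B)}$ the flow $\bm{v}(s)$ is deterministic. By \cref{lemma:sitcom-existence} it exists and is unique, so $D^*(\bm{v}(s),t_k)$ is a fixed vector. The only randomness is the independent noise $\sigma_{t_{k-1}}\bm{\varepsilon}_k^{(B)}\sim\mathcal N(\bm 0,\sigma_{t_{k-1}}^2\bm K)$. This gives the Gaussian form of $\pi_s$ immediately.

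\textbf{Convergence of the mean.} By \cref{prop:exact_denoiser}, $D^*(\bm{v}(s),t_k)=\bm{a}_-+k_{t_k}(\bm{v}(s))\bm{d}$. It therefore suffices to show $k(s):=k_{t_k}(\bm{v}(s))\to 1$. The objective along the flow is
\[
\tfrac{1}{2\gamma^2}\|\bm{A}\bm{a}_- + k(s)\bm{A}\bm{d}-\bm{A}\bm{a}_+\|^2=\tfrac{1}{2\gamma^2}(1-k(s))^2\|\bm{A}\bm{d}\|^2,
\]
with $\bm{A}\bm{d}\neq\bm 0$ by assumption.

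The chain rule gives
\[
\tfrac{dk}{ds}=\tfrac{1}{\gamma^2}(1-k)\|\bm{A}\bm{d}\|^2\,\|\nabla_{\bm v}k(\bm v(s))\|^2 .
\]
For the two-point prior with Gaussian noise of covariance $\sigma_{t_k}^2\bm K$, $k$ is a logistic function of a linear functional of $\bm v$:
\[
k(\bm v)=\mathrm{sigmoid}\big(\langle \bm v-\tfrac{\bm a_++\bm a_-}{2},\,\bm K^{-1}\bm d\rangle/\sigma_{t_k}^2\big).
\]
Hence $\nabla_{\bm v}k=k(1-k)\bm K^{-1}\bm d/\sigma_{t_k}^2$. (If $\bm K$ is singular, I would work on its range, where $\bm d$ must lie for the conditional probability to be nondegenerate.)

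This yields the autonomous scalar ODE
\[
\dot k=c\,k^2(1-k)^3,\qquad c=\|\bm A\bm d\|^2\|\bm K^{-1}\bm d\|^2/(\gamma^2\sigma_{t_k}^4)>0,
\]
consistent with the monotonicity in \cref{lemma:sitcom-scalar-flow}. Since $k(0)\in(0,1)$ (strictly, because Gaussian likelihoods are positive), the solution stays in $(0,1)$, increases, and is bounded. Its limit $k_\infty$ must be a zero of the right-hand side in $(k(0),1]$, so $k_\infty=1$. Alternatively, one can integrate the separable ODE directly.

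\textbf{Conclusion.} This gives $D^*(\bm v(s),t_k)\to\bm a_+$. Gaussians with a fixed covariance and converging means converge weakly: the characteristic functions $\exp(i\langle\xi,\mu_s\rangle-\tfrac12\sigma^2\xi^\top\bm K\xi)$ converge pointwise, and L\'evy's continuity theorem applies. That establishes \cref{eq:sitcom-success-eq}. At $t_0$, $\mathcal N(\bm a_+,\sigma_{t_0}^2\bm K)\Rightarrow\delta_{\bm a_+}$ as $\sigma_{t_0}\to0$ by the same characteristic-function argument, or by Chebyshev: $\mathbb E\|\bm X-\bm a_+\|^2=\sigma_{t_0}^2\operatorname{tr}\bm K\to0$.

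\textbf{Main obstacle.} The hard step is ruling out a stall of $k(s)$ strictly below $1$. This needs the explicit logistic form of $k$, so that $\|\nabla k\|$ vanishes only at $k\in\{0,1\}$. The existence and uniqueness lemma alone does not give this.
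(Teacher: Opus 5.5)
Your proposal is correct and follows essentially the same route as the paper. The conditional law is Gaussian because the flow is deterministic and the renoising is independent (\cref{lemma:renoising}). The scalar ODE $\dot k = C k^2(1-k)^3$, together with the monotone, bounded, no-stall argument, forces $k(s)\to 1$ (\cref{lemma:sitcom-scalar-flow,lemma:sitcom-convergence}). L\'evy's continuity theorem then gives both limits. Your Chebyshev alternative for $\sigma_{t_0}\to 0$ and your aside about a singular $\bm K$ are harmless extras; the paper assumes $\bm K$ is positive definite.
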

Thus, for a multimodal prior and with differences between prior modes partially lying in measurement nullspace, which arises naturally in the discontinuous PDE inverse problem setting per \cref{prop:finite_diff_assumps} in \cref{proof:proof_of_finite_diff_assumps}, input space optimization can significantly outperform output optimization. We note that this happens regardless of whether the denoiser begins by preferring the wrong mode; we provide experimental validation for this on Darcy flow in \cref{sec:exp}. The resulting algorithm not only provides a mechanism to change the unobserved components of the denoiser prediction (as we theoretically support), but also reduces the need for a large number of sampling steps as we show in our experimental results in \cref{sec:exp}.

\subsection{Allocating Input Space Optimization Steps Across Noise Levels}
\label{sec:efficiency}

In this subsection, we address the question of how many measurement consistency steps we should take per one noise level (or sampling step). Let $F(t_k)$ denote the number of gradient flow steps taken at noise level $t_k$, so that the sampler takes $M = \sum_{k=1}^{N} F(t_k)$ steps in total. Every such step differentiates through the denoiser, so this allocation sets the cost of the sampler. 

To make our sampler more efficient, we ask how a fixed budget of steps should be distributed across noise levels, and we answer in the setting of \cref{sec:analysis}. We begin with the case in which the sampler leans towards the wrong prior mode and a noise level receives no optimization steps at all. In particular, for $t_{k + 1} > t_{k}$, we let 
\[k_{t_{k+1}}(\bm{a}_{t_{k+1}}^{(B)})
    \leq \frac12-\delta, ~\text{for}~ \delta\in(0,1/2), ~\text{and}~ \bm{a}_{t_k}^{(B)}
    =D^*(\bm{a}_{t_{k+1}}^{(B)},t_{k+1})
    +\sigma_{t_k}\bm{\varepsilon}_k^{(B)}.\] We can then characterize the number of gradient flow steps necessary to escape to the correct mode in the subsequent diffusion step $t_k$ as follows:  
\begin{proposition}[Time explosion with no optimization steps]
\label{prop:wrong-mode-slowdown}
Consider the setting of \cref{sec:efficiency} with $\bm{A}\bm{a}_+ \neq \bm{A}\bm{a}_-$ and $\bm{P}_{\mathrm{ker} \bm{A}}\bm{a}_+ \neq \bm{P}_{\mathrm{ker} \bm{A}}\bm{a}_-$. Let $\bm{v}_{t_k}(s)$ solve \cref{eq:sitcom-input-flow} and define the hitting time $\tau_{t_k}:=\inf\left\{s\geq0:
    k_{t_k}(\bm{v}_{t_k}(s))\geq\frac12\right\}$. Then, for constants $Q, C > 0$, we have that: 
\begin{equation}
    \mathbb{P}\!\left(
        \tau_{t_k}\geq
        \frac{\sigma_{t_k}^4}{C}
        \left[
            \exp\!\left(\frac{Q\delta}{2\sigma_{t_k}^2}\right)-1
        \right]
        \,~~\middle|~~\,
        \bm{a}_{t_{k+1}}^{(B)}
    \right)
    \geq
    1-\exp\!\left(-\frac{Q\delta^2}{8\sigma_{t_k}^2}\right)\;.
    \label{eq:skipped-optimization-hitting-time}
\end{equation}
\end{proposition}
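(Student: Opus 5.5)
The plan is to reduce the flow \cref{eq:sitcom-input-flow} to a scalar ODE for the log-odds of the posterior mode probability, and then control that ODE's initial condition with a Gaussian tail bound. Throughout, write $\sigma := \sigma_{t_k}$, $\bm{d} = \bm{a}_+ - \bm{a}_-$ and $\bm{m} = \tfrac12(\bm{a}_+ + \bm{a}_-)$. I assume $\bm{K}$ is positive definite, so that $\bm{d}^T\bm{K}^{-1}\bm{d} > 0$.

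\textbf{Step 1: explicit form of $k_{t_k}$.} Under the two-atom prior and the forward law $\bm{a}_{t_k} \mid \bm{a}_0 \sim \mathcal{N}(\bm{a}_0, \sigma^2\bm{K})$, Bayes' rule gives
\[
k_{t_k}(\bm{v}) = \frac{1}{1+e^{-\ell(\bm{v})}}, \qquad \ell(\bm{v}) = \frac{\bm{d}^T\bm{K}^{-1}(\bm{v}-\bm{m})}{\sigma^2}.
\]
So $\ell$ is affine in $\bm{v}$, with $\nabla_{\bm{v}} k = k(1-k)\,\bm{K}^{-1}\bm{d}/\sigma^2$. Since $\bm{u}_{\mathrm{obs}} = \bm{A}\bm{a}_+$, \cref{prop:exact_denoiser} gives
\[
\bm{A}D^*(\bm{v},t_k) - \bm{u}_{\mathrm{obs}} = -(1-k)\bm{A}\bm{d},
\]
so the objective equals $(1-k)^2\|\bm{A}\bm{d}\|^2$. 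Substituting into \cref{eq:sitcom-input-flow} and applying $\bm{d}^T\bm{K}^{-1}/\sigma^2$ gives the autonomous scalar ODE
\[
\frac{d\ell}{ds} = \frac{C}{\sigma^4}\, k(1-k)^2, \qquad C := \frac{\|\bm{A}\bm{d}\|^2\, \bm{d}^T\bm{K}^{-1}\bm{d}}{\gamma^2}.
\]
Here $C > 0$ because $\bm{A}\bm{a}_+ \neq \bm{A}\bm{a}_-$. This is consistent with \cref{lemma:sitcom-scalar-flow}, which gives monotonicity in $s$.

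\textbf{Step 2: lower bound on the hitting time.} Use $k(1-k)^2 \le k \le e^{\ell}$. This gives $\frac{d}{ds}e^{-\ell} \ge -C/\sigma^4$, hence
\[
e^{-\ell(s)} \ge e^{-\ell(0)} - Cs/\sigma^4.
\]
The event $k_{t_k}(\bm{v}(s)) \ge \tfrac12$ is the event $\ell(s) \ge 0$. Therefore
\[
\tau_{t_k} \ge \frac{\sigma^4}{C}\left(e^{-\ell(0)} - 1\right).
\]
It remains to show $-\ell(0) \ge Q\delta/(2\sigma^2)$ with the stated probability, where $Q := \bm{d}^T\bm{K}^{-1}\bm{d}$.

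\textbf{Step 3: the initial log-odds.} The initial point is
\[
\bm{v}(0) = \bm{a}_{t_k}^{(B)} = \bm{a}_- + k'\bm{d} + \sigma\bm{\varepsilon}, \qquad k' = k_{t_{k+1}}(\bm{a}_{t_{k+1}}^{(B)}) \le \tfrac12 - \delta.
\]
Substituting into $\ell$,
\[
\ell(0) = \frac{(k'-\tfrac12)Q + \sigma Z}{\sigma^2}, \qquad Z := \bm{d}^T\bm{K}^{-1}\bm{\varepsilon}.
\]
Conditionally on $\bm{a}_{t_{k+1}}^{(B)}$, we have $Z \sim \mathcal{N}(0, Q)$. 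On the event $Z \le \delta Q/(2\sigma)$ we get $\ell(0) \le -\delta Q/(2\sigma^2)$. The standard Gaussian tail bound $\mathbb{P}(Z > x) \le e^{-x^2/(2Q)}$ with $x = \delta Q/(2\sigma)$ bounds the complementary probability by $\exp(-Q\delta^2/(8\sigma^2))$. Combining this with Step 2 yields \cref{eq:skipped-optimization-hitting-time}, with these explicit $Q$ and $C$.

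The main obstacle is Step 1. It needs the exact closed form of the two-mode posterior weight under the $\bm{K}$-covariance noise, with the same $\sigma_{t_k}$ appearing both in the renoising and in the denoiser. It also needs the chain-rule bookkeeping, so that the drift of $\ell$ factors cleanly as $k(1-k)^2$ times a constant. Everything after that is a one-line comparison ODE and a Gaussian tail estimate. Existence and uniqueness of $\bm{v}(s)$ are taken from \cref{lemma:sitcom-existence}.
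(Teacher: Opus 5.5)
Your argument is the paper's proof written in log-odds coordinates. You use the same event on $\bm{d}^\top\bm{K}^{-1}\bm{\varepsilon}_k^{(B)}$ and the same Chernoff bound. Your comparison $\frac{d}{ds}e^{-\ell}\ge -C/\sigma_{t_k}^4$ reproduces exactly the paper's estimate $\frac{\sigma_{t_k}^4}{C}\bigl(1/w(0)-2\bigr)$, because $1/w(0)-2=e^{-\ell(0)}-1$. The comparison is marginally more economical than the paper's exact integration of $\int_{w(0)}^{1/2}u^{-2}(1-u)^{-3}\,du$: it does not need \cref{lemma:sitcom-convergence} to guarantee $\tau_{t_k}<\infty$ and $w(\tau_{t_k})=\tfrac12$ first.

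There is, however, an error in the constant from Step 1.
\begin{itemize}
\item \textbf{What goes wrong.} The flow in \cref{eq:sitcom-input-flow} is a Euclidean gradient flow, so $\bm{v}'(s)=\frac{\|\bm{A}\bm{d}\|_2^2}{\gamma^2\sigma_{t_k}^2}\,k(1-k)^2\,\bm{K}^{-1}\bm{d}$. Pairing this with $\nabla_{\bm{v}}\ell=\bm{K}^{-1}\bm{d}/\sigma_{t_k}^2$ gives $\bm{d}^\top\bm{K}^{-2}\bm{d}=\|\bm{K}^{-1}\bm{d}\|_2^2$, not $\bm{d}^\top\bm{K}^{-1}\bm{d}$. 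Your value would only arise for a flow preconditioned by $\bm{K}$.
\item \textbf{The correct constant.} It is $C=\|\bm{A}\bm{d}\|_2^2\|\bm{K}^{-1}\bm{d}\|_2^2/\gamma^2$, exactly as in \cref{lemma:sitcom-scalar-flow}. So your asserted consistency with that lemma is false as written.
\item \textbf{Why it is not cosmetic.} The two constants are not ordered; for $\bm{K}=\epsilon\bm{I}$ yours is smaller by the factor $\epsilon$. The exact integral also gives the upper bound $\tau_{t_k}\le 8\sigma_{t_k}^4\bigl(e^{-\ell(0)}-1\bigr)/C$ with the correct $C$ whenever $\ell(0)<0$. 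The law of $\ell(0)$ depends only on $Q/\sigma_{t_k}^2$ and $k_{t_{k+1}}(\bm{a}_{t_{k+1}}^{(B)})$. Hence, at fixed $Q/\sigma_{t_k}^2$, the event in your probability has vanishing probability as $\epsilon\to0$. Your inequality ``with these explicit $Q$ and $C$'' therefore fails in that regime.
\end{itemize}
The proposition only asserts that some $C>0$ works, so substituting the correct constant repairs the proof with no other change.
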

\begin{remark}
     For fixed $\delta>0$, the lower bound on the hitting time tends to $\infty$ exponentially, with probability tending to 1 as $\sigma_{t_k} \to 0$. That is, a lean towards the wrong mode that is not corrected with input space optimization steps can result in an exponentially larger cost to correct in later steps. We conclude that we should retain a positive number of optimization steps to avoid this critical scenario. 
\end{remark}

We next consider at which noise level(s) to allocate most of the gradient flow steps in our method. To address this, we allow the noise standard deviation $\sigma>0$ to vary continuously and consequently use $\sigma$, in place of $t_k$, to index. We then fix a single denoiser input leaning to the wrong mode, and see where the minimal time to escape to the correct mode is. Formally, we fix an input $\bm{v}_*\in\mathbb{R}^n$ satisfying $k_{\sigma}(\bm{v}_*) < \frac{1}{2}$, and compute the fastest hitting time across $\sigma$: 
\begin{proposition}[Selection window]
\label{prop:interior-selection-window}
Under the assumptions  of \cref{prop:wrong-mode-slowdown}, let $w_\sigma(s)$ denote $k_\sigma(\bm{v}(s))$ for $\bm{v}(s)$ a solution to \cref{eq:sitcom-input-flow} at noise level $\sigma$ with initialization $\bm{v}_*$, and define $\tau(\sigma):=\inf\left\{s\geq0:
    w_\sigma(s)\geq\frac12\right\}$. Then, there exist $\sigma_{\mathrm{low}},\sigma_{\mathrm{high}}\in(0,\infty)$
and $\sigma^\star\in[\sigma_{\mathrm{low}},\sigma_{\mathrm{high}}]$ such that $ \tau(\sigma^\star)
    =
    \min_{\sigma>0}\tau(\sigma)$. 
\end{proposition}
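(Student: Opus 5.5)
The plan is to reduce the input-space gradient flow \cref{eq:sitcom-input-flow} to a scalar ODE for the log-odds of $k_\sigma(\bm{v}(s))$. This gives $\tau(\sigma)$ in closed form as an integral. I would then show that $\tau$ is continuous in $\sigma$ and that $\tau(\sigma)\to\infty$ both as $\sigma\to 0$ and as $\sigma\to\infty$. A continuous function with these limits attains its minimum on some compact interval $[\sigma_{\mathrm{low}},\sigma_{\mathrm{high}}]\subset(0,\infty)$, which is the claim.

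\textbf{Step 1 (explicit posterior weight).} For the prior $\tfrac12\delta_{\bm{a}_-}+\tfrac12\delta_{\bm{a}_+}$ with Gaussian noise $\mathcal{N}(\bm{0},\sigma^2\bm{K})$, Bayes' rule gives $k_\sigma(\bm{v})=\mathrm{logistic}(\ell_\sigma(\bm{v}))$. Here
\[
\ell_\sigma(\bm{v})=\frac{\bm{d}^T\bm{K}^{-1}\bm{v}-c}{\sigma^2},\qquad c=\tfrac12\left(\bm{a}_+^T\bm{K}^{-1}\bm{a}_+-\bm{a}_-^T\bm{K}^{-1}\bm{a}_-\right).
\]
Set $b:=c-\bm{d}^T\bm{K}^{-1}\bm{v}_*$. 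The condition $k_\sigma(\bm{v}_*)<\tfrac12$ is equivalent to $b>0$, and this does not depend on $\sigma$, so the hypothesis is consistent across all noise levels. At initialization, $\ell_\sigma(\bm{v}_*)=-b/\sigma^2$.

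\textbf{Step 2 (scalar reduction).} By \cref{prop:exact_denoiser} and noiselessness, $\bm{u}_{\mathrm{obs}}=\bm{A}\bm{a}_+$. The objective then becomes $(1-k)^2\|\bm{A}\bm{d}\|^2$, and $\nabla_{\bm{v}}k=k(1-k)\bm{K}^{-1}\bm{d}/\sigma^2$. The flow therefore moves $\bm{v}$ along $\bm{K}^{-1}\bm{d}$, and $\ell(s)=\ell_\sigma(\bm{v}(s))$ satisfies
\[
\frac{d\ell}{ds}=\frac{\|\bm{A}\bm{d}\|^2\,\bm{d}^T\bm{K}^{-1}\bm{d}}{\gamma^2\sigma^4}\,k(1-k)^2,\qquad k=\mathrm{logistic}(\ell).
\]
This is consistent with \cref{lemma:sitcom-scalar-flow}, and existence and uniqueness follow from \cref{lemma:sitcom-existence}. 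The right-hand side is positive because $\bm{A}\bm{d}\neq\bm{0}$. Separating variables gives
\[
\tau(\sigma)=\frac{\gamma^2\sigma^4}{\|\bm{A}\bm{d}\|^2\,\bm{d}^T\bm{K}^{-1}\bm{d}}\,H\!\left(\frac{b}{\sigma^2}\right),\qquad H(y)=\int_0^y g(x)\,dx,
\]
where $g(x)=1/\big(k(1-k)^2\big)$ evaluated at $\ell=-x$. The function $g$ is continuous and positive, with $g(0)=8$ and $g(x)\ge e^{x}$ for $x\ge 0$. From this formula, $\tau$ is finite and continuous on $(0,\infty)$.

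\textbf{Step 3 (limits).} As $\sigma\to0$, $\tau(\sigma)\gtrsim\sigma^4\big(e^{b/\sigma^2}-1\big)\to\infty$, mirroring \cref{prop:wrong-mode-slowdown}. As $\sigma\to\infty$, $y=b/\sigma^2\to0$, so $H(y)\sim 8y$ and $\tau(\sigma)\sim C\sigma^4\cdot 8b/\sigma^2\to\infty$. Fix any reference value $\tau(1)$. Choose $\sigma_{\mathrm{low}}$ and $\sigma_{\mathrm{high}}$ so that $\tau>\tau(1)$ outside $[\sigma_{\mathrm{low}},\sigma_{\mathrm{high}}]\ni 1$. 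Continuity on this compact interval then yields a minimizer $\sigma^\star$ in it, and $\sigma^\star$ is a global minimizer over $\sigma>0$.

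The main obstacle is Step 2: verifying carefully that the flow stays on the line through $\bm{v}_*$ in direction $\bm{K}^{-1}\bm{d}$, and getting the exact $\sigma^{-4}$ prefactor. The large-$\sigma$ blow-up hinges on the $\sigma^4$ factor beating the $1/\sigma^2$ shrinkage of the initial log-odds gap.
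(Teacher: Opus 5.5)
Your proposal is correct and follows essentially the same route as the paper. Both reduce the flow to a one-dimensional ODE: you use the log-odds, the paper uses $w_\sigma$ directly, and these give the same integral after the substitution $u=\mathrm{logistic}(-x)$. Both then show that $\tau(\sigma)=\frac{\sigma^4}{C}\cdot(\text{integral})$ is continuous and blows up as $\sigma\downarrow0$ and as $\sigma\to\infty$, and conclude by the extreme value theorem on a compact interval.

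One minor slip: because \cref{eq:sitcom-input-flow} is a Euclidean gradient flow, the prefactor is $\|\bm{A}\bm{d}\|_2^2\|\bm{K}^{-1}\bm{d}\|_2^2/(\gamma^2\sigma^4)$, as in \cref{lemma:sitcom-scalar-flow}, not $\|\bm{A}\bm{d}\|_2^2\,\bm{d}^\top\bm{K}^{-1}\bm{d}/(\gamma^2\sigma^4)$. Your argument only uses the positivity of this constant and its $\sigma^{-4}$ scaling, so the conclusion is unaffected.
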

\begin{remark} At very high noise levels, the denoiser is  undecided between the two modes, but its output responds only weakly to its input, so the gradient flow moves prohibitively slowly. At very low noise levels, the denoiser responds strongly to input changes, but it has already committed to the wrong mode. The correct prior mode is therefore selected most cheaply at intermediate noise levels, inside an interval $[\sigma_{\mathrm{low}}, \sigma_{\mathrm{high}}]$ that we call the \emph{selection window}. By our experiments in \cref{sec:exp} with our scheduler described in \cref{app:schedule}, we find this generalizes across PDEs.\end{remark}

Together, the two propositions constrain how a budget of optimization steps should be distributed. By \cref{prop:wrong-mode-slowdown}, every noise level must retain a positive number of steps, so that a lean towards the wrong mode is never carried uncorrected into the lower noise levels. By \cref{prop:interior-selection-window}, the remaining steps are best spent inside the selection window. We therefore treat the total number of optimization steps $M$ as a budget, which a scheduling function allocates across noise levels by choosing $F(t_k)$, which keeps a floor of steps at every level and concentrates the rest in a computed selection window, with further details in \cref{app:schedule}. At a fixed budget, this moves steps to where they matter; at a smaller budget, it removes the steps that matter least. 

\noindent
\textbf{Algorithm}. Our sampler is given in \Cref{alg:sampler}. \Cref{eq:sitcom-update-rule} yields backward-consistent sampling, but not in the finite-dimensional regime as in  \citet{alkhouri2025sitcom}. This means that the reconstruction $\bm{a}_0^{(t_k)}$ before renoising is \textit{exactly} the output of the denoiser, rather than approximately. %

\section{Experiments}
\label{sec:exp}

\textbf{Setup.} We evaluate on four PDE inverse problems, each at a resolution of $128^2$ points: \emph{Poisson}, \emph{Helmholtz}, \emph{Navier--Stokes} (NS) and \emph{Darcy flow}, each observed at $500$ randomly placed points of the solution field ($3\%$ of the grid) without noise following \cite{lin2026ddis}. We report the relative $\ell_2$ error of the recovered coefficient field in percent and the spectral energy error $\es$ averaged across 100 inference samples. FunBCS uses $N = 10$ noise levels (sampling steps) with two scheduling schemes, Accuracy mode (AC) and Efficiency mode (EF). See \cref{app:schedule} for details. Other experimental details are given in \cref{app:details}.

\begin{table}[t]
\caption{Recovery of physical coefficient fields $\bm{a}$ with relative $\ell_2$ error and spectral energy error $\es$ on $100$ sampled instances from $3\%$ observations. s/sample is seconds per sample on one NVIDIA A100. FunBCS-AC and FunBCS-EF denote FunBCS in accuracy mode and in efficiency mode, respectively (\cref{app:schedule}). FunBCS (uniform) is FunBCS without the scheduling function, which corresponds to the case of assigning $F(t_k) = 80$ at every noise level ($M = 800$). Errors are mean $\pm$ standard deviation over these instances. Lower is better.}
\vspace{-0.4cm}
\label{tab:main}
\centering
\small
\setlength{\tabcolsep}{3.3pt}
\def\tstd#1{{\scriptsize$\pm$#1}}
\resizebox{\linewidth}{!}{%
\begin{tabular}{l cc cc cc cc c}
\toprule
 & \multicolumn{2}{c}{Poisson} & \multicolumn{2}{c}{Helmholtz} & \multicolumn{2}{c}{Navier--Stokes} & \multicolumn{2}{c}{Darcy} & \\
\cmidrule(lr){2-3}\cmidrule(lr){4-5}\cmidrule(lr){6-7}\cmidrule(lr){8-9}
Method & rel.\ $\ell_2$ & $\es$ & rel.\ $\ell_2$ & $\es$ & rel.\ $\ell_2$ & $\es$ & err.\ rate & $\es$ & s/sample \\
\midrule
DiffusionPDE & 21.20\tstd{6.86} & 0.150\tstd{0.119} & 18.93\tstd{5.04} & 0.133\tstd{0.084} & 9.49\tstd{1.65} & \underline{0.230}\tstd{0.025} & 7.49\tstd{4.60} & 0.311\tstd{0.169} & 274.5 \\
FunDPS & 22.05\tstd{7.11} & 0.169\tstd{0.116} & 16.25\tstd{4.09} & 0.156\tstd{0.077} & 8.18\tstd{1.53} & 0.381\tstd{0.032} & 5.04\tstd{2.71} & 0.152\tstd{0.060} & \underline{14.5} \\
DDIS & 15.76\tstd{5.08} & 0.071\tstd{0.053} & 15.47\tstd{4.34} & 0.070\tstd{0.051} & 8.36\tstd{1.55} & \textbf{0.158}\tstd{0.017} & 7.48\tstd{3.89} & 0.165\tstd{0.089} & 18.0 \\
\midrule
FunBCS (uniform) & 10.51\tstd{3.38} & \underline{0.050}\tstd{0.038} & \underline{9.90}\tstd{2.71} & \textbf{0.056}\tstd{0.034} & 6.23\tstd{1.13} & 0.300\tstd{0.069} & 2.82\tstd{1.56} & 0.113\tstd{0.053} & 16.5 \\
FunBCS-EF & \underline{10.22}\tstd{3.33} & 0.056\tstd{0.049} & 10.04\tstd{2.95} & 0.085\tstd{0.053} & \underline{6.14}\tstd{1.11} & 0.280\tstd{0.059} & \underline{2.69}\tstd{1.50} & \underline{0.086}\tstd{0.041} & \textbf{10.4} \\
FunBCS-AC & \textbf{9.60}\tstd{3.08} & \textbf{0.047}\tstd{0.039} & \textbf{9.06}\tstd{2.51} & \underline{0.069}\tstd{0.037} & \textbf{5.89}\tstd{1.04} & 0.288\tstd{0.051} & \textbf{2.32}\tstd{1.27} & \textbf{0.080}\tstd{0.033} & 16.5 \\
\bottomrule
\end{tabular}}
\vspace{-0.3cm}
\end{table}

\noindent 
\textbf{Main reconstruction results.} Per \cref{tab:main}, our sampler attains the lowest reconstruction error on every task when compared to baselines. Relative to DDIS, the error falls by $39\%$ on Poisson, $41\%$ on Helmholtz, $30\%$ on Navier--Stokes, and $69\%$ on Darcy. Box plots of the \cref{tab:main} errors are visualized in \cref{fig:box-main} in \cref{sec:box_plots}. %
In line with our theory, the largest gain is on Darcy flow, whose piecewise-constant coefficient makes the prior sharply multimodal: the error rate drops from $7.48\%$ to $2.32\%$, a $3.2\times$ reduction, and the visualization in \cref{fig:darcy} shows that DDIS misplaces interfaces whereas our reconstructions recover the correct discontinuity locations and geometries. Qualitative visualizations of the results on all four PDEs are in \cref{fig:qual-ph,fig:qual-nd} in \cref{app:visualizations}. The spectral energy error of our method is better on Poisson, Helmholtz and Darcy, whereas DDIS retains a lower $\es$ on Navier--Stokes despite its higher $\ell_2$ error. This is because $\es$ only checks whether a reconstruction has the right amount of detail at each spatial scale, not whether that detail is in the right place. As we show in \cref{app:ns}, DDIS produces more fine-scale detail than FunBCS, which brings this amount closer to that of the ground truth, but neither method places the fine-scale detail where it appears in the true field.

\noindent 
\textbf{Impact of the proposed scheduling function.} 
At a budget of $M = 800$ steps, concentrating them in the selection window (i.e., FunBCS-AC) lowers the error when compared to the uniform allocation (i.e., FunBCS without the scheduling function in \cref{tab:main}). For example, on Poisson, the error is reduced by $9\%$. Trimming the steps outside the window, i.e., FunBCS-EF, removes $24$--$49\%$ of them, yet FunBCS-EF still outperforms every baseline on every task, running $1.4$--$2.1\times$ faster than DDIS with $27$--$64\%$ lower reconstruction error (\cref{app:sched-ablations}). We show in \cref{app:sched-ddis} that our scheduling function also improves DDIS on its efficiency, though not as much in magnitude as the gain we observed for FunBCS.

\par\addvspace{\smallskipamount}
\noindent
\begin{minipage}[t]{0.58\textwidth}
\vspace{0pt}
\textbf{Scaling with compute.} \Cref{fig:runtime} presents how the Poisson error of every method scales (\cref{fig:runtime-app} in \cref{app:details} presents the results on the other PDEs). At every time budget, FunBCS is more accurate than baselines. For example, FunBCS at $8$ seconds per sample achieves $11.8\%$ relative $\ell_2$ error on Poisson, whereas DDIS, at best, obtains $13.7\%$ error while running for $72$ seconds.
\par\medskip
\noindent
\textbf{FunBCS ability of escaping a wrong mode}. In \cref{sec:analysis} and \cref{sec:method}, we highlighted that DDIS on a discontinuous Poisson inverse problem can fail when the denoiser selects the wrong mode, while FunBCS fixes this problem. We test this mechanism on Darcy flow, where several possible discontinuity locations result in a learned multimodal prior and, as discussed in \cref{sec:analysis} and \cref{prop:finite_diff_assumps} in \cref{proof:proof_of_finite_diff_assumps}, a component of the denoiser decision lies in measurement nullspace with high probability. Results across $50$ test instances are given in \cref{fig:darcy-escape} where we move every discontinuity location of the true field $\bm{a}_+$ by five pixels to obtain a wrong field $\bm{a}_-$, resulting in $11\%$ pixel error. We say an instance \emph{escapes} when its final reconstruction assigns most of the pixels where $\bm{a}_+$ and $\bm{a}_-$ differ to the true field $\bm{a}_+$, with a lower error rate than $\bm{a}_-$. We find that when started at $\sigma = 0.5$, FunBCS escapes $\bm{a}_-$ on 94\% of the instances, whereas DDIS \textit{escapes on none}; doubling the inference budget of DDIS only increases this to 18\%. We visualize this across noise levels $\sigma$ in \cref{fig:darcy-escape}. This is precisely the behavior predicted by \cref{corollary:ddis-fail} for DDIS and \cref{thm:sitcom-success} for FunBCS: if the denoiser is wrong, DDIS can fail, and this is exacerbated towards the end of the sampling process, while in contrast FunBCS can adjust the denoiser's decision and solve this problem. %
\par
\end{minipage}\hfill%
\begin{minipage}[t]{0.38\textwidth}
\vspace{0pt}
\centering
\captionsetup{font=small, skip=3pt}   %
\includegraphics[width=\linewidth]{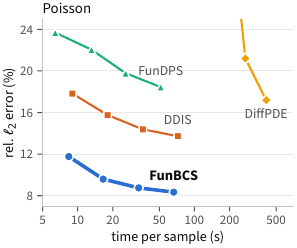}
\captionof{figure}{Poisson reconstruction error with respect to the time per sample. All methods are run using the same hardware. FunBCS and DDIS use the same diffusion prior.}
\label{fig:runtime}
\par\smallskip
\includegraphics[width=\linewidth]{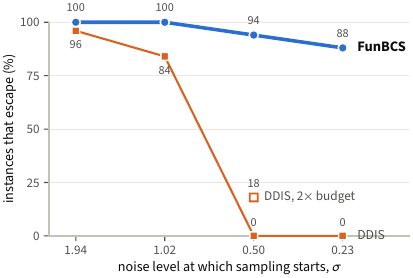}
\captionof{figure}{Share of the $50$ Darcy test instances on which each sampler escapes the wrong field $\bm{a}_-$, against the noise level $\sigma$ at which sampling starts from renoised $\bm{a}_-$. The hollow marker is DDIS with twice its inference budget.}
\label{fig:darcy-escape}
\end{minipage}
\par\addvspace{\smallskipamount}

\section{Conclusion}
\label{sec:conclusion}

We introduce FunBCS, an input space optimization approach for solving sparse PDE inverse problems. We demonstrate that existing methods, such as the state-of-the-art DDIS, perform output space optimization and consequently can fail to obtain the correct coefficient field, especially when the prior is naturally multimodal (e.g., when discontinuities are present in the coefficient field). We demonstrate that FunBCS does not suffer from this issue: across four PDE inverse problems, it significantly reduces the reconstruction error of the state-of-the-art at significantly lower runtime. We discuss limitations and future directions in \cref{sec:lims_and_future_dirs}.

\section*{Acknowledgements}

This material is based upon work supported by the National Science Foundation Graduate
Research Fellowship Program under Grant No. DGE 2241144, which supports M.H.A.  Any opinions, findings, and conclusions or recommendations expressed in this material are those of the author(s) and do not necessarily reflect the views of the National Science Foundation. J.M.M. and I.A. were supported by the U.S. Department of Energy through the Laboratory Directed Research and Development at Los Alamos National Laboratory (LANL) under projects 20260060DR and 20250048DR, respectively. LANL is operated by Triad National Security, LLC, for the National Nuclear Security Administration of U.S. Department of Energy (Contract No. 89233218CNA000001). Q.Q., C.J., and Y.J. acknowledge support from NSF CAREER grant CCF-2143904; NSF grants CCF-2212066, CCF-2212326, and IIS-2402950; ONR grant N000142512339; and Google TPU Research Awards.

\printbibliography

\newpage
\appendix
\crefalias{section}{appendix}
\crefalias{subsection}{appendix}

\newpage 

\section{Notation}
\label{sec:notation}

We use $a(\cdot)$ to denote a function; $\bm{a}$ to denote a vector, which may be a discretization of function $a(\cdot)$ at collocation points $\bm{x}_1, ..., \bm{x}_n$; and $a$ to denote a scalar. We use $\mathcal{A}$ to denote an operator, $\mathcal{A}_h$ to denote a discretized operator at  points $\bm{x}_1, ..., \bm{x}_n$; and $\bm{A}$ to denote a linear operator (i.e., a matrix). We use $\pi$ to denote a probability distribution and $p$ or $q$ to denote its density, if it exists. $\nabla_{\bm{x}} f(\bm{x})$ is taken to be either the gradient or Jacobian of $f$, depending on the codomain dimension of $f$. We use $\mathds{1}_A$ to denote the indicator on event $A$. We use $\bm{a}_{t_k}$ to denote a sample drawn from the VE diffusion reverse process at timestep $t_k$, per \cref{eq:ve-reverse-sde}, and 
$\bm{a}_0$ to denote a sample drawn from the end of this process; $\hat{\bm{a}}_0^{(t_k)} = D_{\bm{\theta}}(\bm{a}_{t_k}, t_k)$ to denote the denoiser output given sample $\bm{a}_{t_k}$; $\bm{a}_0^{(t_k)}$ to denote the reconstruction of the signal $\bm{a}$ obtained from a generic diffusion inverse solver at time $t_k$; and $\tilde{\bm{a}}_0^{(t_k)}$ to denote the reconstruction of the signal $\bm{a}$ obtained from DDIS specifically. We use $\bm{u}_{\mathrm{obs}}$ systematically to denote the measurements. $\pi \Rightarrow \pi'$ represents convergence in distribution of $\pi$ to $\pi'$. We denote by $L^2(\Omega)$ the space of square (Lebesgue) integrable functions on domain $\Omega$.

\section{Function Space Diffusion Setup}
\label{sec:function_space_diffusion}

Let $\bm{K}$ be a kernel matrix, with entries
$\bm{K}_{ij}=k(\bm{x}_i, \bm{x}_j)$, where $k : \mathbb{R}^d \times \mathbb{R}^d \to \mathbb{R}$ is a trace-class, positive definite kernel. For example, $k(\cdot, \cdot)$ can be the radial basis function (RBF) kernel \[k(\bm{x}_i, \bm{x}_j) = \exp(-||\bm{x}_i - \bm{x}_j||_2^2 / \ell)\] for a fixed length scale $\ell>0$. We assume $K$ is positive definite and write $K^{1/2}$ for its symmetric positive definite square root. Let $\sigma_t>\sigma_0 = 0$ for $t>0$ be a diffusion noise schedule, and assume that $t\mapsto\sigma_t^2$ is continuously differentiable and nondecreasing. We write $p_t$ for the density of $\bm{a}_t$ so that $\bm{a}_t | \bm{a}_0 \sim \mathcal{N}(\bm{a}_0, \sigma_t^2 \bm{K})$ and consider the variance exploding (VE) diffusion forward process at $0 < t_0<\cdots<t_N=T$: 
\begin{equation}
 \bm{a}_{t_{k+1}}=\bm{a}_{t_k}
 +\sqrt{\sigma_{t_{k+1}}^2-\sigma_{t_k}^2}\,\bm{\varepsilon}_k\;,
 \qquad k=0,\ldots,N-1\;,
 \label{eq:ve-forward-discrete_appx}
\end{equation}
where the $\bm{\varepsilon}_k\sim\mathcal N(0,K)$ are independent across $k$.

With $g(t)=\sqrt{\frac{d}{dt}\sigma_t^2}$, the continuous-time forward process is given by the VE SDE: 
\begin{equation}
 d\bm{a}_t=g(t)K^{1/2}d\bm{W}_t\;,\qquad 0\le t\le T\;,
 \label{eq:ve-forward-sde}
\end{equation}
where $W_t$ is an $n$-dimensional Brownian motion. With $t$ decreasing from $T$ to $0$, the reverse VE SDE is given by: 
\begin{equation}
 d\bm{a}_t
 =-g(t)^2\bm{K}\nabla_{\bm{a}_t} \log p_t(\bm{a}_t)\,dt
   +g(t)\bm{K}^{1/2}\,d\overline{\bm{W}}_t\;,
 \qquad 0\le t\le T\;,
 \label{eq:ve-reverse-sde}
\end{equation}
where $\overline{\bm{W}}_t$ is $n$-dimensional Brownian motion in reverse time. To obtain the appropriate reverse diffusion process, one learns $\bm{s}_{\bm{\theta}}(\bm{a}_t,t)\approx\nabla_{\bm{a}_t} \log p_t(\bm{a}_t)$ with score matching and then discretizes \cref{eq:ve-reverse-sde} at times $t_k$ via, e.g., Euler-Maruyama \citep{platen1992numsde}. Additionally, by the function-space analogue of Tweedie's formula \citep{lim2023scorefs}, which after discretization yields the standard multivariate Tweedie's formula  \citep{efron2011tweedie}, we have:
\begin{equation}
     D^*(\bm{a}_t,t) := \mathbb{E}[\bm{a}_0 | \bm{a}_t] =\bm{a}_t+\sigma_t^2 \bm{K} \nabla_{\bm{a}_t} \log p_t(\bm{a}_t)\;.
    \label{eq:tweedie-denoiser-appx}
\end{equation}
We note that this does not require $\bm{a}_0$ to have a density, making it compatible with our prior in \cref{sec:analysis}; under this prior, a density is admitted for all numerical times $t_0 < \dots < t_{N}$, yielding that \cref{eq:tweedie-denoiser-appx} is well-defined. For a learned score $\bm{s}_{\bm{\theta}}$, we denote the denoiser by $D_{\bm{\theta}}$.

\section{Extended Related Work}
\label{sec:related_work}

\subsection{Classical Inverse Solvers}

The partial differential equation (PDE) inverse problem in \cref{sec:prelim} is prevalent across many subfields of science and engineering, including geophysics \citep{virieux2010overview}, astrophysics \citep{PengKNML,GWParameterEsitmationReview}, fluid dynamics \citep{bui2013computational}, imaging \citep{arridge1999optical}, and materials \citep{avril2008overview}. Thus, many methods have been developed to solve PDE inverse problems \citep{belov2012inverse}. 

One key approach involves specifying different structured priors, such as Besov \citep{dashti2011besovpriorsbayesianinverse} or level set \citep{iglesias2016bayesian} priors, and sampling from the posterior with an MCMC method such as the preconditioned Crank-Nicolson method \citep{cotter2013mcmc}. While these methods often come with formal guarantees, they can take prohibitively long to sample from the posterior, especially under multimodality or in high dimensions \citep{guan2007small,lee2018beyond}. 

Because of the expense of these methods, exact forward models are rarely used in the inner loop, and rather domain-specific emulators are used instead. These forward models are often highly tuned and problem-specific, limiting generalization. For example, the gravitational wave community replaces its expensive numerical relativity simulations with hand-parametrized effective models, and doing so has been a decades-long endeavor \citep{tiglio2022reduced}.

Recently, multi-fidelity techniques have been leveraged to build emulators in these data-sparse environments \citep{gorodetsky2020mfnets}. Other methods include optimization-based samplers, such as randomize-then-optimize methods \citep{bardsley2014randomize}, which repeatedly compute the maximum a posteriori estimate under different random perturbations of the observations and prior, as well as ensemble approaches based on the Kalman filter \citep{iglesias2013ensemble}. These methods, however, fail to sample from the correct posterior under nonlinear forward operators and complicated priors \citep{ernst2015analysis}. 

Recently a concerted effort has been made in many communities, such as plasma physics \citep{citrin2024torax}, to make differentiable forward models that may be placed into optimization loops \citep{sapienza2024differentiable}.
We note that many other methods have been developed for the inverse problem in \cref{sec:prelim} without taking the Bayesian perspective, such as compressed sensing \citep{alberti2021infinite}, but these only produce point estimates, thus falling outside the scope of our work.

\subsection{Physics-Informed Learning}

Physics-informed learning has become a widely used approach for accelerating scientific computation \citep{karniadakis2021physics} across diverse fields such as climate modeling \citep{kashinath2021physics} and plasma physics \citep{qin2023data}. Below, we discuss key approaches towards solving forward and inverse problems. 

\subsubsection{Forward Solvers}

Representative approaches to solving the forward problem include physics-informed neural networks (PINNs) \citep{raissi2019physics}, which encode the governing PDE into the training objective, and neural operators \citep{li2021fno}, which learn mappings between function spaces and can amortize repeated PDE solves. For modeling the forward operator, neural ODEs \citep{chen2018neural} are often leveraged to replace a component of a solver with a neural network. Deterministic formulations of these methods return a single reconstruction rather than samples across the posterior. Probabilistic variants have been developed \citep{park2026generative}, but require problem-specific training and additional modeling assumptions. We study methods which can be used across several measurement operators without retraining. Thus, in our setting, we primarily use neural operators as differentiable surrogates for the forward PDE operator and as the denoiser for function space diffusion in our experiments. Separately, several approaches have been presented for physics-informed \citep{bastek2025physicsinformed} and physics-constrained \citep{utkarsh2025physicsconstrained} generation; these approaches are complementary to, but distinct from, solving PDE inverse problems with diffusion. 

\subsubsection{Priors}

Diffusion models have seen enormous success in modeling complicated, high-dimensional priors \citep{rombach2022high}. Thus, several recent works have constructed approaches to function space diffusion priors. \citet{kerrigan2023diffusion} develop diffusion models on possibly infinite-dimensional Hilbert spaces using Gaussian measures, and \citet{pidstrigach2023infinite} similarly formulate diffusion models in infinite dimensions and study well-posedness and approximation bounds. \citet{lim2023scorefs} develop Denoising Diffusion Operators (DDO), where neural operator architectures parameterize the learned score during score matching. These provide a concrete mathematical and algorithmic basis for using diffusion models to model function space priors for use in diffusion-based PDE inverse solvers. 

\subsubsection{Inverse Solvers}

Due to their ability to model complicated priors, diffusion models have been adapted to PDE inverse problems. In our work, we focus on developing diffusion solvers which are agnostic to the choice of measurement operator; that is, one learned prior can be used with many different measurement operators during inference. DiffusionPDE \citep{huang2024diffusionpde} replaces the diffusion likelihood $\pi(\bm{u}_{\mathrm{obs}} | \bm{a}_t)$ with a Gaussian centered at the denoiser estimate $\hat{\bm{a}}_0^{(t)}$, and samples from the joint distribution $(\bm{a}, \bm{u})$. Notably, DiffusionPDE trains a standard diffusion prior at fixed discretization; instead, FunDPS \citep{yao2025fundps} trains a function space prior. More recent work, such as the state-of-the-art DDIS \citep{lin2026ddis}, has decoupled the physical parameter and solution fields $(\bm{a}, \bm{u})$, specifically approximating $\bm{u}$ with a neural operator surrogate while performing prior learning and posterior sampling on $\bm{a}$. In our work, we also decouple the learning of $\bm{a}$ and $\bm{u}$. DDIS also relaxes the assumptions of DiffusionPDE and FunDPS; instead of approximating the diffusion likelihood with a Gaussian, DDIS approximates $\pi(\bm{a}_0 | \bm{a}_t)$ with an isotropic Gaussian, as discussed in \cref{sec:analysis}. Another approach, RED-DiffEq \citep{shan2026regularization}, performs regularized optimization on the measurement consistency objective, but uses diffusion to encode the regularization; however, this approach is not Bayesian and has not been compared with the strong diffusion posterior samplers discussed above. The concurrent work DiLO \citep{liu2026dilo} adopts the same decoupling as DDIS, but instead discusses the issue of evaluating the neural operator surrogate on noisy diffusion states $\bm{a}_t$ instead of clean reconstructions, which the operator was originally trained on.  We note that their solution for this issue can easily be incorporated into our input optimization procedure introduced in \cref{sec:method}, and the failure modes of the neural operator surrogate are not the focus of our work. Additionally, the concurrent work FAPS \citep{shi2026flowannealingposteriorsampling} is similar to DDIS, instead using operator flow matching \citep{shi2025stochastic} to encode the prior and approximating the denoising distribution with an anisotropic Gaussian. 

To distinguish these diffusion PDE inverse solvers, consider two key operations that occur within a reverse sampling step $t_k$ with noisy $\bm{a}_{t_k}$: (\textbf{D}) computing an estimate of the signal $a(\cdot)$ as $\hat{\bm{a}}_{0}^{(t_k)} = D_{\bm{\theta}}(\bm{a}_{t_k}, t_k)$, where $D_{\bm{\theta}}$ is the diffusion denoiser and (\textbf{M}) enforcing measurement consistency on a reconstruction $\bm{a}_0^{(t_k)} \approx \hat{\bm{a}}_0^{(t_k)}$, so that it is consistent with the observations (i.e., $\mathcal{A}(\bm{a}_0^{(t_k)}) \approx \bm{u}_{\mathrm{obs}}$). We use the term \textit{output space optimization} to describe methods where (\textbf{D}) and (\textbf{M}) are \textit{disjoint}, since this amounts to optimizing in the output space of the denoiser, while leaving the denoiser input fixed. From this perspective, DiffusionPDE \citep{huang2024diffusionpde}, FunDPS \citep{yao2025fundps}, and Decoupled Diffusion Inverse Solver (DDIS) \citep{lin2026ddis} are all output space optimization procedures. We discuss this further in \cref{sec:recurrences}.

In \cref{sec:analysis}, we demonstrate that output space approaches can fail to update unobserved components of the denoiser estimate appropriately for a PDE inverse problem. We thus use this to motivate \textit{input space optimization} in \cref{sec:method}, where steps (\textbf{D}) and (\textbf{M}) are \textit{combined}: we iteratively compute the denoiser output (\textbf{D}) and optimize the \textit{input} such that measurement consistency is satisfied (\textbf{M}). We show that this updates the unobserved components as desired. This results in \textit{backward-consistent} sampling \citep{alkhouri2025sitcom}, where the reconstruction is constrained to be in the denoiser's range. We note that our approach is different, motivated by the setting of PDE inverse problems where priors can be sharply multimodal and with a new approach to allocate input optimization steps motivated by our theory. Additionally, the concurrent work FAPS \citep{shi2026flowannealingposteriorsampling} uses its anisotropic Gaussian covariance to precondition the measurement gradient. We note that input space optimization can be viewed as \textit{preconditioned} output space optimization, as discussed in \cref{sec:preconditioning}. Note that our preconditioner differs from FAPS, and furthermore is grounded in the theory of PDE inverse problems and does not require additional estimation from training data after the diffusion model has been trained.

The computer vision literature also contains several approaches to address the unobserved component of the reconstruction while solving the inverse problem. Most notably, DDNM \citep{wang2023zeroshot} assumes a linear forward operator and does diffusion only in nullspace; NullFlow \citep{shi2026nullflow} extends this approach to one-step sampling with flow matching. Note that these approaches do not solve the issue that DDIS suffers from discussed in \cref{prop:ddis_fail_prelim}. In particular, the nullspace component of the reconstruction is still the same as that of the denoiser. Separately, Posterior-DDIM \citep{jiao2026posteriorddim} performs ODE-based diffusion sampling in each singular direction separately, incorporating measurement consistency in a singular direction only when the diffusion signal-to-noise ratio is small. Importantly, this approach assumes a linear forward operator so that the singular value decomposition is easily computable. However, PDE inverse problems can have highly nonlinear forward operators, e.g., Darcy flow as considered in our experiments in \cref{sec:exp}. One can obtain a local approximation with the Jacobian, but this still comes at a high computational expense and may not have the same desirable properties. 

Finally, note that there have been several recent approaches developed to use diffusion for data assimilation \citep{andrae2026daisi,jia2026forcingdas}. While data assimilation is closely related to Bayesian inverse problems, it considers a sequential setting; rather than recovering a fixed coefficient field $\bm{a}$ from measurements $\bm{u}_{\mathrm{obs}}$, data assimilation repeatedly propagates a state $\bm{a}_0^{(\ell)}$ and assimilates new observations $\bm{u}_{\mathrm{obs}}^{(\ell)}$. Accordingly, recent generative data assimilation work focuses on combining learned generative priors with state transition operators. These approaches are complementary to our work. Furthermore, moment-matching posterior samplers like DPS \citep{chung2023dps} or MMPS \citep{rozet2024learning} have been adopted for use in data assimilation \citep{chen2026flowdas,savary2026trainingfree}; backward-consistent sampling can be adopted as well, and we leave a comprehensive study of this to future work.

\subsection{Theoretical Guarantees}

Classical inverse solver theory provides  guarantees for recovering an unknown signal from incomplete observations. For deterministic inverse problems, where one recovers a point estimate, regularization theory extensively studies the existence, stability, and convergence of solutions, especially with linear forward operators \citep{engl1996regularization}. Bayesian inverse solver theory studies when the posterior is well-defined and when the problem is well-posed and stable \citep{stuart2010inverse}; theory for MCMC methods is likewise used to study MCMC-based solvers for these problems \citep{hairer2014spectral}, including using diffusion priors \citep{wu2024principled}. Due to the prevalence of these diffusion priors in recent inverse solvers, recent works likewise have developed theory for diffusion inverse solvers. For Gaussian priors, approximating the denoising distribution exactly by first- and second-moment Tweedie's formulas can recover the posterior \citep{boys2024tweedie}. Other recent work obtains insights into the bias and stability of diffusion posterior samplers \citep{delgadino2026feynmankacdps}; the robustness of these samplers to likelihood misspecification and outliers \citep{yang2026outlierrobustdiffusionposteriorsampling}; connections to projected gradient descent and its convergence theory \citep{leong26recovery}; convergence to the posterior \citep{jiao2026posteriorddim}; and how this relates to maximum a posteriori estimation \citep{xu2025rethinking}. Notably, \citet{burns2026whenwhydiffusionposterior} study multimodal priors; however, they focus on a different failure mode of diffusion posterior samplers, specifically issues with likelihood approximations such as those made by DPS. Additionally, several works have used multimodal Gaussian mixture priors to empirically study the behaviors of diffusion posterior samplers \citep{crafts2025benchmarking,yismaw2025gaussian}; however, these works do not study the PDE setting or the unobserved components of the denoiser. Much of the literature studies the finite-dimensional image inverse problem setting; for the function space setting present in our PDE inverse problems, FunDPS \citep{yao2025fundps} develops an analogue of Tweedie's formula while DDIS \citep{lin2026ddis} obtains theoretical results characterizing why learning the coupled prior $(\bm{a}, \bm{u})$ is problematic, motivating the learning of the prior over only $\bm{a}$ done in both their and our algorithms. More broadly, there has been significant work on how diffusion models select modes under multimodal priors \citep{wu2024theoretical,ashiq2026why}. Our work adds to this literature in the PDE inverse problem setting; we study when and why diffusion solvers based on output space optimization fail in the PDE setting, where multimodal priors naturally arise, and instantiate a solution based on input space optimization, validating our results empirically. We also study what unique conditions on the measurements and nullspace components of different prior modes arise due to the PDE setting in \cref{prop:finite_diff_assumps} in \cref{proof:proof_of_finite_diff_assumps} and connect our results to preconditioning in \cref{sec:preconditioning}.

The effectiveness of conditioning in diffusion models depends on when and how guidance is applied throughout the reverse process; while classifier and classifier-free guidance are frequently implemented using a constant guidance scale, several works show that this is suboptimal across diffusion noise levels \citep{papalampidi2026dynamic}. In the setting of Bayesian inverse solvers based on diffusion models, there has been work done on designing schedules for likelihood or measurement guidance across the diffusion sampling process. For example, Jacobian-aware posterior sampling \citep{hen2026jacobianaware} adapts the step size of the likelihood gradient based on the denoiser Jacobian, while the recent work TriPS \citep{bang2026triadic} adjusts this step size for conditional priors based on the classifier-free guidance schedule and added stochasticity in the sampling process. Additionally, \citet{delgadino2026feynmankacdps} show that DPS guidance can become numerically unstable towards the end of the reverse process, providing a principled explanation for early stopping the DPS measurement-consistency guidance in the sampling process. We add to this body of work, designing a principled approach to scheduling input optimization steps across noise levels based on our theoretical analysis of an inverse problem on Poisson's equation. Notably, we control a different quantity than, e.g., gradient step size or guidance weight; instead, we control the \textit{computational effort} used to enforce measurement consistency at each diffusion noise level, done so by a function that automatically allocates more steps on critical noise levels during inference.

\section{Limitations \& Future Directions}
\label{sec:lims_and_future_dirs}

Limitations include that FunBCS is reliant on the accuracy of the neural operator surrogate $\mathcal{G}_{h, \bm{\phi}}$ and the diffusion prior $\pi(\bm{a})$. Additionally, our current input optimization scheduler requires task-specific calibration. Future directions include addressing these limitations, improving our method further for discontinuous inverse problems by, e.g., using discontinuous bases \citep{iglesias2016bayesian}, and extending input space optimization to PDE data assimilation \citep{chen2026flowdas}.  

\section{Overview of Recent Diffusion PDE Inverse Solvers}
\label{sec:recurrences}

We provide an overview of the recurrences of diffusion PDE inverse solvers, to make clear how existing methods perform output space optimization (with separate (\textbf{D}) and (\textbf{M}) steps) while our method performs input optimization (combined (\textbf{D}) and (\textbf{M}) steps). We present these all for fixed diffusion times $t_k > t_{k - 1}$. 

\textbf{DiffusionPDE/FunDPS recurrence}. DiffusionPDE \citep{huang2024diffusionpde} applies diffusion posterior sampling \citep{chung2023dps} with a standard diffusion prior at a fixed discretization, and FunDPS \citep{yao2025fundps} extends it to function space priors. Both use the following recurrence: 

\begin{equation}
    \bm{a}_{t_k - 1}^{(P)} = \tilde{\bm{a}}_{t_k - 1}^{(P)} - \zeta_k \nabla_{\bm{a}_{t_k}^{(P)}} \|\mathcal{A}_h\hat{\bm{a}}_0^{(P, t_k)} - \bm{u}_{\mathrm{obs}}\|_2^2\;,
    \label{eq:DPS-update}
\end{equation}
where: 
\begin{equation}
    \tilde{\bm{a}}_{t_k - 1}^{(P)} = \bm{a}_{t_k}^{(P)} + \Delta_{t_k}\bm{s}_{\bm{\theta}}(\bm{a}_{t_k}^{(P)}, t_k) + \sqrt{\Delta_{t_k}} \bm{\varepsilon}_k^{(P)}\;, 
    \label{eq:DPS-prelim}
\end{equation}
and $\Delta_{t_k} = t_k - t_{k - 1}$ with $\hat{\bm{a}}_0^{(P, t_k)} := D_{\bm{\theta}}(\bm{a}_{t_k}^{(P)}, t_k)$, and $\bm{\varepsilon}_k^{(P)} \sim \mathcal{N}(\bm{0}, \bm{I})$ for DiffusionPDE and $\mathcal{N}(\bm{0}, \bm{K})$ for FunDPS, independent across $k$.  

\begin{remark}
    At a timestep $t_k$, one first computes $\hat{\bm{a}}_0^{(P, t_k)}$ (i.e., \textbf{(D)}); this can then be used in the measurement consistency term in \cref{eq:DPS-update} as well as to estimate $\bm{s}_{\bm{\theta}}$ by Tweedie's formula per \cref{eq:tweedie-denoiser}, used in \cref{eq:DPS-prelim}. DPS then performs a renoising-type step in \cref{eq:DPS-prelim}, and subsequently enforces measurement consistency (i.e., performs (\textbf{M}) in \cref{eq:DPS-update}). We note that these two steps are separate at diffusion timestep $t_k$, so that measurement consistency is enforced by optimizing over the output space of the denoiser. That is, methods such as DiffusionPDE and FunDPS perform output space optimization. \end{remark}

\textbf{DDIS recurrence}. As presented in \cref{eq:DDIS-update}, the DDIS recurrence is given by:

\begin{equation}
    \bm{a}_{t_{k-1}}^{(D)}=\tilde{\bm{a}}_0^{(t_k)}+\sigma_{t_{k-1}}\bm{\varepsilon}_k^{(D)}, \qquad \tilde{\bm{a}}_0^{(t_k)} \sim q_{D,t_k}(\cdot | \bm{a}_{t_k}^{(D)}, \bm{u}_{\mathrm{obs}}), \; \bm{\varepsilon}_k^{(D)} \sim \mathcal{N}(\bm{0}, \bm{K})\;,
    \label{eq:DDIS-update_appx}
\end{equation}
with 
\begin{equation}
    q_{\mathrm D,t_k}(\bm{a}_0\mid \bm{a}_{t_k}^{(D)},\bm{u}_{\mathrm{obs}})
    \propto\exp\!\left\{
       -\frac{\|\mathcal{A}_h \bm{a}_0-\bm{u}_{\mathrm{obs}}\|^2}{2\gamma^2}
       -\frac{\|\bm{a}_0-\hat{\bm{a}}_0^{(D, t_k)}\|^2}{2\sigma_{t_k}^2}
    \right\}\;.
    \label{eq:ddis-correction_appx}
\end{equation}
where $\hat{\bm{a}}_0^{(D, t_k)} := D_{\bm{\theta}}(\bm{a}_{t_k}^{(D)}, t_k)$ and $\varepsilon_{k}^{(D)}$ are drawn independently.

\begin{remark}
    At a timestep $t_k$, one first computes $\hat{\bm{a}}_0^{(D, t_k)}$ (i.e., \textbf{(D)}). DDIS then incorporates this into \cref{eq:ddis-correction_appx} to sample from $q_{D, t_k}$; from the perspective of Langevin dynamics, this is precisely optimizing in the output space of the denoiser by enforcing measurement consistency (\textbf{M}) with the first term and ensuring the reconstruction does not deviate too far from the denoiser estimate per the second term. Subsequently, DDIS renoises up to timestep $t_{k - 1}$ per \cref{eq:DDIS-update_appx}. Computing the denoiser output and enforcing measurement consistency are clearly separated. Thus, DDIS performs output space optimization. 
    \end{remark}

\textbf{FunBCS recurrence}. As presented in \cref{eq:sitcom-update-rule}, the FunBCS recurrence is given by: 

\begin{equation}
    \bm{a}_{t_{k-1}}^{(B)}=D_{\bm{\theta}}(\bm{v}(s),t_k)+\sigma_{t_{k-1}}\bm{\varepsilon}_k^{(B)}\;,
      \qquad \bm{\varepsilon}_k^{(B)}\sim\mathcal N(\bm{0},\bm{K})\;,
      \label{eq:sitcom-update-rule_appx}
\end{equation}
where $\bm{v}(s)$ is a solution to the gradient flow: 
\begin{equation}
    \frac{d\bm{v}(s)}{ds}
    =-\left.\nabla_{\bm{v}}
       \frac{\|\mathcal{O}_h \circ \mathcal{G}_{h, \phi} \circ D_{\bm{\theta}}(\bm{v}, t_k)-\bm{u}_{\mathrm{obs}}\|^2}
            {2\gamma^2}\right|_{\bm{v} = \bm{v}(s)}\;,
    \qquad \bm{v}(0)=a_{t_k}^{(B)},
    \label{eq:sitcom-input-flow_appx}
\end{equation}
and $\bm{\varepsilon}_k^{(B)}$ are drawn independently. 

\begin{remark}
    At a fixed diffusion timestep $t_k$, per \cref{eq:sitcom-input-flow_appx}, for an optimization step $s_i$, we alternate between computing the denoiser output $D_{\bm{\theta}}(\bm{v}(s_i), t_k)$ and finding the input $\bm{v}(s_i + \Delta_s)$ so that measurement consistency is satisfied. This thus optimizes in the \textit{input} space of the denoiser and couples steps (\textbf{D}) and (\textbf{M}). Hence, our method FunBCS is an input space optimization procedure. 
    \end{remark}

\section{Experimental Details}
\label{app:details}

\paragraph{Tasks.} We use the four PDE inverse problems of DDIS \citep{lin2026ddis}, whose data were generated by DiffusionPDE \citep{huang2024diffusionpde} with the solvers of FNO \citep{li2021fno}; we use the normalized version provided with DDIS's public codebase. Every problem is posed on the unit square $\Omega = (0, 1)^2$, discretized on a $128 \times 128$ grid. In each problem, we recover the coefficient field $a$ from $500$ noiseless observations of the solution $u$ at random grid points, about $3\%$ of the grid. The observation locations are drawn anew for each test sample and are the same for FunBCS, DDIS and FunDPS, whereas DiffusionPDE's public code uses one fixed set of locations for every sample. Below, $\nabla^2$ denotes the Laplacian, and a Gaussian random field with parameters $(\alpha, \tau)$ has covariance proportional to $(-\nabla^2 + \tau^2 I)^{-\alpha}$. The coefficient fields of Poisson, Helmholtz and Darcy flow are all built from such a field with $(\alpha, \tau) = (2, 3)$ and zero Neumann boundary conditions.

\textit{Poisson.} The source term $a$ is the Gaussian random field above, and
\begin{equation*}
    \nabla^2 u(\bm{x}) = a(\bm{x}), \quad \bm{x} \in \Omega, \qquad u(\bm{x}) = 0, \quad \bm{x} \in \partial\Omega.
\end{equation*}

\textit{Helmholtz.} The source term $a$ is drawn as for Poisson and set to zero on $\partial\Omega$, and with wavenumber $k = 1$,
\begin{equation*}
    \nabla^2 u(\bm{x}) + k^2 u(\bm{x}) = a(\bm{x}), \quad \bm{x} \in \Omega, \qquad u(\bm{x}) = 0, \quad \bm{x} \in \partial\Omega.
\end{equation*}

\textit{Navier--Stokes.} The vorticity $w = \nabla \times \bm{v}$ of an incompressible velocity field $\bm{v}$ on the periodic domain $\Omega$ evolves as
\begin{equation*}
    \partial_t w + \bm{v} \cdot \nabla w = \nu \nabla^2 w + f, \qquad \nabla \cdot \bm{v} = 0, \qquad \bm{x} \in \Omega, \; t \in (0, T],
\end{equation*}
with viscosity $\nu = 10^{-3}$ (Reynolds number $1{,}000$) and forcing $f(\bm{x}) = 0.1\big(\sin(2\pi(x_1 + x_2)) + \cos(2\pi(x_1 + x_2))\big)$. The coefficient field is the initial vorticity $a = w(\cdot, 0)$, a periodic Gaussian random field with $(\alpha, \tau) = (2.5, 7)$, and the solution is the vorticity $u = w(\cdot, T)$ at $T = 1$.

\textit{Darcy flow.} The permeability $a$ equals $12$ where a Gaussian random field drawn as for Poisson is nonnegative and $3$ elsewhere, so it is piecewise constant, and
\begin{equation*}
    -\nabla \cdot \big(a(\bm{x}) \nabla u(\bm{x})\big) = 1, \quad \bm{x} \in \Omega, \qquad u(\bm{x}) = 0, \quad \bm{x} \in \partial\Omega.
\end{equation*}

DDIS's public codebase supports Darcy flow, but DDIS does not report it, so we include it because its piecewise-constant coefficient makes the prior multimodal.

\paragraph{Evaluation.} All methods are evaluated on the same $100$ test samples. On Poisson, Helmholtz and Navier--Stokes, we report the relative $\ell_2$ error $\|\hat{a} - a\|_2 / \|a\|_2$ in percent. On Darcy flow, we threshold the reconstruction at $7.5$, the midpoint of the two coefficient values, and report the percentage of misclassified pixels \citep{huang2024diffusionpde}. The spectral energy error, from \cite{lin2026ddis}, $\es$ compares the radially averaged power spectra $\hat{P}(k)$ and $P(k)$ of the reconstruction and of the ground truth. We take the geometric mean of $|\hat{P}(k) - P(k)| / P(k)$ over wavenumbers $k$, and then average over the test samples. 

\paragraph{Models and samplers.} FunBCS uses the same pretrained diffusion prior, neural operator surrogate and noise schedule as DDIS \cite{lin2026ddis}. DDIS uses the settings of DDIS's public codebase: $100$ noise levels, $20$ Langevin steps per level, and the likelihood weights recommended in the paper. For Darcy flow, which DDIS does not report, we tuned and chose the weight $30$ from $\{5, 10, 15, 20, 30\}$. FunBCS uses $N = 10$ noise levels and optimizes the denoiser input with Adam at learning rate $\gamma = 0.03$. Its scheduling function (\cref{app:schedule}) uses the selection windows $(\sigma_{\mathrm{low}}, \sigma_{\mathrm{high}}) = (0, 0.711)$ on Poisson, $(0, 0.335)$ on Helmholtz, $(0, 0.144)$ on Navier--Stokes and $(0.335, 2.6)$ on Darcy flow. FunBCS-AC uses $M = NK = 800$ optimization steps per sample, and FunBCS-EF uses $M^{\star} = 607$, $506$, $406$ and $488$ steps on the four tasks, respectively. We note that these $M^{\star}$ budgets are not hardcoded, but instead dynamically determined by our method during inference. 

\paragraph{Baselines.} DiffusionPDE \citep{huang2024diffusionpde} trains a diffusion model on the joint distribution of the coefficient and solution fields at a fixed resolution, and guides its sampling with the observation misfit and the PDE residual, evaluated on the denoiser estimate. FunDPS \citep{yao2025fundps} follows the same approach with a joint diffusion prior in function space built from neural operators. DDIS \citep{lin2026ddis} learns a diffusion prior on the coefficient field only, predicts the solution with a neural operator surrogate, and enforces the observations with annealed Langevin sampling in the output space of the denoiser. We run each baseline with its authors' public code and default settings. For the scaling curves of \cref{fig:runtime,fig:runtime-app}, we run every method at $0.5\times$, $1\times$, $2\times$ and $4\times$ its default budget and change nothing else: FunBCS with $400$, $800$, $1{,}600$ and $3{,}200$ optimization steps, distributed over its $N = 10$ noise levels by the scheduling function; DDIS with $50$, $100$, $200$ and $400$ noise levels of $20$ Langevin steps each, that is, $1{,}000$ to $8{,}000$ Langevin steps; and FunDPS with $250$, $500$, $1{,}000$ and $2{,}000$ sampling steps. DiffusionPDE, which is much slower, is run with $500$, $2{,}000$ and $3{,}000$ sampling steps, where $2{,}000$ is its default.

\paragraph{Choosing $N$ and $K$.} We fix the max budget of FunBCS at $NK = 800$ optimization steps. In order to determine the optimal $N$, assuming a uniform allocation of $K$ inner steps per outer level, we had to choose how to split between the number of noise levels $N$ and the number of steps $K$ per level. \Cref{fig:convergence} compares three splits on Poisson and Darcy flow. These runs place the same $K$ steps on every noise level and do not use the scheduling function at all, so that the choice of $N$ and $K$ is not confounded with how the steps are allocated. The split $N = 10$, $K = 80$ reaches the lowest error on both tasks, and we use it for every task. The scheduling function then redistributes these $800$ steps across the $N = 10$ noise levels (\cref{app:schedule}).

\begin{figure}[t]
\centering
\includegraphics[width=\linewidth]{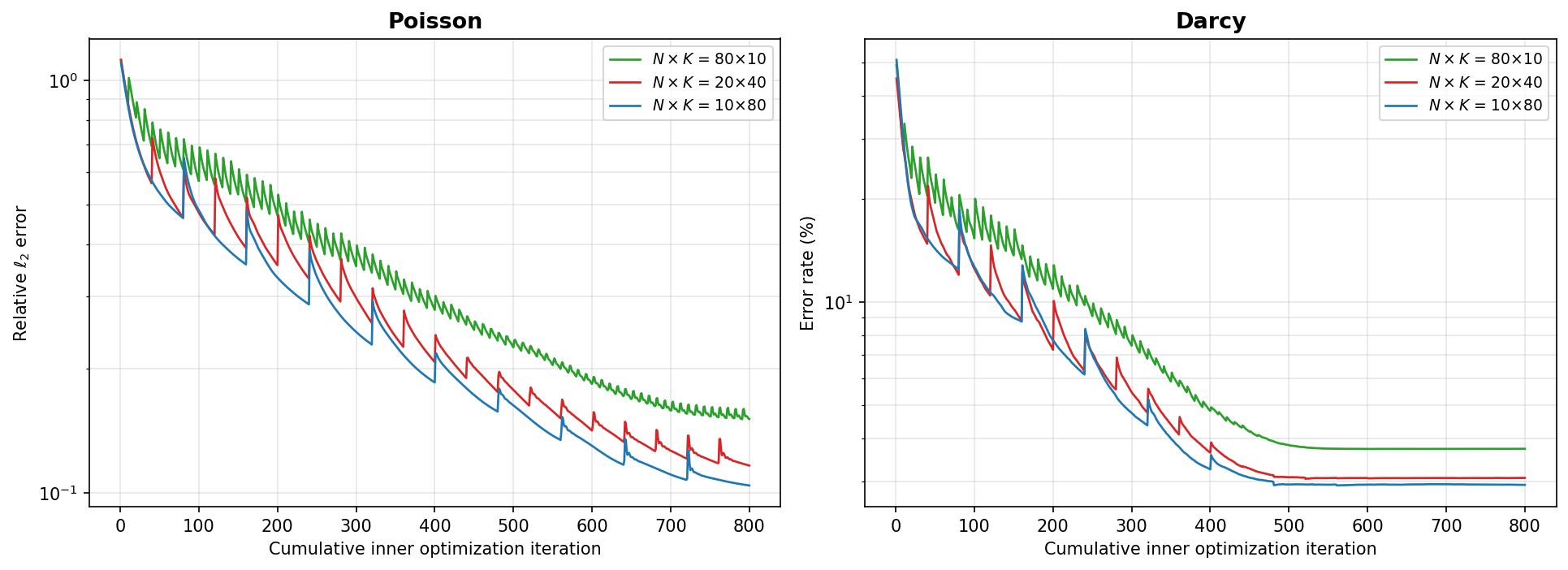}
\caption{Reconstruction error of FunBCS during sampling for three splits $N \times K$ of $800$ optimization steps, with the same $K$ at every noise level and without the scheduling function, on $50$ samples: relative $\ell_2$ error on Poisson (left) and pixel error rate on Darcy flow (right). The error jumps up at the end of each noise level, when the sample is renoised.}
\label{fig:convergence}
\end{figure}

\paragraph{Timing.} We measure seconds per sample on a single NVIDIA A100 GPU with one process per GPU, at batch size $25$ for FunBCS, DDIS and FunDPS and at batch size $1$ for DiffusionPDE. DiffusionPDE's public code reconstructs one test sample at a time, with a single ground truth, observation mask and guidance loss per run, so we time it unmodified at batch size $1$. The time covers the whole sampling process, including all denoiser and surrogate evaluations. Each optimization step of FunBCS differentiates through both the denoiser and the surrogate, so it costs more than a Langevin step of DDIS, which differentiates through the surrogate only. FunBCS, however, uses $10$ noise levels instead of $100$, which DDIS uses.

\paragraph{Scaling with compute.} \Cref{fig:runtime-app} shows the error of every method against its time per sample on Helmholtz, Navier--Stokes and Darcy, as \cref{fig:runtime} does for Poisson. As on Poisson, FunBCS is more accurate than every baseline at every budget, and at about $8$ seconds per sample it is more accurate than every baseline at any of the budgets we tested.

\begin{figure}[t]
\centering
\includegraphics[width=0.33\textwidth]{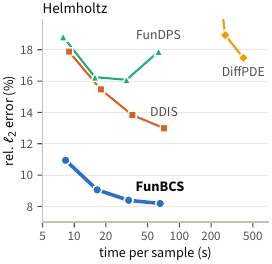}\hfill
\includegraphics[width=0.33\textwidth]{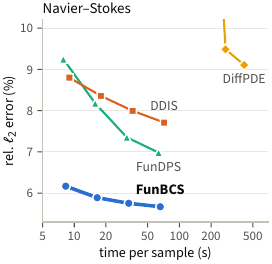}\hfill
\includegraphics[width=0.33\textwidth]{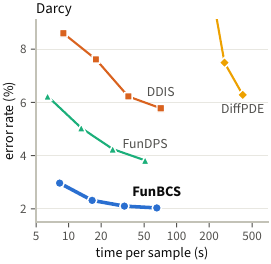}
\caption{Scaling behavior of all methods with increasing inference time per sample on Helmholtz, Navier--Stokes and Darcy, as in \cref{fig:runtime} for Poisson. Darcy reports the pixel error rate. All methods are run using the same hardware.}
\label{fig:runtime-app}
\end{figure}

\section{The Proposed Scheduling Function}
\label{app:schedule}
\edef\schedfloatrestore{\noexpand\renewcommand{\noexpand\topfraction}{\topfraction}%
\noexpand\renewcommand{\noexpand\bottomfraction}{\bottomfraction}%
\noexpand\renewcommand{\noexpand\textfraction}{\textfraction}%
\noexpand\renewcommand{\noexpand\floatpagefraction}{\floatpagefraction}%
\noexpand\setcounter{topnumber}{\the\value{topnumber}}%
\noexpand\setcounter{totalnumber}{\the\value{totalnumber}}}
\renewcommand{\topfraction}{0.9}\renewcommand{\bottomfraction}{0.5}
\renewcommand{\textfraction}{0.08}\renewcommand{\floatpagefraction}{0.85}
\setcounter{topnumber}{3}\setcounter{totalnumber}{4}

A uniform schedule gives every noise level the same number of optimization steps, although the analysis of \cref{sec:efficiency} shows that steps matter unequally across noise levels: every level needs a few, so that a lean towards the wrong mode is corrected before it is carried to lower noise levels (\cref{prop:wrong-mode-slowdown}), and the remaining steps are worth the most inside the selection window (\cref{prop:interior-selection-window}). The scheduling function therefore treats the optimization steps of \cref{alg:sampler} as a total budget $M$ and allocates it across noise levels, keeping a floor at every level and concentrating the remainder in the selection window. \Cref{fig:schedule} shows the resulting allocations, and that withholding the steps of a single noise level hurts most where the scheduler concentrates them. We define the allocation below, then ablate it (\cref{app:sched-ablations}), describe how the window is tuned (\cref{app:sched-tuning}), and apply it to DDIS (\cref{app:sched-ddis}).  

\begin{figure}[t]
\centering
\includegraphics[width=\linewidth]{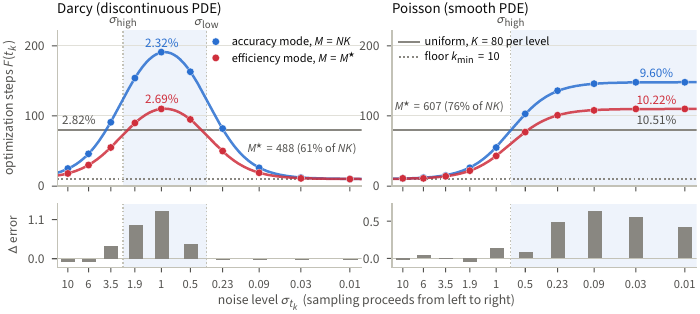}
\vspace{-0.5cm}
\caption{\textbf{Top}: Optimization steps per noise level under the scheduling function (\cref{app:schedule}). The accuracy mode (blue) reallocates the $NK = 800$ steps of the uniform allocation (grey) towards the selection window (shaded), and the efficiency mode (red) cuts the budget to $M^{\star}$; every noise level keeps at least $k_{\min} = 10$ steps (dotted). Numbers give the resulting error (\cref{tab:main}). \textbf{Bottom}: increase in error, in percentage points, when a single noise level receives no optimization steps, which is largest where the scheduler places its steps.}
\label{fig:schedule}
\end{figure} 

\textbf{Effort weights.} The scheduling function assigns each noise level $\sigma$ an effort weight through a smooth window whose only task-specific parameters are the two edges $\sigma_{\mathrm{low}} \leq \sigma_{\mathrm{high}}$ of the selection window:
\begin{equation}
    \omega(\sigma) = \Big[1 + (\sigma/\sigma_{\mathrm{high}})^{\beta}\Big]^{-1}\Big[1 + (\sigma_{\mathrm{low}}/\sigma)^{\beta}\Big]^{-1}.
    \label{eq:effort-weight}
\end{equation}
The weight is close to one inside the window and decays polynomially outside it, at a rate set by the exponent $\beta > 0$. Setting $\sigma_{\mathrm{low}} = 0$ removes the lower edge, setting $\sigma_{\mathrm{high}} = \infty$ removes the upper edge, and $\beta \to \infty$ recovers a hard window. Because $\omega$ depends on the noise level rather than on the index $k$, the same edges apply to any number and spacing of noise levels.

\textbf{Allocation.} Given a total budget of $M$ optimization steps, the scheduler reserves a floor of $k_{\min}$ steps at every noise level and distributes the remainder in proportion to the effort weights $\omega_k := \omega(\sigma_{t_k})$:
\begin{equation}
    F(t_k) = k_{\min} + \big(M - N k_{\min}\big)\,\frac{\omega_k}{\sum_{j=1}^{N} \omega_j}, \qquad k = 1, \ldots, N.
    \label{eq:allocation}
\end{equation}
We round each share down and give the remaining steps, one each, to the noise levels with the largest remainders, so that $\sum_{k} F(t_k) = M$ exactly. A uniform allocation of $K$ steps per noise level corresponds to $\omega \equiv 1$, and at the same budget $M = NK$, \cref{eq:allocation} moves steps towards the selection window without changing the cost of the sampler.

\textbf{Accuracy and efficiency modes.} Since the effort weights identify the noise levels on which optimization steps matter, the scheduler also determines a natural budget below $NK$:
\begin{equation}
    M^{\star} = N k_{\min} + (K - k_{\min})\,\frac{\sum_{j} \omega_j}{\tau \max_{j} \omega_j}, \qquad \tau \in (0, 1],
    \label{eq:budget-cut}
\end{equation}
capped at $NK$. At this budget, \cref{eq:allocation} assigns exactly $K$ steps to a noise level whose weight is a fraction $\tau$ of the peak, so every noise level with at least this weight keeps at least as many steps as under the uniform allocation, and the saving comes from the noise levels outside the window. We call $M = NK$ the \emph{accuracy mode} and $M = M^{\star}$ the \emph{efficiency mode}, and write FunBCS-AC and FunBCS-EF for FunBCS in these two modes. For every run, FunBCS-AC seeks to non-uniformly disperse the $M$ total optimization steps in order to reach higher accuracy without changing the budget, while FunBCS-EF seeks to cut as much budget as possible while maintaining similar accuracy to the uniform allocation. During inference, one can easily interpolate between the two modes with no tuning as the two modes differ only in the budget. 

\textbf{Shared constants.} All FunBCS experiments use $\beta = 2$, $k_{\min} = 10$ and $\tau = 0.7$ on every PDE and in both modes, with $N = 10$ noise levels and $K = 80$, so that the accuracy mode has a budget of $800$ steps. Only the window edges are chosen per PDE (\cref{app:sched-tuning}): $(\sigma_{\mathrm{low}}, \sigma_{\mathrm{high}}) = (0, 0.711)$ on Poisson, $(0, 0.335)$ on Helmholtz, $(0, 0.144)$ on Navier--Stokes and $(0.335, 2.6)$ on Darcy.

\subsection{Scheduler Ablations}
\label{app:sched-ablations}

\Cref{tab:main} compares FunBCS with the scheduler (FunBCS-AC and FunBCS-EF) against FunBCS without it, which places $K = 80$ optimization steps on each of its $N = 10$ noise levels. At the same budget of $800$ steps, the accuracy mode lowers the error by $10\%$ on average, and on every task, at unchanged cost. The efficiency mode removes $37\%$ of the steps on average, which makes FunBCS $1.6\times$ faster, with $2\%$ lower error on average. On Poisson, Helmholtz, Navier--Stokes and Darcy, it uses $607$, $506$, $406$ and $488$ of the $800$ steps, which take $12.6$, $10.5$, $8.4$ and $10.1$ seconds per sample; \cref{tab:main} reports their mean. Even without the scheduler, FunBCS is more accurate than every baseline on every task, and faster than DDIS.

\begin{figure}[t]
\centering
\includegraphics[width=\linewidth]{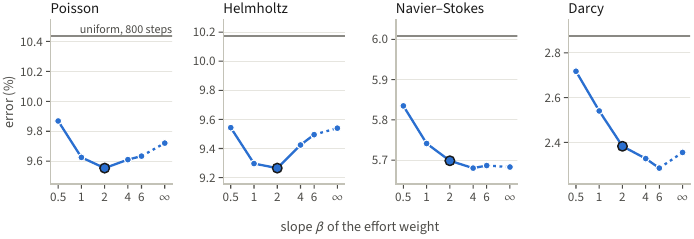}
\caption{Error of FunBCS in the accuracy mode on $50$ samples as the slope $\beta$ of \cref{eq:effort-weight} varies, at the selected window edges; $\infty$ denotes a hard window, and the outlined marker is the value used, $\beta = 2$. The grey line is the uniform allocation of the same $800$ steps. Darcy reports the pixel error rate and the other tasks the relative $\ell_2$ error.}
\label{fig:sched-beta}
\end{figure}

\begin{figure}[t]
\centering
\includegraphics[width=\linewidth]{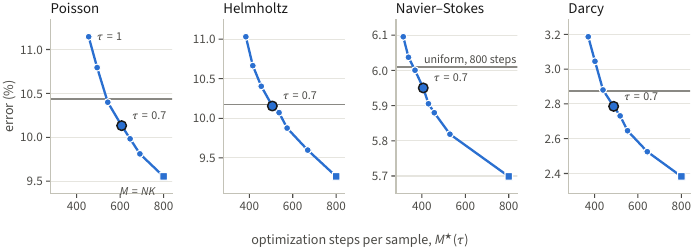}
\caption{Error of FunBCS on $50$ samples against the budget $M^{\star}$ resolved by \cref{eq:budget-cut} for $\tau = 1, 0.9, 0.8, 0.7, 0.65, 0.6$ and $0.5$, from left to right. The outlined marker is the value used, $\tau = 0.7$, the square is the accuracy mode ($M = NK = 800$), and the grey line is the uniform allocation at $800$ steps. On Poisson, $\tau = 0.5$ reaches the cap and coincides with the accuracy mode.}
\label{fig:sched-tau}
\end{figure}

\textbf{Varying the shared constants.} \Cref{fig:sched-beta,fig:sched-tau,fig:sched-kmin} vary the three constants that are shared across tasks, on $50$ samples with everything else fixed. The slope $\beta$ controls how sharply the effort weight of \cref{eq:effort-weight} falls off at the edges of the window. As $\beta \to 0$, the weight becomes flat and the allocation approaches the uniform one; as $\beta \to \infty$, the window becomes a hard box, and every noise level outside it keeps only the floor. We use $\beta = 2$, which gives the lowest error averaged over the four tasks. The constant $\tau$ controls the budget of the efficiency mode through \cref{eq:budget-cut}. At $\tau = 1$, the noise level with the largest weight keeps the $K$ steps of the uniform allocation and all other levels receive fewer; lowering $\tau$ scales the steps above the floor by $1/\tau$, until $M^{\star}$ reaches the budget $NK$ of the accuracy mode. The floor $k_{\min}$ is the number of steps that every noise level keeps regardless of its weight. Without a floor, the noise levels outside the window receive almost no steps, while a large floor moves the allocation back towards the uniform one.

\begin{figure}[t]
\centering
\begin{minipage}[t]{0.455\linewidth}
\centering
\includegraphics[height=2.03in]{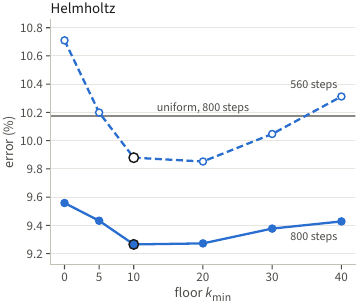}
\caption{Error of FunBCS on Helmholtz on $50$ samples as the floor $k_{\min}$ varies, at $800$ and at $560$ steps. Outlined markers are the value used, $k_{\min} = 10$, and the grey line is the uniform allocation at $800$ steps.}
\label{fig:sched-kmin}
\end{minipage}\hfill
\begin{minipage}[t]{0.505\linewidth}
\centering
\includegraphics[height=2.03in]{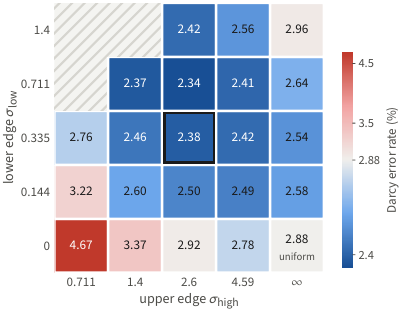}
\caption{Darcy error rate (\%) of FunBCS on $50$ samples for each window; $\sigma_{\mathrm{low}} = 0$ and $\sigma_{\mathrm{high}} = \infty$ remove an edge, so the bottom-right cell is the uniform allocation. Blue cells beat it, and the outlined cell is the selected window.}
\label{fig:sched-window}
\end{minipage}
\end{figure}

\subsection{Tuning the Selection Window}
\label{app:sched-tuning}

Since $\beta$, $k_{\min}$ and $\tau$ are shared across tasks, only the two edges of the selection window are set for a new PDE. We choose them with the grid search of \cref{alg:locate-window}, run with FunBCS in the accuracy mode. The candidate edges are the geometric midpoints between consecutive noise levels, roughly a factor of two apart; the upper edge is selected first, and a lower edge is kept only if it lowers the error further. Each candidate window is evaluated on $20$ samples, so the search, which needs at most nine runs, takes about $50$ minutes per PDE on a single A100 GPU.

\textbf{Sensitivity to the window edges.} A coarse grid suffices because the error depends on the shape of the window far more than on the exact position of its edges. \Cref{fig:sched-window} shows this on Darcy, the only task whose selected window has both edges. Moving one or both edges of the selected window by one grid step, a factor of about two in $\sigma$, keeps the error rate between $2.34$ and $2.60\%$, against $2.88\%$ for the uniform allocation, and $12$ of the $13$ windows with both edges improve on the uniform allocation. A window of the wrong shape, in contrast, does not: three of the four one-sided low-noise windows that suit the smooth tasks are worse than uniform on Darcy, with error rates up to $4.67\%$. The selection window is therefore not a sensitive choice. Once its shape is right, each edge only needs to be located to within a grid step, which the coarse search of \cref{alg:locate-window} does in a few short runs.

\begin{algorithm}[t]
\caption{Finding the selection window for each PDE}
\label{alg:locate-window}
\begin{algorithmic}[1]
\REQUIRE A few samples of the PDE (outside the training and testing sets); set of candidate window bounds $\mathcal{S}$, the midpoints $\sqrt{\sigma_{t_k}\sigma_{t_{k+1}}}$ between consecutive noise levels; a tolerance $\epsilon$ for noise in the measured error.
\STATE Run every configuration below with FunBCS (\cref{alg:sampler}) in the accuracy mode, using the shared $\beta$ and $k_{\min}$.
\STATE Measure the error $e_{\mathrm{uni}}$ of the uniform allocation.
\STATE \textbf{Upper bound.} For each $s \in \mathcal{S}$, measure the error $e(s)$ of the window $(0, s)$.
\STATE Set $\sigma_{\mathrm{high}}$ to the $s$ with the lowest error, or to $\infty$ (no upper bound) if no window beats the uniform allocation. If $\sigma_{\mathrm{high}}$ is the smallest or largest candidate, also try the next midpoint beyond it and keep the better of the two.
\STATE \textbf{Lower bound.} For each $s \in \mathcal{S}$ below $\sigma_{\mathrm{high}}$, measure the error $e'(s)$ of the window $(s, \sigma_{\mathrm{high}})$, and let $s^{\star}$ be the $s$ with the lowest error.
\STATE Set $\sigma_{\mathrm{low}} = s^{\star}$ if $e'(s^{\star})$ is lower than the error of the window $(0, \sigma_{\mathrm{high}})$ by more than $\epsilon$, and $\sigma_{\mathrm{low}} = 0$ (no lower bound) otherwise.
\RETURN $(\sigma_{\mathrm{low}}, \sigma_{\mathrm{high}})$
\end{algorithmic}
\end{algorithm}

\subsection{Generalizing Scheduling Function to DDIS}
\label{app:sched-ddis}

We evaluate whether the scheduling function generalizes to DDIS, an output space optimization method. It significantly improves the speed and also the accuracy of DDIS after only a short tuning of the window $[\sigma_{\mathrm{low}}, \sigma_{\mathrm{high}}]$. DDIS spends its budget on the Langevin steps with which it samples \cref{eq:ddis-correction} at each of its $N = 100$ noise levels. We allocate these steps with \cref{eq:allocation}, using $K = 20$ steps per noise level for the uniform budget of $2{,}000$ steps, a floor of $k_{\min} = 2$, and the same $\beta$ and $\tau$ as for FunBCS. The floor differs from that of FunBCS only because DDIS takes fewer steps per noise level: we keep the same fraction, one eighth of the steps of each noise level ($10$ of $K = 80$ for FunBCS), which gives $k_{\min} = 2$ for $K = 20$ after rounding down. The edges are chosen with the same search (\cref{alg:locate-window}), which takes under an hour per PDE on a single A100 GPU. On Poisson, Helmholtz and Navier--Stokes it returns a one-sided low-noise window, $(0, 0.335)$ on Poisson and Helmholtz and $(0, 0.144)$ on Navier--Stokes, identical to or one grid step from the window of FunBCS. On Darcy it returns the window $(0.144, \infty)$.

\Cref{tab:sched-ddis} reports the results on the $100$ test samples of \cref{tab:main}. In the accuracy mode, the scheduler lowers the error of DDIS by $8\%$ on average at unchanged cost, and on all four tasks. In the efficiency mode, it makes DDIS $1.3\times$ faster on average while lowering its error by $1\%$ on average. On Darcy, the window covers most noise levels of DDIS, so $M^{\star}$ reaches the full budget and the two modes coincide. The time falls by less than the number of steps because the probability-flow solve that DDIS runs at every noise level does not depend on the budget.

\begin{table}[!ht]
\caption{DDIS with and without the scheduling function on the $100$ test samples of \cref{tab:main}. The uniform allocation is DDIS as in \cref{tab:main}, with $K = 20$ Langevin steps on each of its $N = 100$ noise levels. The accuracy mode redistributes the same $2{,}000$ steps, and the efficiency mode cuts steps while maintaining baseline accuracy. On Darcy, $M^{\star}$ reaches the full budget, so the two modes coincide. Errors are mean $\pm$ standard deviation over the test samples. }
\label{tab:sched-ddis}
\centering
\small
\setlength{\tabcolsep}{3.3pt}
\def\tstd#1{{\scriptsize$\pm$#1}}
\resizebox{\linewidth}{!}{%
\begin{tabular}{l cc cc cc cc}
\toprule
 & \multicolumn{2}{c}{Poisson} & \multicolumn{2}{c}{Helmholtz} & \multicolumn{2}{c}{Navier--Stokes} & \multicolumn{2}{c}{Darcy} \\
\cmidrule(lr){2-3}\cmidrule(lr){4-5}\cmidrule(lr){6-7}\cmidrule(lr){8-9}
DDIS & rel.\ $\ell_2$ & s/sample & rel.\ $\ell_2$ & s/sample & rel.\ $\ell_2$ & s/sample & err.\ rate & s/sample \\
\midrule
uniform allocation & 15.76\tstd{5.08} & \underline{18.0} & \underline{15.47}\tstd{4.34} & \underline{18.0} & 8.36\tstd{1.55} & \underline{18.0} & \underline{7.48}\tstd{3.89} & 18.0 \\
accuracy mode & \textbf{14.68}\tstd{4.71} & \underline{18.0} & \textbf{13.95}\tstd{4.08} & \underline{18.0} & \textbf{8.08}\tstd{1.48} & \underline{18.0} & \textbf{6.64}\tstd{3.62} & 18.0 \\
efficiency mode & \underline{15.56}\tstd{5.00} & \textbf{13.1} & 16.66\tstd{6.10} & \textbf{13.1} & \underline{8.28}\tstd{1.56} & \textbf{11.3} & \textbf{6.64}\tstd{3.62} & 18.0 \\
\bottomrule
\end{tabular}}
\end{table}

\FloatBarrier
\schedfloatrestore

\begin{figure}[t]
    \centering
    \begin{subfigure}[t]{0.48\linewidth}
        \centering
        \includegraphics[width=\linewidth]{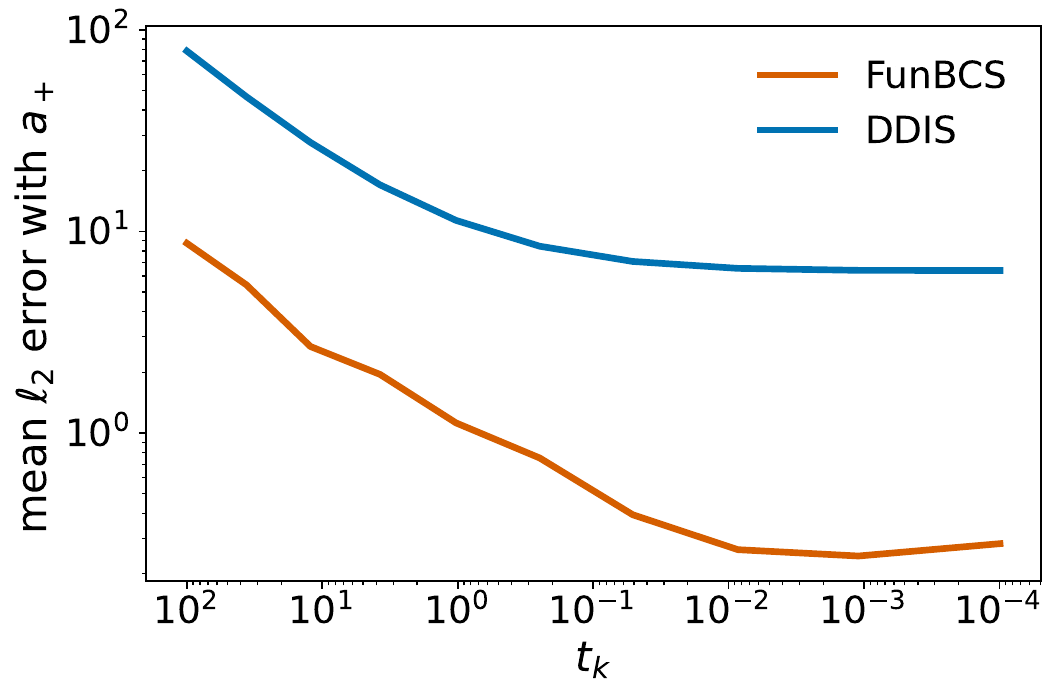}
        \caption{Nullspace error to $\bm{a}_+$. FunBCS (orange) performs significantly better than DDIS (blue).}
        \label{fig:poisson_ker_contract}
    \end{subfigure}
    \hfill 
     \begin{subfigure}[t]{0.48\linewidth}
        \centering
        \includegraphics[width=\linewidth]{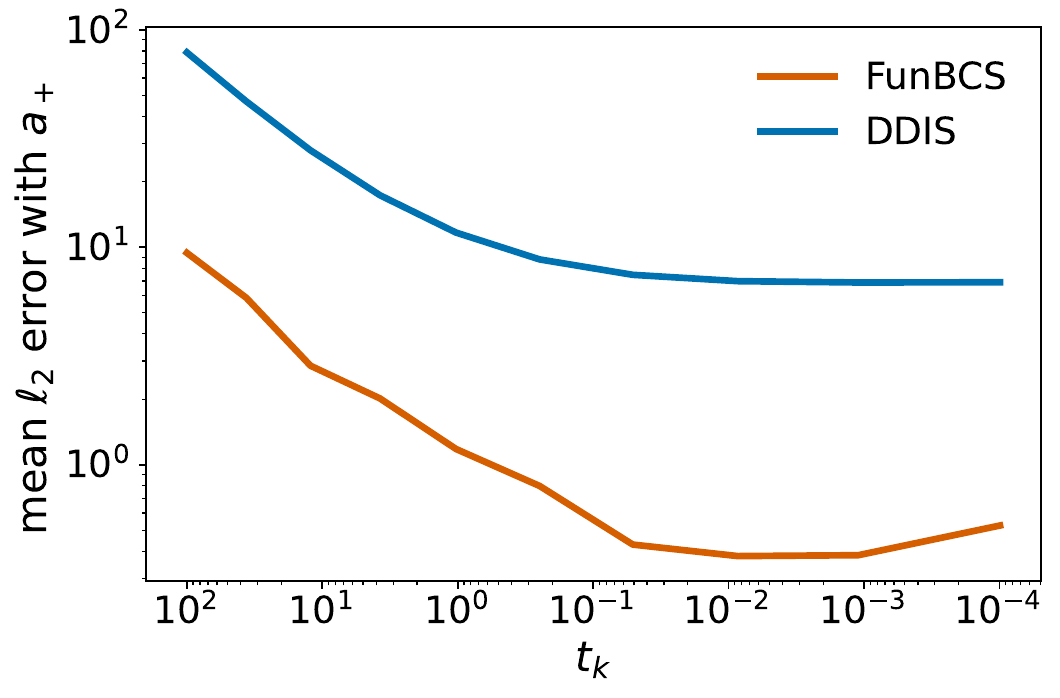}
        \caption{Cumulative mean error to $\bm{a}_+$. FunBCS (orange) performs significantly better than DDIS (blue).}
        \label{fig:poisson_overall_contract}
    \end{subfigure}
    \caption{We find that in nullspace (a) and overall (b), FunBCS (orange) has significantly less error on our discontinuous Poisson problem discussed in \cref{sec:analysis}, as compared to DDIS (blue) validating our theoretical results.}
    \label{fig:discont_poisson}
\end{figure}

\section{Navier--Stokes Spectral Analysis}
\label{app:ns}

Navier--Stokes is the only task where our two metrics disagree: our method improves relative $\ell_2$ over DDIS but worsens the spectral energy error $\es$ (Table~\ref{tab:main}). This section examines what drives that gap.

The two metrics can disagree because $\es$ compares radially averaged power spectra, and is therefore phase-blind. It measures whether a reconstruction carries the right amount of energy at each spatial frequency, but not whether that energy sits in the right places. A reconstruction whose high-frequency content is entirely unrelated to the ground truth can still score well on $\es$, as long as its spectrum has roughly the right shape.

To tell these two cases apart, we take the 2D Fourier transform of the reconstruction and of the ground truth, group the modes into radial bins by wavenumber $k$, and compute a normalized inner product between the two sets of coefficients in each bin,
\begin{equation}
  \mathcal{K}(k) \;=\; \frac{\big|\sum_{\mathbf{k} \in S_k} \overline{\hat A(\mathbf{k})}\, A(\mathbf{k})\big|}
  {\sqrt{\sum_{\mathbf{k} \in S_k} |\hat A(\mathbf{k})|^2} \; \sqrt{\sum_{\mathbf{k} \in S_k} |A(\mathbf{k})|^2}},
  \label{eq:coherence}
\end{equation}
where $\hat A$ and $A$ are the Fourier coefficients of the reconstruction and the ground truth, $\overline{(\cdot)}$ denotes complex conjugation, and $S_k$ collects every mode falling in radial bin $k$, across all test instances.

This is a correlation coefficient computed in the Fourier domain, and Cauchy--Schwarz places it in $[0,1]$. It equals $1$ when the reconstructed modes match the true modes in both amplitude and phase, and falls toward $0$ when the two are unrelated, no matter how much energy the reconstruction places at that frequency. 

\begin{figure}[t]
\centering
\includegraphics[width=\linewidth]{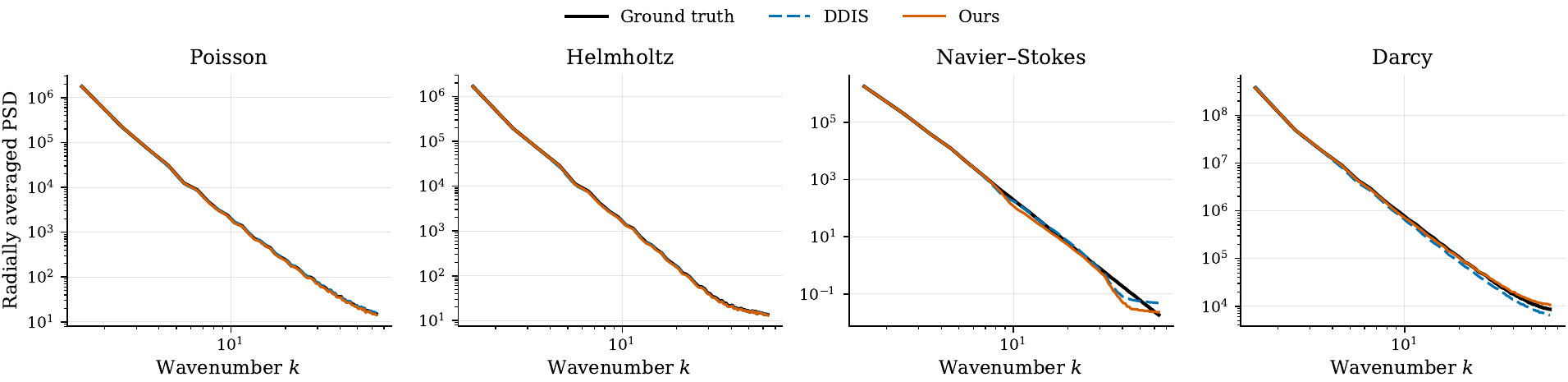}
\caption{Radially averaged power spectral density of the Navier--Stokes reconstructions against the ground truth, alongside three other PDEs. Above $k\!\approx\!8$, DDIS retains substantially more power than our method and exceeds the ground-truth spectrum over part of this range, which is what drives its lower $\es$ in Table~\ref{tab:main}.}
\label{fig:psd}
\end{figure}

\begin{figure}[t]
\centering
\includegraphics[width=\linewidth]{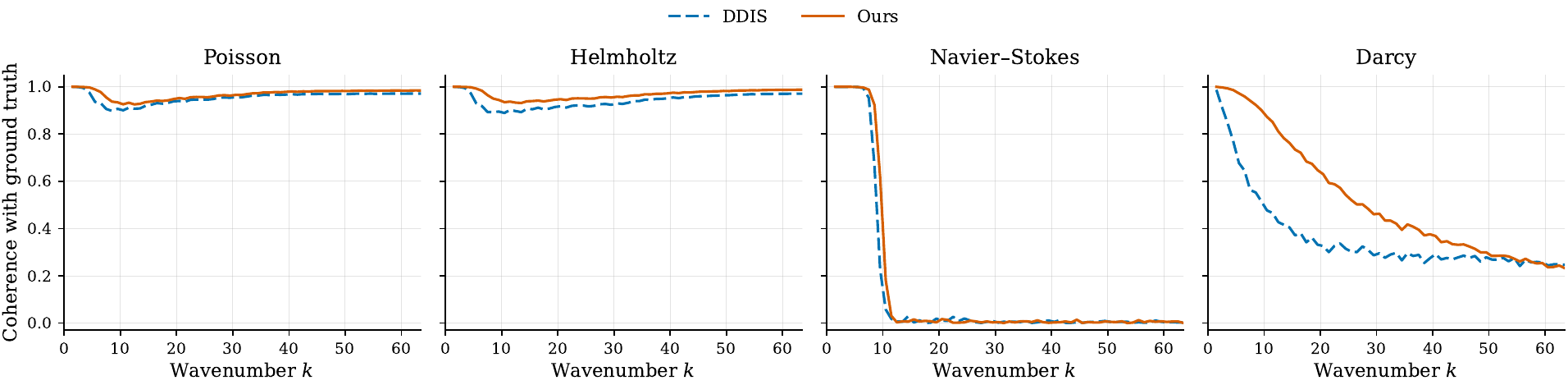}
\caption{Per-wavenumber coherence with the ground truth on four PDEs. Beyond the same $k\!\approx\!8$ crossover, coherence is near zero for \emph{both} methods, indicating that neither recovers the true high-wavenumber modes. Together with \cref{fig:psd}, this suggests the $\es$ gap reflects prior-consistent texture in unidentified modes rather than recovered information.}
\label{fig:coherence}
\end{figure}

We observe that in \cref{fig:psd}, at the crossover at $k \approx 8$, DDIS retains considerably more power than we do, and this is what produces its lower $\es$. However, the coherence of both methods is near zero over exactly this range, so the extra power DDIS retains is not aligned with the true modes.

Comparing both curves against the ground truth sharpens this. Our reconstructions fall below the true spectrum in this range, whereas DDIS rises above it over part of it, so DDIS is adding high-frequency detail that the true field does not contain and that does not match the true modes. Because this content is uncorrelated with the ground truth, it adds to the squared error rather than reducing it, which is consistent with our lower relative $\ell_2$, as shrinking unrecoverable modes toward zero is the better choice under an $\ell_2$ objective, even though it costs spectral realism. We therefore report $\es$ alongside coherence, since on this benchmark a lower $\es$ reflects plausible but spurious high-frequency texture rather than recovered structure.

\section{Additional Experiments}
\label{sec:additional_exp}

\subsection{Visualizing Reconstructed Fields \& Solutions} 
\label{app:visualizations}

In this section we provide additional visualizations of the reconstructed coefficient fields, together with the solution fields estimated from the same sparse observations, on Poisson and Helmholtz (\cref{fig:qual-ph}) and on Navier--Stokes and Darcy (\cref{fig:qual-nd}).

\subsection{Box Plots of Main Results}
\label{sec:box_plots}

\Cref{fig:box-main} shows the distribution of the errors in \cref{tab:main} over the $100$ test samples.

\begin{figure}[ht]
\centering
\includegraphics[width=\linewidth]{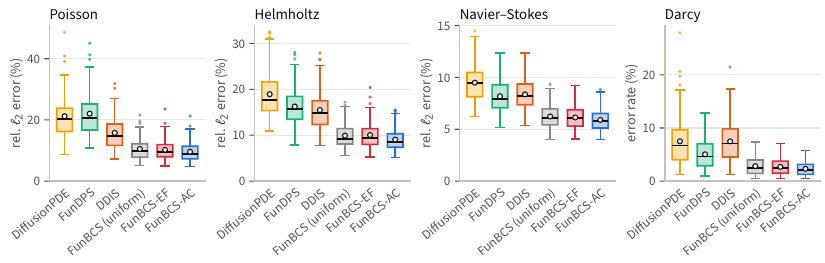}
\caption{Per-sample errors of the recovered coefficient field on the $100$ test samples of \cref{tab:main}: relative $\ell_2$ error on Poisson, Helmholtz and Navier--Stokes, and pixel error rate on Darcy. Each box spans the interquartile range, with the median as a line and the mean reported in \cref{tab:main} as a white dot. Whiskers extend to $1.5$ times the interquartile range, and samples beyond them are drawn individually.}
\label{fig:box-main}
\end{figure}

\subsection{Study of Discontinuous Poisson's Equation}
\label{sec:discont_poisson}

In order to validate \cref{prop:sitcom_ker_contract}, and the distinction between DDIS and FunBCS more broadly, we perform an experiment on discontinuous Poisson following our setting in \cref{sec:analysis} except we use a neural operator surrogate; learned prior; compute $\bar{\bm{a}}_{D, t_{k - 1}}$ by averaging 100 samples $\bm{a}_{t_{k - 1}}^{(D)}$. We consider $\Omega = (-32, 32)$, and $z_- = -2, z_+ = 2$. Note that we do not configure an input optimization scheduler, instead using 10 reverse sampling steps each with 30 input optimization steps. We compute $\bm{P}_{\mathrm{ker} \bm{A}}$ with respect to the finite difference approximation $\bm{G}$. As expected, per \cref{fig:poisson_ker_contract}, FunBCS obtains significantly less error than DDIS in nullspace; per \cref{fig:poisson_overall_contract}, FunBCS thus obtains significantly less error overall.

\begin{figure}[p]
\centering
\includegraphics[width=0.8\linewidth]{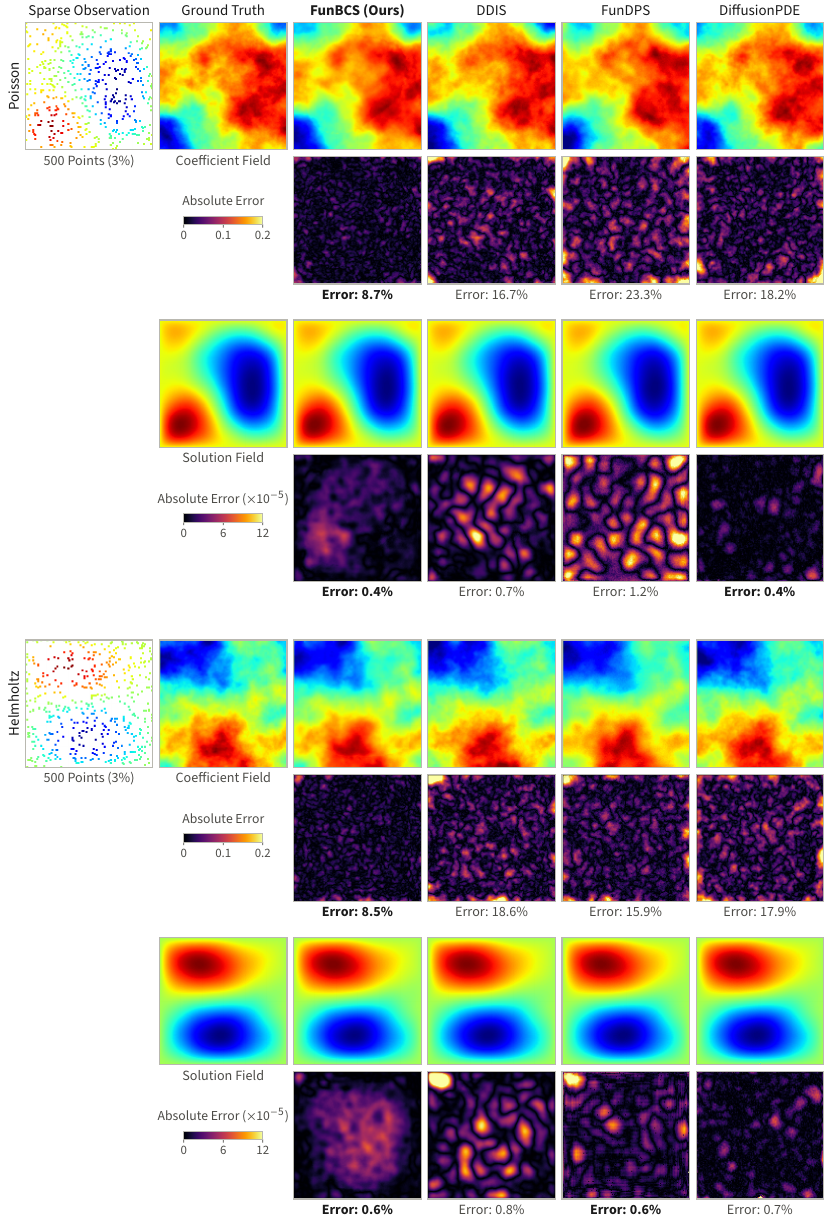}
\caption{Reconstructions on one test sample of Poisson and of Helmholtz. For each PDE, the upper block shows the coefficient field and the lower block the solution field estimated from the same sparse observations, each with its absolute error to the ground truth underneath and its error on that sample. Every method here was run with their published default test-time compute budget, and FunBCS was run in efficiency mode, which is $1.7\times$, $1.4\times$, and $26.4\times$ faster than DDIS, FunDPS, and DiffusionPDE, respectively. FunBCS and DDIS obtain the solution field by applying the neural operator to their recovered coefficient field, whereas FunDPS and DiffusionPDE sample both fields jointly.}
\label{fig:qual-ph}
\end{figure}

\begin{figure}[p]
\centering
\includegraphics[width=0.8\linewidth]{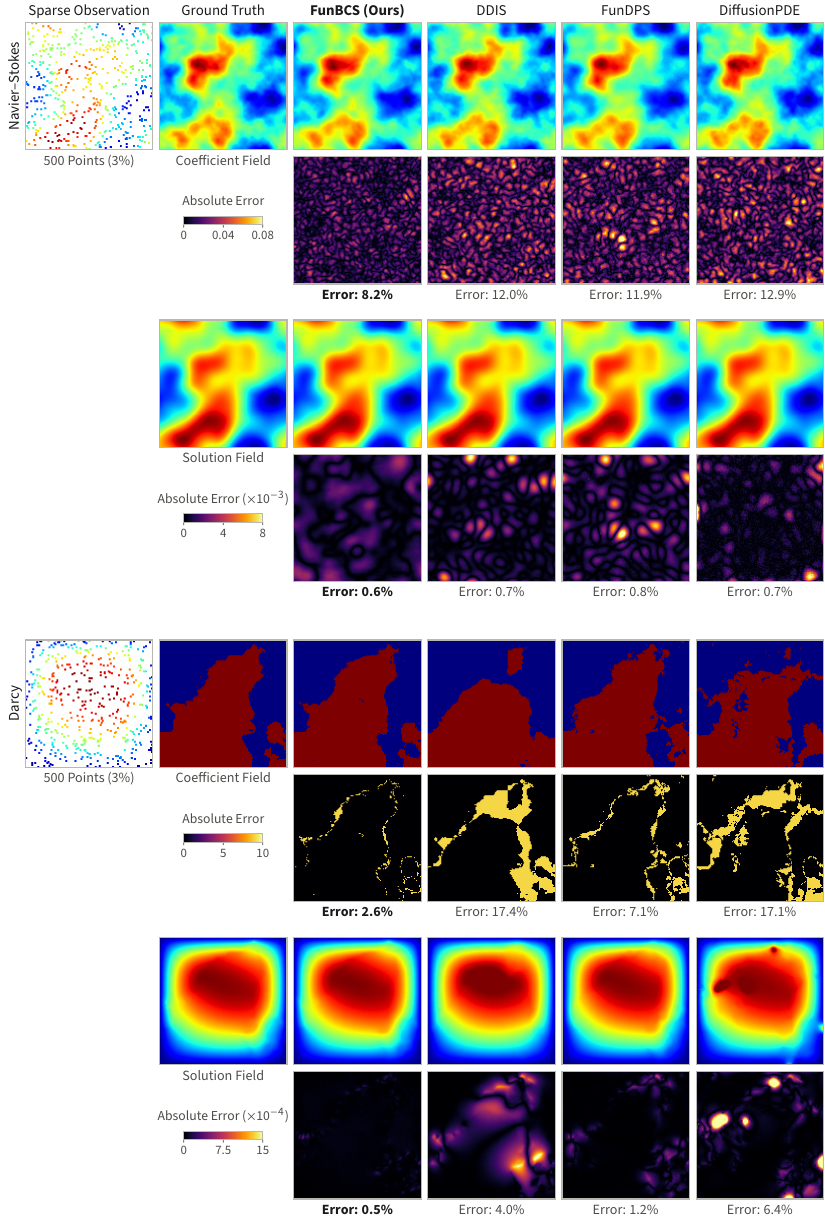}
\caption{Reconstructions on one test sample of Navier--Stokes and of Darcy, arranged as in \cref{fig:qual-ph}. On Darcy, the error of the coefficient field is the pixel error rate.}
\label{fig:qual-nd}
\end{figure}

\section{Connection with Preconditioned Output Space Optimization}
\label{sec:preconditioning}

As discussed in \cref{sec:related_work}, input optimization is equivalent to preconditioned output space optimization. We discuss this further below. We consider the setting of \cref{sec:analysis}. Firstly, we note that output space optimization fails because the unobserved component of the measurement consistency gradient is 0: 

\begin{proposition}
    Let $\mathcal{L}(\bm{a}) := \frac{\|\bm{Aa} - \bm{u}_{\mathrm{obs}}\|_2^2 }{2\gamma^2}$ for a $\gamma > 0$. Then, 
    \begin{equation}
        \nabla_{\bm{a}} \mathcal{L}(\bm{a}) = \frac{1}{\gamma^2}\bm{A}^T(\bm{Aa} - \bm{u}_{\mathrm{obs}}) 
    \end{equation}
    and thus 
    \begin{equation}
        \bm{P}_{\mathrm{ker} \bm{A}} \nabla_{\bm{a}} \mathcal{L}(\bm{a}) = 0.
    \end{equation}
    \label{prop:measurement_consistency_grad}
\end{proposition}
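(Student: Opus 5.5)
The plan is a direct computation followed by a standard linear-algebra identity. First, I will compute the gradient of $\mathcal{L}$. Write $\bm{r}(\bm{a}) := \bm{A}\bm{a} - \bm{u}_{\mathrm{obs}}$, so that $\mathcal{L}(\bm{a}) = \frac{1}{2\gamma^2}\bm{r}(\bm{a})^T\bm{r}(\bm{a})$. Since $\bm{r}$ is affine with Jacobian $\bm{A}$, the chain rule gives
\[
\nabla_{\bm{a}}\mathcal{L}(\bm{a}) = \frac{1}{2\gamma^2}\cdot 2\,\bm{A}^T\bm{r}(\bm{a}) = \frac{1}{\gamma^2}\bm{A}^T(\bm{A}\bm{a}-\bm{u}_{\mathrm{obs}}).
\]
Equivalently, I can expand the quadratic $\bm{a}^T\bm{A}^T\bm{A}\bm{a} - 2\bm{u}_{\mathrm{obs}}^T\bm{A}\bm{a} + \|\bm{u}_{\mathrm{obs}}\|^2$ and differentiate term by term. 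This establishes the first display.

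For the second claim, it suffices to show that $\bm{P}_{\mathrm{ker}\bm{A}}\bm{A}^T = \bm{0}$. In other words, the range of $\bm{A}^T$ must be annihilated by the orthogonal projector onto $\ker\bm{A}$.

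This follows from the fundamental theorem of linear algebra, $\mathrm{range}(\bm{A}^T) = (\ker\bm{A})^{\perp}$. One inclusion I will verify directly. For $\bm{w}\in\ker\bm{A}$ and any $\bm{y}\in\mathbb{R}^m$,
\[
\langle \bm{w}, \bm{A}^T\bm{y}\rangle = \langle \bm{A}\bm{w}, \bm{y}\rangle = 0,
\]
so $\bm{A}^T\bm{y}\perp\ker\bm{A}$. Only this inclusion is needed. The orthogonal projector $\bm{P}_{\mathrm{ker}\bm{A}}$ sends any vector orthogonal to $\ker\bm{A}$ to $\bm{0}$. Applying this with $\bm{y} = \gamma^{-2}(\bm{A}\bm{a}-\bm{u}_{\mathrm{obs}})$ gives $\bm{P}_{\mathrm{ker}\bm{A}}\nabla_{\bm{a}}\mathcal{L}(\bm{a}) = \bm{0}$ for every $\bm{a}$.

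There is no real obstacle here. The only point needing care is that $\bm{P}_{\mathrm{ker}\bm{A}}$ is the \emph{orthogonal} projector, as defined in the Remark preceding \cref{prop:ddis_fail_prelim}. With an oblique projector onto $\ker\bm{A}$, vectors in $\mathrm{range}(\bm{A}^T)$ need not be mapped to zero. I will state this explicitly, noting that $\bm{P}_{\mathrm{ker}\bm{A}}$ is symmetric and idempotent with range $\ker\bm{A}$ and kernel $(\ker\bm{A})^\perp$.

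Finally, I will remark on the consequence for \cref{eq:ddis-correction}. Along the Langevin/gradient dynamics, the $\ker\bm{A}$-component of the iterate is driven only by the prior term pulling toward $\hat{\bm{a}}_0^{(t_k,D)}$, which is consistent with \cref{prop:ddis_fail_prelim}.
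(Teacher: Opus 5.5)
Your proposal is correct and follows essentially the same route as the paper's proof. The paper obtains the gradient from the directional derivative of $\phi(\varepsilon)=\mathcal{L}(\bm{a}+\varepsilon\bm{h})$ rather than from the chain rule. The second step is the same adjoint identity $\langle \bm{A}^\top\bm{y},\bm{z}\rangle=\langle \bm{y},\bm{A}\bm{z}\rangle=0$ for $\bm{z}\in\ker\bm{A}$, which places $\bm{A}^\top\bm{y}$ in $(\ker\bm{A})^\perp$, where the orthogonal projector gives zero.
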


\textit{Proof}: See \cref{proof:proof_of_measurement_consistency_grad}. 

Consequently, an output space optimization procedure which leverages the measurement consistency gradient, unchanged, will not update the nullspace component. In particular, output space optimization can be viewed in continuous time as computing $\dot{\bm{a}} = -\nabla_{\bm{a}} \mathcal{L}(\bm{a})$, and consequently $\bm{P}_{\mathrm{ker} \bm{A}} \dot{\bm{a}} = 0$. However, one can leverage a \textit{preconditioner} to couple the observed and unobserved components. Then, changing the observed component in output space by minimizing the measurement consistency loss will also change the unobserved component. In particular, our approach as outlined in \cref{sec:method} is equivalent to performing preconditioned output space optimization: 

\begin{proposition}
    Assume $\bm{Ad} \neq 0$ and $\bm{P}_{\mathrm{ker} \bm{A}}\bm{d} \neq 0$. Fix diffusion time $t_k$ and consider the exact denoiser $D_{\bm{\theta}}(\cdot, t_k) = D^*(\cdot, t_k)$. Let $\bm{v}(s)$ be defined per the gradient flow in \cref{eq:sitcom-input-flow}. Let $\bm{a}(s) := D^*(\bm{v}(s), t_k)$. Then, we have that: 
    \begin{equation}
        \frac{d}{ds}\bm{a}(s) = -P(v(s), t_k) \nabla_{\bm{a}} \mathcal{L}(\bm{a})|_{\bm{a} = \bm{a}(s)}, 
    \end{equation}
    where
\begin{equation}
    \bm{P}(\bm{v},t_k)
    :=
    \nabla_{\bm{v}}D^*(\bm{v},t_k)
    \nabla_{\bm{v}}D^*(\bm{v},t_k)^\top.
\end{equation}
    Consequently,  
    \begin{equation}
        \bm{P}_{\mathrm{ker} \bm{A}} \frac{d}{ds}\bm{a}(s) \neq 0. 
    \end{equation}
    \label{prop:preconditioning}
\end{proposition}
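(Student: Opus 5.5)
The first claim follows from the chain rule. The second uses the explicit structure of the exact denoiser from \cref{prop:exact_denoiser}.

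\textbf{Step 1 (chain rule).} With the exact operator, the flow \cref{eq:sitcom-input-flow} reads $\dot{\bm v}(s) = -\nabla_{\bm v}\,\mathcal{L}(D^*(\bm v,t_k))|_{\bm v=\bm v(s)}$. The chain rule gives
\[
\nabla_{\bm v}\mathcal{L}(D^*(\bm v,t_k)) = \nabla_{\bm v}D^*(\bm v,t_k)^\top \nabla_{\bm a}\mathcal{L}(\bm a)|_{\bm a=D^*(\bm v,t_k)} .
\]
Since $\bm a(s)=D^*(\bm v(s),t_k)$, we also have $\dot{\bm a}(s) = \nabla_{\bm v}D^*(\bm v(s),t_k)\,\dot{\bm v}(s)$. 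Substituting the first identity into the second yields $\dot{\bm a}(s) = -\bm P(\bm v(s),t_k)\nabla_{\bm a}\mathcal{L}(\bm a(s))$. Differentiability of $D^*$ holds because, under the two-point prior, $p_{t_k}$ is a smooth Gaussian mixture. Existence of $\bm v(s)$ is \cref{lemma:sitcom-existence}.

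\textbf{Step 2 (rank-one structure).} By \cref{prop:exact_denoiser}, $D^*(\bm v,t_k)=\bm a_- + k_{t_k}(\bm v)\bm d$. Hence $\nabla_{\bm v}D^*(\bm v,t_k) = \bm d\,\nabla_{\bm v}k_{t_k}(\bm v)^\top$, and so
\[
\bm P(\bm v,t_k) = \|\nabla_{\bm v}k_{t_k}(\bm v)\|_2^2\,\bm d\bm d^\top .
\]
Consequently
\[
\dot{\bm a}(s) = -\|\nabla k\|^2\,\bigl(\bm d^\top\nabla_{\bm a}\mathcal{L}(\bm a(s))\bigr)\,\bm d,
\]
and
\[
\bm P_{\ker\bm A}\dot{\bm a}(s) = -c(s)\,\bm P_{\ker\bm A}\bm d,\qquad c(s):=\|\nabla k\|^2\,\bm d^\top\nabla_{\bm a}\mathcal{L}(\bm a(s)).
\]
Since $\bm P_{\ker\bm A}\bm d\neq 0$ by assumption, it remains to show $c(s)\neq 0$.

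\textbf{Step 3 (nonvanishing of $c(s)$).} Write $k=k_{t_k}(\bm v)$. The noiseless observation is $\bm u_{\mathrm{obs}}=\bm A\bm a_+$, and $\bm a_+-\bm a_-=\bm d$, so $\bm A\bm a(s)-\bm u_{\mathrm{obs}} = -(1-k)\bm A\bm d$. Therefore
\[
\bm d^\top\nabla_{\bm a}\mathcal{L}(\bm a(s)) = \gamma^{-2}\,\bm d^\top\bm A^\top(\bm A\bm a(s)-\bm u_{\mathrm{obs}}) = -\gamma^{-2}(1-k)\|\bm A\bm d\|^2 .
\]
This is nonzero because $\bm A\bm d\neq 0$ and $k<1$.

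For the gradient factor, compute $k_{t_k}$ explicitly from Bayes' rule with equal weights:
\[
k_{t_k}(\bm v) = \mathrm{sigmoid}(\ell(\bm v)),\qquad \ell(\bm v) = \sigma_{t_k}^{-2}\,\bm d^\top\bm K^{-1}\bigl(\bm v-\tfrac12(\bm a_++\bm a_-)\bigr).
\]
This gives $\nabla k = k(1-k)\sigma_{t_k}^{-2}\bm K^{-1}\bm d$, which is nonzero because $\bm d\neq 0$, $\bm K$ is positive definite, and $k\in(0,1)$ strictly. Hence $c(s)\neq 0$ for every finite $s$, and the claim follows.

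The main obstacle is the strict bound $k<1$ along the whole trajectory; I expect it to be mild. The logistic form gives $k\in(0,1)$ at every finite $\bm v$, and the solution of \cref{lemma:sitcom-existence} stays finite for finite $s$. In the proof I would state this explicitly and note that the conclusion holds for all finite $s$ but degenerates only in the limit $s\to\infty$, where $k\to 1$, consistent with \cref{thm:sitcom-success}.
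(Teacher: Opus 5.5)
Your proposal is correct and follows essentially the same route as the paper's proof. The chain rule gives $\frac{d}{ds}\bm{a}(s) = -\bm{J}_{D^*}\bm{J}_{D^*}^\top\nabla_{\bm{a}}\mathcal{L}$; the two-point form $D^* = \bm{a}_- + k_{t_k}(\bm{v})\bm{d}$ makes $\bm{P} = \|\nabla_{\bm{v}}k_{t_k}\|_2^2\,\bm{d}\bm{d}^\top$ rank one; and the residual $-(1-w(s))\bm{A}\bm{d}$ together with $0<w(s)<1$ shows that the scalar multiplying $\bm{P}_{\ker\bm{A}}\bm{d}$ is nonzero (indeed strictly positive) for every finite $s$.
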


\textit{Proof}: See \cref{proof:proof_of_preconditioning}. 

Thus, optimizing over the input is equivalent to preconditioned output space optimization, which can update the unobserved components. We note that the recent approach of \citep{shi2026flowannealingposteriorsampling} also incorporates a preconditioner, but does not motivate it rigorously and requires extra computation; our approach is instead motivated from the perspective of PDE inverse problems and leverages the diffusion denoiser. 

\section{Proofs}
\label{sec:proofs}

Consider the setting of \cref{sec:function_space_diffusion}. Fix diffusion times  $t_k>t_{k-1}$ 
with $\sigma_{t_k}\ge\sigma_{t_0}>0$. Throughout, we consider the setting of \cref{sec:analysis}. 
In proofs, $k_{t_k}(\cdot)$ denotes the function defined by
$k_{t_k}(\bm{v}):=\mathbb{P}(\bm{a}_0=\bm{a}_+\mid \bm{a}_{t_k}=\bm{v})$, and
$w(s):=k_{t_k}(\bm{v}(s))$. Here, $k_{t_k}(\bm{a}_{t_k})$ means evaluation
at the forward random vector $\bm{a}_{t_k}$, while
$k_{t_k}(\bm{a}_{t_k}^{(D)})$ and $k_{t_k}(\bm{a}_{t_k}^{(B)})$ mean
evaluation of this same function at the algorithmic states. We show this regular conditional probability is well-defined in \cref{lemma:two-point-denoiser}.

\subsection{Helpful Lemmas}
\label{sec:lemmas}

\begin{lemma}[Conditional laws after renoising]
\label{lemma:renoising}
The update in \cref{eq:DDIS-update} satisfies
\begin{equation}
 \bm{a}_{t_{k-1}}^{(D)}\mid \bm{a}_{t_k}^{(D)},\bm{u}_{\mathrm{obs}}
 \sim\mathcal N\left(\bar{\bm{a}}_{\mathrm D,t_{k-1}},
 \left(\frac{\bm{A}^\top \bm{A}}{\gamma^2}+\frac{\bm{I}}{\sigma_{t_k}^2}\right)^{-1}
 +\sigma_{t_{k-1}}^2\bm{K}\right),
 \label{eq:ddis-renoised-law}
\end{equation}
where $\bar{\bm{a}}_{\mathrm D,t_{k-1}}$ is given by \cref{lemma:ddis-projected-mean}, and under the exact denoiser $D^*(\cdot, t_k)$, the update in \cref{eq:sitcom-update-rule} satisfies: 
\begin{equation}
 \bm{a}_{t_{k-1}}^{(B)}\mid \bm{a}_{t_k}^{(B)},\bm{u}_{\mathrm{obs}}
 \sim\mathcal N(D^*(\bm{v}(s),t_k),\sigma_{t_{k-1}}^2\bm{K}).
 \label{eq:sitcom-renoised-law}
\end{equation}
\end{lemma}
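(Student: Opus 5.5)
The argument is a Gaussian conditioning computation for each of the two recurrences; the only subtle point is keeping track of what is being conditioned on.

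\textbf{DDIS.} In the linear setting $\mathcal{A}_h=\bm{A}$, with $\hat{\bm{a}}_0:=D^*(\bm{a}_{t_k}^{(D)},t_k)$, the density in \cref{eq:ddis-correction} is the exponential of a quadratic form in $\bm{a}_0$. We complete the square. The quadratic part is $\tfrac12\bm{a}_0^\top(\bm{A}^\top\bm{A}/\gamma^2+\bm{I}/\sigma_{t_k}^2)\bm{a}_0$. The linear part is $\bm{a}_0^\top(\bm{A}^\top\bm{u}_{\mathrm{obs}}/\gamma^2+\hat{\bm{a}}_0/\sigma_{t_k}^2)$. The precision $\bm{\Lambda}:=\bm{A}^\top\bm{A}/\gamma^2+\bm{I}/\sigma_{t_k}^2$ is positive definite, since $\sigma_{t_k}\ge\sigma_{t_0}>0$. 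Hence
\[
\tilde{\bm{a}}_0^{(t_k)}\mid \bm{a}_{t_k}^{(D)},\bm{u}_{\mathrm{obs}}\sim\mathcal N\bigl(\bm{\Lambda}^{-1}(\bm{A}^\top\bm{u}_{\mathrm{obs}}/\gamma^2+\hat{\bm{a}}_0/\sigma_{t_k}^2),\ \bm{\Lambda}^{-1}\bigr).
\]
This mean is the quantity $\bar{\bm{a}}_{D,t_{k-1}}$ identified in \cref{lemma:ddis-projected-mean}, and we take that expression as given.

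By \cref{eq:DDIS-update}, $\bm{a}_{t_{k-1}}^{(D)}=\tilde{\bm{a}}_0^{(t_k)}+\sigma_{t_{k-1}}\bm{\varepsilon}_k^{(D)}$, where $\bm{\varepsilon}_k^{(D)}\sim\mathcal N(\bm{0},\bm{K})$ is drawn independently of $\tilde{\bm{a}}_0^{(t_k)}$ given $(\bm{a}_{t_k}^{(D)},\bm{u}_{\mathrm{obs}})$. The conditional law is therefore the convolution of two independent Gaussians. Means add, so the mean stays $\bar{\bm{a}}_{D,t_{k-1}}$. Covariances add, giving $\bm{\Lambda}^{-1}+\sigma_{t_{k-1}}^2\bm{K}$. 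This is \cref{eq:ddis-renoised-law}. Conditioning on $\bm{u}_{\mathrm{obs}}$ is harmless because it is a deterministic quantity, $\bm{A}\bm{a}_+$, in the noiseless setting.

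\textbf{FunBCS.} Given $\bm{a}_{t_k}^{(B)}$, the gradient flow \cref{eq:sitcom-input-flow} starting from $\bm{v}(0)=\bm{a}_{t_k}^{(B)}$ has a unique solution by \cref{lemma:sitcom-existence}. Its vector field depends only on the exact denoiser and on the fixed $\bm{u}_{\mathrm{obs}}$, so $\bm{v}(s)$ is a deterministic (measurable) function of $\bm{a}_{t_k}^{(B)}$. Consequently $D^*(\bm{v}(s),t_k)$ is measurable with respect to the conditioning. In \cref{eq:sitcom-update-rule} we then add $\sigma_{t_{k-1}}\bm{\varepsilon}_k^{(B)}$, with $\bm{\varepsilon}_k^{(B)}\sim\mathcal N(\bm{0},\bm{K})$ independent of $\bm{a}_{t_k}^{(B)}$. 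This is a shift of a centered Gaussian by a known vector, which gives \cref{eq:sitcom-renoised-law} directly.

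The only step needing care is the measurability and determinism of $\bm{v}(s)$ as a function of the initial condition. Well-posedness from \cref{lemma:sitcom-existence} settles this: the flow map of an ODE with locally Lipschitz vector field is continuous in its initial data. The smoothness of $D^*$ under the Gaussian-smoothed two-point prior provides the local Lipschitz property. For DDIS, the matching step is confirming that the renoising noise is independent of the Langevin output, which holds by construction.
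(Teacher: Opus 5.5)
Your proposal is correct and follows essentially the same route as the paper: for DDIS, the Gaussian form of $q_{D,t_k}$ (which the paper cites from \cref{lemma:ddis-projected-mean}, itself proved by the same completing-the-square) plus independent addition of the renoising Gaussian. For FunBCS, \cref{lemma:sitcom-existence} makes $D^*(\bm{v}(s),t_k)$ fixed under the conditioning, and the extra renoising noise then gives the stated law. Your remark on continuous dependence on the initial data (which follows from the global Lipschitz bound proved in that lemma) only spells out a measurability point the paper leaves implicit.
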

\begin{proof}
Conditional on $\bm{a}_{t_k}^{(D)},\bm{u}_{\mathrm{obs}}$, \cref{lemma:ddis-projected-mean} gives
\begin{equation}
 \tilde{\bm{a}}_0\sim\mathcal N(\bar{\bm{a}}_{\mathrm D,t_{k-1}},\bm{\Sigma}_{\mathrm D,t_k}),
 \qquad \bm{\varepsilon}_k^{(D)}\sim\mathcal N(\bm{0},\bm{K}),
 \qquad \tilde{\bm{a}}_0\perp\!\!\!\perp\bm{\varepsilon}_k^{(D)}.
\end{equation}
Therefore
\begin{equation}
 \sigma_{t_{k-1}}\bm{\varepsilon}_k^{(D)}\sim\mathcal N(\bm{0},\sigma_{t_{k-1}}^2\bm{K}),
 \qquad
 \tilde{\bm{a}}_0+\sigma_{t_{k-1}}\bm{\varepsilon}_k^{(D)}
 \sim\mathcal N(\bar{\bm{a}}_{\mathrm D,t_{k-1}},
               \bm{\Sigma}_{\mathrm D,t_k}+\sigma_{t_{k-1}}^2\bm{K}),
\end{equation}
proving 
\cref{eq:ddis-renoised-law}.

For \cref{eq:sitcom-renoised-law}, \cref{lemma:sitcom-existence} determines $\bm{v}(s)$ uniquely
for fixed $\bm{a}_{t_k}^{(B)},\bm{u}_{\mathrm{obs}}$. Thus $D^*(\bm{v}(s),t_k)\in\R^n$ is
fixed under this conditioning, and
\begin{equation}
 \bm{\varepsilon}_k^{(B)}\sim\mathcal N(\bm{0},\bm{K})
 \quad\Longrightarrow\quad
 D^*(\bm{v}(s),t_k)+\sigma_{t_{k-1}}\bm{\varepsilon}_k^{(B)}
 \sim\mathcal N(D^*(\bm{v}(s),t_k),\sigma_{t_{k-1}}^2\bm{K}).
\end{equation}
This proves \cref{lemma:renoising}.
\end{proof}

\begin{lemma}[DDIS mean and covariance]
\label{lemma:ddis-projected-mean}
The density in \cref{eq:ddis-correction}, given $\mathcal{A}_h = \bm{A}$, is that of
$\mathcal N(\bar{\bm{a}}_{\mathrm D,t_{k-1}},\bm{\Sigma}_{\mathrm D,t_k})$, where
\begin{equation}
    \bar{\bm{a}}_{\mathrm D,t_{k-1}}
    =\left(\frac{\bm{A}^\top \bm{A}}{\gamma^2}
                 +\frac{\bm{I}}{\sigma_{t_k}^2}\right)^{-1}
      \left(\frac{\bm{A}^\top \bm{u}_{\mathrm{obs}}}{\gamma^2}
                 +\frac{D_{\bm{\theta}}(\bm{a}_{t_k}^{(D)},t_k)}{\sigma_{t_k}^2}\right)
    \label{eq:ddis-mean}
\end{equation}
and
\begin{equation}
    \bm{\Sigma}_{\mathrm D,t_k}
    :=\left(\frac{\bm{A}^\top \bm{A}}{\gamma^2}
                 +\frac{\bm{I}}{\sigma_{t_k}^2}\right)^{-1}.
    \label{eq:ddis-covariance}
\end{equation}
\end{lemma}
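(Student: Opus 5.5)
The plan is to complete the square in the exponent of \cref{eq:ddis-correction}, with $\mathcal{A}_h = \bm{A}$ linear and $\hat{\bm{a}}_0^{(t_k, D)} = D_{\bm{\theta}}(\bm{a}_{t_k}^{(D)},t_k)$ held fixed under the conditioning. Write the negative log-density, up to an additive constant independent of $\bm{a}_0$, as
\[
\Phi(\bm{a}_0) = \frac{1}{2\gamma^2}\|\bm{A}\bm{a}_0-\bm{u}_{\mathrm{obs}}\|^2 + \frac{1}{2\sigma_{t_k}^2}\|\bm{a}_0-\hat{\bm{a}}_0^{(t_k, D)}\|^2 .
\]
Expanding both norms, collect the quadratic, linear and constant parts:
\[
\Phi(\bm{a}_0) = \tfrac12 \bm{a}_0^\top \bm{\Lambda}\bm{a}_0 - \bm{a}_0^\top \bm{b} + c,
\qquad
\bm{\Lambda} := \frac{\bm{A}^\top\bm{A}}{\gamma^2}+\frac{\bm{I}}{\sigma_{t_k}^2},
\qquad
\bm{b} := \frac{\bm{A}^\top\bm{u}_{\mathrm{obs}}}{\gamma^2}+\frac{\hat{\bm{a}}_0^{(t_k, D)}}{\sigma_{t_k}^2}.
\]

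Next, observe that $\bm{\Lambda}$ is symmetric positive definite. Indeed, $\bm{A}^\top\bm{A}\succeq 0$ and $\sigma_{t_k}^{-2}\bm{I}\succ 0$, since $\sigma_{t_k}\ge\sigma_{t_0}>0$. Hence $\bm{\Lambda}$ is invertible, and we can complete the square:
\[
\Phi(\bm{a}_0) = \tfrac12(\bm{a}_0-\bm{\Lambda}^{-1}\bm{b})^\top\bm{\Lambda}(\bm{a}_0-\bm{\Lambda}^{-1}\bm{b}) + c',
\]
where $c'$ does not depend on $\bm{a}_0$.

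It follows that $q_{D,t_k}$ is proportional to a Gaussian density with precision $\bm{\Lambda}$ and mean $\bm{\Lambda}^{-1}\bm{b}$. Since the right-hand side of \cref{eq:ddis-correction} is integrable (it is dominated by a Gaussian), the normalizing constant exists. Two densities that are proportional and both integrate to one must be equal. Therefore the law is $\mathcal{N}(\bm{\Lambda}^{-1}\bm{b},\bm{\Lambda}^{-1})$, which is exactly \cref{eq:ddis-mean} and \cref{eq:ddis-covariance}.

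There is no real obstacle here. The only points to check are that $\bm{\Lambda}$ is invertible, which is where $\sigma_{t_k}>0$ is used (no assumption on $\bm{A}$ is needed), and that the mean label $\bar{\bm{a}}_{\mathrm D,t_{k-1}}$ is consistent with its later use. That label coincides with the conditional mean of $\bm{a}^{(D)}_{t_{k-1}}$ because the renoising noise in \cref{eq:DDIS-update} has mean zero, which \cref{lemma:renoising} then uses.
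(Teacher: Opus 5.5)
Your proposal is correct and follows essentially the same route as the paper. Both complete the square with precision $\bm{\Lambda}=\bm{A}^\top\bm{A}/\gamma^2+\bm{I}/\sigma_{t_k}^2$, check that $\bm{\Lambda}$ is positive definite, and identify the normalized density as $\mathcal N(\bm{\Lambda}^{-1}\bm{b},\bm{\Lambda}^{-1})$; the paper simply evaluates the normalizing Gaussian integral explicitly by a change of variables, where you use the fact that two proportional probability densities are equal.
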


\begin{proof}
Fix $t_k>0$, $\bm{a}_{t_k}^{(D)}\in\R^n$, and $\bm{u}_{\mathrm{obs}}\in\R^m$.
Throughout the proof, $\bm{a}_0\in\R^n$ is the variable in the density.
DDIS approximates the denoising conditional density by
\begin{equation}
    q_D(\bm{a}_0\mid \bm{a}_{t_k}^{(D)})
    =
    \frac{1}{(2\pi\sigma_{t_k}^2)^{n/2}}
    \exp\!\left\{
        -\frac{\|\bm{a}_0-D_{\bm{\theta}}(\bm{a}_{t_k}^{(D)},t_k)\|^2}{2\sigma_{t_k}^2}
    \right\}.
    \label{eq:ddis-denoising-density}
\end{equation}
The observation likelihood is
\begin{equation}
    p(\bm{u}_{\mathrm{obs}}\mid \bm{a}_0)
    =
    \frac{1}{(2\pi\gamma^2)^{m/2}}
    \exp\!\left\{
        -\frac{\|\bm{A}\bm{a}_0-\bm{u}_{\mathrm{obs}}\|^2}{2\gamma^2}
    \right\}.
    \label{eq:ddis-likelihood-density}
\end{equation}
DDIS defines the following density
\begin{equation}
    q_{D,t_k}(\bm{a}_0\mid \bm{a}_{t_k}^{(D)},\bm{u}_{\mathrm{obs}})
    :=
    \frac{
        p(\bm{u}_{\mathrm{obs}}\mid \bm{a}_0)\,q_D(\bm{a}_0\mid \bm{a}_{t_k}^{(D)})
    }{
        \displaystyle\int_{\R^n}
        p(\bm{u}_{\mathrm{obs}}\mid \bm{x})\,q_D(\bm{x}\mid \bm{a}_{t_k}^{(D)})\,d\bm{x}
    }.
    \label{eq:ddis-normalized-definition}
\end{equation}
The denominator is strictly positive because its integrand is
everywhere positive. It is finite because
$p(\bm{u}_{\mathrm{obs}}\mid \bm{x})\le(2\pi\gamma^2)^{-m/2}$
and $q_D(\cdot\mid \bm{a}_{t_k}^{(D)})$ integrates to one. Thus
\cref{eq:ddis-normalized-definition} is well-defined.

Substituting
\cref{eq:ddis-denoising-density,eq:ddis-likelihood-density}
into \cref{eq:ddis-normalized-definition} gives
\begin{equation}
    q_{D,t_k}(\bm{a}_0\mid \bm{a}_{t_k}^{(D)},\bm{u}_{\mathrm{obs}})
    =
    \frac{
        \displaystyle
        \frac{
            \exp\!\left\{
                -\frac{\|\bm{A}\bm{a}_0-\bm{u}_{\mathrm{obs}}\|^2}{2\gamma^2}
                -\frac{\|\bm{a}_0-D_{\bm{\theta}}(\bm{a}_{t_k}^{(D)},t_k)\|^2}{2\sigma_{t_k}^2}
            \right\}
        }{
            (2\pi\gamma^2)^{m/2}(2\pi\sigma_{t_k}^2)^{n/2}
        }
    }{
        \displaystyle
        \frac{
            \int_{\R^n}
            \exp\!\left\{
                -\frac{\|\bm{A}\bm{x}-\bm{u}_{\mathrm{obs}}\|^2}{2\gamma^2}
                -\frac{\|\bm{x}-D_{\bm{\theta}}(\bm{a}_{t_k}^{(D)},t_k)\|^2}{2\sigma_{t_k}^2}
            \right\}\,d\bm{x}
        }{
            (2\pi\gamma^2)^{m/2}(2\pi\sigma_{t_k}^2)^{n/2}
        }
    }.
    \label{eq:ddis-full-density-ratio}
\end{equation}

We next complete the square in the exponent. The matrix
$\bm{A}^\top \bm{A}/\gamma^2+\bm{I}/\sigma_{t_k}^2$ is symmetric, and for every
nonzero $\bm{x}\in\R^n$,
\begin{equation}
    \bm{x}^\top
    \left(\frac{\bm{A}^\top \bm{A}}{\gamma^2}
                 +\frac{\bm{I}}{\sigma_{t_k}^2}\right)\bm{x}
    =
    \frac{\|\bm{A}\bm{x}\|^2}{\gamma^2}
    +\frac{\|\bm{x}\|^2}{\sigma_{t_k}^2}
    >0.
    \label{eq:ddis-positive-precision}
\end{equation}
Hence it is positive definite and invertible, so
$\bm{\Sigma}_{\mathrm D,t_k}$ in \cref{lemma:ddis-projected-mean} is
well-defined and positive definite. By definition, we have 
\begin{equation}
    \bm{\Sigma}_{\mathrm D,t_k}^{-1}\bar{\bm{a}}_{\mathrm D,t_{k-1}}
    =
    \frac{\bm{A}^\top \bm{u}_{\mathrm{obs}}}{\gamma^2}
    +\frac{D_{\bm{\theta}}(\bm{a}_{t_k}^{(D)},t_k)}{\sigma_{t_k}^2}.
    \label{eq:ddis-mean-linear-system}
\end{equation}

Expanding the two squared norms and using
\cref{eq:ddis-mean-linear-system} yields
\begin{equation}
\begin{aligned}
    &\frac{\|\bm{A}\bm{a}_0-\bm{u}_{\mathrm{obs}}\|^2}{2\gamma^2}
    +\frac{\|\bm{a}_0-D_{\bm{\theta}}(\bm{a}_{t_k}^{(D)},t_k)\|^2}{2\sigma_{t_k}^2}
    \\
    &=
    \frac{1}{2\gamma^2}
    (\bm{A}\bm{a}_0-\bm{u}_{\mathrm{obs}})^\top
    (\bm{A}\bm{a}_0-\bm{u}_{\mathrm{obs}})
    \\
    &\qquad
    +\frac{1}{2\sigma_{t_k}^2}
    (\bm{a}_0-D_{\bm{\theta}}(\bm{a}_{t_k}^{(D)},t_k))^\top
    (\bm{a}_0-D_{\bm{\theta}}(\bm{a}_{t_k}^{(D)},t_k))
    \\
    &=
    \frac{1}{2\gamma^2}
    \left(
        \bm{a}_0^\top\bm{A}^\top\bm{A}\bm{a}_0
        -2\bm{a}_0^\top\bm{A}^\top\bm{u}_{\mathrm{obs}}
        +\|\bm{u}_{\mathrm{obs}}\|^2
    \right)
    \\
    &\qquad
    +\frac{1}{2\sigma_{t_k}^2}
    \left(
        \bm{a}_0^\top\bm{a}_0
        -2\bm{a}_0^\top D_{\bm{\theta}}(\bm{a}_{t_k}^{(D)},t_k)
        +\|D_{\bm{\theta}}(\bm{a}_{t_k}^{(D)},t_k)\|^2
    \right)
    \\
    &=
    \frac12\bm{a}_0^\top
    \left(
        \frac{\bm{A}^\top\bm{A}}{\gamma^2}
        +\frac{\bm{I}}{\sigma_{t_k}^2}
    \right)
    \bm{a}_0
    \\
    &\qquad
    -\bm{a}_0^\top
    \left(
        \frac{\bm{A}^\top\bm{u}_{\mathrm{obs}}}{\gamma^2}
        +\frac{D_{\bm{\theta}}(\bm{a}_{t_k}^{(D)},t_k)}{\sigma_{t_k}^2}
    \right)
    +\frac{\|\bm{u}_{\mathrm{obs}}\|^2}{2\gamma^2}
    +\frac{\|D_{\bm{\theta}}(\bm{a}_{t_k}^{(D)},t_k)\|^2}{2\sigma_{t_k}^2}
    \\
    &=
    \frac12\bm{a}_0^\top\bm{\Sigma}_{\mathrm D,t_k}^{-1}\bm{a}_0
    -\bm{a}_0^\top
    \left(
        \frac{\bm{A}^\top\bm{u}_{\mathrm{obs}}}{\gamma^2}
        +\frac{D_{\bm{\theta}}(\bm{a}_{t_k}^{(D)},t_k)}{\sigma_{t_k}^2}
    \right)
    +\frac{\|\bm{u}_{\mathrm{obs}}\|^2}{2\gamma^2}
    +\frac{\|D_{\bm{\theta}}(\bm{a}_{t_k}^{(D)},t_k)\|^2}{2\sigma_{t_k}^2}
    \\
    &=
    \frac12 \bm{a}_0^\top\bm{\Sigma}_{\mathrm D,t_k}^{-1}\bm{a}_0
    -\bm{a}_0^\top\bm{\Sigma}_{\mathrm D,t_k}^{-1}\bar{\bm{a}}_{\mathrm D,t_{k-1}}
    +\frac{\|\bm{u}_{\mathrm{obs}}\|^2}{2\gamma^2}
    +\frac{\|D_{\bm{\theta}}(\bm{a}_{t_k}^{(D)},t_k)\|^2}{2\sigma_{t_k}^2}.
\end{aligned}
\label{eq:ddis-energy-expansion}
\end{equation}

Symmetry of $\bm{\Sigma}_{\mathrm D,t_k}^{-1}$ gives
\begin{equation}
\begin{aligned}
    &\frac12(\bm{a}_0-\bar{\bm{a}}_{\mathrm D,t_{k-1}})^\top
        \bm{\Sigma}_{\mathrm D,t_k}^{-1}
        (\bm{a}_0-\bar{\bm{a}}_{\mathrm D,t_{k-1}})
    \\
    &=
    \frac12
    (\bm{a}_0^\top-\bar{\bm{a}}_{\mathrm D,t_{k-1}}^\top)
    \bm{\Sigma}_{\mathrm D,t_k}^{-1}
    (\bm{a}_0-\bar{\bm{a}}_{\mathrm D,t_{k-1}})
    \\
    &=
    \frac12\Big(
        \bm{a}_0^\top\bm{\Sigma}_{\mathrm D,t_k}^{-1}\bm{a}_0
        -\bm{a}_0^\top\bm{\Sigma}_{\mathrm D,t_k}^{-1}\bar{\bm{a}}_{\mathrm D,t_{k-1}}
        -\bar{\bm{a}}_{\mathrm D,t_{k-1}}^\top
            \bm{\Sigma}_{\mathrm D,t_k}^{-1}\bm{a}_0
        +\bar{\bm{a}}_{\mathrm D,t_{k-1}}^\top
            \bm{\Sigma}_{\mathrm D,t_k}^{-1}\bar{\bm{a}}_{\mathrm D,t_{k-1}}
    \Big)
    \\
    &=
    \frac12\Big(
        \bm{a}_0^\top\bm{\Sigma}_{\mathrm D,t_k}^{-1}\bm{a}_0
        -2\bm{a}_0^\top
            \bm{\Sigma}_{\mathrm D,t_k}^{-1}\bar{\bm{a}}_{\mathrm D,t_{k-1}}
        +\bar{\bm{a}}_{\mathrm D,t_{k-1}}^\top
            \bm{\Sigma}_{\mathrm D,t_k}^{-1}\bar{\bm{a}}_{\mathrm D,t_{k-1}}
    \Big)
    \\
    &=
    \frac12\bm{a}_0^\top\bm{\Sigma}_{\mathrm D,t_k}^{-1}\bm{a}_0
    -\bm{a}_0^\top\bm{\Sigma}_{\mathrm D,t_k}^{-1}\bar{\bm{a}}_{\mathrm D,t_{k-1}}
    +\frac12\bar{\bm{a}}_{\mathrm D,t_{k-1}}^\top
        \bm{\Sigma}_{\mathrm D,t_k}^{-1}\bar{\bm{a}}_{\mathrm D,t_{k-1}}.
\end{aligned}
\label{eq:ddis-square-expansion}
\end{equation}
Define 
\begin{equation}
    c_{\mathrm D,t_k}
    :=
    \frac{\|\bm{u}_{\mathrm{obs}}\|^2}{2\gamma^2}
    +\frac{\|D_{\bm{\theta}}(\bm{a}_{t_k}^{(D)},t_k)\|^2}{2\sigma_{t_k}^2}
    -\frac12\bar{\bm{a}}_{\mathrm D,t_{k-1}}^\top
        \bm{\Sigma}_{\mathrm D,t_k}^{-1}\bar{\bm{a}}_{\mathrm D,t_{k-1}}.
    \label{eq:ddis-constant-remainder}
\end{equation}
Incorporating \cref{eq:ddis-square-expansion} into \cref{eq:ddis-energy-expansion}
yields
\begin{equation}
    \frac{\|\bm{A}\bm{a}_0-\bm{u}_{\mathrm{obs}}\|^2}{2\gamma^2}
    +\frac{\|\bm{a}_0-D_{\bm{\theta}}(\bm{a}_{t_k}^{(D)},t_k)\|^2}{2\sigma_{t_k}^2}
    =
    \frac12(\bm{a}_0-\bar{\bm{a}}_{\mathrm D,t_{k-1}})^\top
        \bm{\Sigma}_{\mathrm D,t_k}^{-1}(\bm{a}_0-\bar{\bm{a}}_{\mathrm D,t_{k-1}})
    +c_{\mathrm D,t_k}.
    \label{eq:ddis-completed-square}
\end{equation}
The scalar $c_{\mathrm D,t_k}$ is independent of $\bm{a}_0$.
Since \cref{eq:ddis-completed-square} holds for every $\bm{a}_0\in\R^n$,
it also holds with the integration variable $\bm{x}$ in place of $\bm{a}_0$,
with the same scalar $c_{\mathrm D,t_k}$. Consequently,
\cref{eq:ddis-full-density-ratio} becomes
\begin{equation}
\begin{aligned}
    q_{D,t_k}(\bm{a}_0\mid \bm{a}_{t_k}^{(D)},\bm{u}_{\mathrm{obs}})
    &=
    \frac{
        e^{-c_{\mathrm D,t_k}}
        \exp\!\left\{
            -\frac12(\bm{a}_0-\bar{\bm{a}}_{\mathrm D,t_{k-1}})^\top
            \bm{\Sigma}_{\mathrm D,t_k}^{-1}(\bm{a}_0-\bar{\bm{a}}_{\mathrm D,t_{k-1}})
        \right\}
    }{
        e^{-c_{\mathrm D,t_k}}
        \displaystyle\int_{\R^n}
        \exp\!\left\{
            -\frac12(\bm{x}-\bar{\bm{a}}_{\mathrm D,t_{k-1}})^\top
            \bm{\Sigma}_{\mathrm D,t_k}^{-1}(\bm{x}-\bar{\bm{a}}_{\mathrm D,t_{k-1}})
        \right\}\,d\bm{x}
    }
    \\
    &=
    \frac{
        \exp\!\left\{
            -\frac12(\bm{a}_0-\bar{\bm{a}}_{\mathrm D,t_{k-1}})^\top
            \bm{\Sigma}_{\mathrm D,t_k}^{-1}(\bm{a}_0-\bar{\bm{a}}_{\mathrm D,t_{k-1}})
        \right\}
    }{
        \displaystyle\int_{\R^n}
        \exp\!\left\{
            -\frac12(\bm{x}-\bar{\bm{a}}_{\mathrm D,t_{k-1}})^\top
            \bm{\Sigma}_{\mathrm D,t_k}^{-1}(\bm{x}-\bar{\bm{a}}_{\mathrm D,t_{k-1}})
        \right\}\,d\bm{x}
    }.
\end{aligned}
\label{eq:ddis-constant-cancellation}
\end{equation}

To evaluate the remaining integral, the spectral theorem provides
the symmetric positive definite square root
$\bm{\Sigma}_{\mathrm D,t_k}^{1/2}$. Define $\bm{z} = \bm{\Sigma}^{-\frac{1}{2}}(\bm{x} - \bar{\bm{a}}_{D, t_{k-1}})$, i.e., $\bm{x} = \bar{\bm{a}}_{D, t_{k-1}} + \bm{\Sigma}^{\frac{1}{2}} \bm{z}$. The map
$\bm{z}\mapsto\bar{\bm{a}}_{\mathrm D,t_{k-1}}+\bm{\Sigma}_{\mathrm D,t_k}^{1/2}\bm{z}$
is a continuously differentiable bijection of $\R^n$ with
continuously differentiable inverse and Jacobian determinant
$(\det\bm{\Sigma}_{\mathrm D,t_k})^{1/2}>0$.
The change of variables theorem therefore gives
\begin{equation}
\begin{aligned}
    &\int_{\R^n}
    \exp\!\left\{
        -\frac12(\bm{x}-\bar{\bm{a}}_{\mathrm D,t_{k-1}})^\top
        \bm{\Sigma}_{\mathrm D,t_k}^{-1}(\bm{x}-\bar{\bm{a}}_{\mathrm D,t_{k-1}})
    \right\}\,d\bm{x}
    \\
    &=
    (\det\bm{\Sigma}_{\mathrm D,t_k})^{1/2}
    \int_{\R^n}e^{-\|\bm{z}\|^2/2}\,d\bm{z}
    \\
    &=
    (\det\bm{\Sigma}_{\mathrm D,t_k})^{1/2}(2\pi)^{n/2},
\end{aligned}
\label{eq:ddis-gaussian-integral}
\end{equation}
by Tonelli's theorem and the
 Gaussian integral
$\int_{\R}e^{-r^2/2}\,dr=\sqrt{2\pi}$.
Substituting \cref{eq:ddis-gaussian-integral} into
\cref{eq:ddis-constant-cancellation} yields
\begin{equation}
    q_{D,t_k}(\bm{a}_0\mid \bm{a}_{t_k}^{(D)},\bm{u}_{\mathrm{obs}})
    =
    \frac{
        \exp\!\left\{
            -\frac12(\bm{a}_0-\bar{\bm{a}}_{\mathrm D,t_{k-1}})^\top
            \bm{\Sigma}_{\mathrm D,t_k}^{-1}(\bm{a}_0-\bar{\bm{a}}_{\mathrm D,t_{k-1}})
        \right\}
    }{
        (2\pi)^{n/2}(\det\bm{\Sigma}_{\mathrm D,t_k})^{1/2}
    }.
    \label{eq:ddis-normalized-density}
\end{equation}
This is exactly the density of
$\mathcal N(\bar{\bm{a}}_{\mathrm D,t_{k-1}},\bm{\Sigma}_{\mathrm D,t_k})$, as desired. 

\end{proof}

\begin{lemma}[Orthogonal projection onto $\ker \bm{A}$]
\label{lemma:projection-identities}
For every $\bm{u}\in\R^m$ and $\bm{a}\in\R^n$,
\begin{equation}
    \bm{P}_{\ker \bm{A}}(\bm{A}^\top \bm{u})=\bm{0},
    \qquad
    \|\bm{P}_{\ker \bm{A}}\bm{a}\|_2\le\|\bm{a}\|_2.
    \label{eq:projection-identities}
\end{equation}
\end{lemma}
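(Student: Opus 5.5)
Both claims follow from the fact that $\bm{P}_{\ker \bm{A}}$ is the orthogonal projection onto $\ker\bm{A}$. I will use its two defining properties. First, it is symmetric and idempotent, $\bm{P}_{\ker \bm{A}}^\top=\bm{P}_{\ker \bm{A}}=\bm{P}_{\ker \bm{A}}^2$. Second, its range is exactly $\ker\bm{A}$, and $\bm{I}-\bm{P}_{\ker \bm{A}}$ is the orthogonal projection onto the orthogonal complement $(\ker\bm{A})^\perp$.

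For the first identity, I use the standard fact $(\ker \bm{A})^\perp=\operatorname{range}(\bm{A}^\top)$. It suffices to show $\bm{A}^\top\bm{u}\perp\ker\bm{A}$: for any $\bm{x}\in\ker\bm{A}$,
\[
\langle \bm{x},\bm{A}^\top\bm{u}\rangle=\langle \bm{A}\bm{x},\bm{u}\rangle=0 .
\]
Hence $\bm{A}^\top\bm{u}\in(\ker\bm{A})^\perp$. An orthogonal projection onto a subspace annihilates that subspace's orthogonal complement, so $\bm{P}_{\ker \bm{A}}(\bm{A}^\top \bm{u})=\bm{0}$.

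Concretely, write $\bm{w}=\bm{P}_{\ker \bm{A}}\bm{A}^\top\bm{u}$. Then $\bm{w}\in\ker\bm{A}$. Using symmetry and idempotence,
\[
\|\bm{w}\|^2=\langle \bm{P}_{\ker \bm{A}}\bm{w},\bm{A}^\top\bm{u}\rangle=\langle \bm{w},\bm{A}^\top\bm{u}\rangle=\langle \bm{A}\bm{w},\bm{u}\rangle=0 .
\]

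For the norm bound, decompose $\bm{a}=\bm{P}_{\ker \bm{A}}\bm{a}+(\bm{I}-\bm{P}_{\ker \bm{A}})\bm{a}$. The two pieces are orthogonal, since
\[
\langle \bm{P}_{\ker \bm{A}}\bm{a},(\bm{I}-\bm{P}_{\ker \bm{A}})\bm{a}\rangle=\bm{a}^\top(\bm{P}_{\ker \bm{A}}-\bm{P}_{\ker \bm{A}}^2)\bm{a}=0 .
\]
Pythagoras then gives
\[
\|\bm{a}\|_2^2=\|\bm{P}_{\ker \bm{A}}\bm{a}\|_2^2+\|(\bm{I}-\bm{P}_{\ker \bm{A}})\bm{a}\|_2^2\ge\|\bm{P}_{\ker \bm{A}}\bm{a}\|_2^2 .
\]

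There is no real obstacle here. The only point needing care is to fix what $\bm{P}_{\ker\bm{A}}$ means: the orthogonal projection, which is symmetric and idempotent. If an explicit formula is wanted, one can take $\bm{P}_{\ker \bm{A}}=\bm{I}-\bm{A}^{+}\bm{A}$ with $\bm{A}^+$ the Moore--Penrose pseudoinverse. In that case the first identity can also be checked directly from $\bm{A}^+\bm{A}\bm{A}^\top=\bm{A}^\top$.
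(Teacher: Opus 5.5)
Your proposal is correct and follows essentially the paper's own proof. You show $\bm{A}^\top\bm{u}\in(\ker\bm{A})^\perp$ via $\langle \bm{A}^\top\bm{u},\bm{z}\rangle=\langle\bm{u},\bm{A}\bm{z}\rangle=0$, which handles the first identity, and you get the norm bound from the orthogonal decomposition $\bm{a}=\bm{P}_{\ker\bm{A}}\bm{a}+(\bm{I}-\bm{P}_{\ker\bm{A}})\bm{a}$ and Pythagoras. Your additional symmetric/idempotent computation and the pseudoinverse remark are valid but not needed.
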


\begin{proof}
For every $\bm{u}\in\R^m$ and $\bm{z}\in\ker \bm{A}$,
\begin{equation}
    \langle \bm{A}^\top \bm{u},\bm{z}\rangle=\langle \bm{u},\bm{A}\bm{z}\rangle=0.
\end{equation}
Thus $\bm{A}^\top \bm{u}\in(\ker \bm{A})^\perp$, so
$\bm{P}_{\ker \bm{A}}(\bm{A}^\top \bm{u})=\bm{0}$.

For every $\bm{a}\in\R^n$, the vectors $\bm{P}_{\ker \bm{A}}\bm{a}$ and
$\bm{a}-\bm{P}_{\ker \bm{A}}\bm{a}$ are orthogonal with $\bm{a} = \bm{P}_{\ker \bm{A}}\bm{a} + (\bm{I}-\bm{P}_{\ker \bm{A}})\bm{a}$. Hence
\begin{equation}
    \|\bm{a}\|_2^2
    =\|\bm{P}_{\ker \bm{A}}\bm{a}\|_2^2+\|\bm{a}-\bm{P}_{\ker \bm{A}}\bm{a}\|_2^2
    \ge\|\bm{P}_{\ker \bm{A}}\bm{a}\|_2^2,
\end{equation}
which gives $\|\bm{P}_{\ker \bm{A}}\bm{a}\|_2\le\|\bm{a}\|_2$ as desired. 
\end{proof}

\begin{lemma}[Exact denoiser under $\pi_0$]
\label{lemma:two-point-denoiser}
For every $\bm{v}\in\R^n$,
\begin{equation}
 D^*(\bm{v},t_k)=\bm{a}_-+\mathbb{P}(\bm{a}_0=\bm{a}_+\mid \bm{a}_{t_k}=\bm{v})(\bm{a}_+-\bm{a}_-),
 \label{eq:two-point-denoiser}
\end{equation}
where
\begin{equation}
 \mathbb{P}(\bm{a}_0=\bm{a}_+\mid \bm{a}_{t_k}=\bm{v})
 =\left(1+\exp\left\{
 -\frac{(\bm{a}_+-\bm{a}_-)^\top \bm{K}^{-1}(\bm{v}-(\bm{a}_-+\bm{a}_+)/2)}{\sigma_{t_k}^2}
 \right\}\right)^{-1}
 \label{eq:denoiser-weight}
\end{equation}
is well-defined. 
\end{lemma}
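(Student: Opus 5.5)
The plan is a direct Bayes computation under the two-point prior $\pi(\bm{a}_0)=\tfrac12\delta_{\bm{a}_-}+\tfrac12\delta_{\bm{a}_+}$ and the forward kernel $\bm{a}_{t_k}\mid\bm{a}_0\sim\mathcal N(\bm{a}_0,\sigma_{t_k}^2\bm{K})$ from \cref{sec:function_space_diffusion}.

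First, we establish that the posterior weight is well-defined. Since $\bm{K}$ is positive definite, the marginal law of $\bm{a}_{t_k}$ is the Gaussian mixture
\[
p_{t_k}(\bm{v})=\tfrac12\phi_+(\bm{v})+\tfrac12\phi_-(\bm{v}),
\qquad
\phi_\pm(\bm{v}) := \mathcal N(\bm{v};\bm{a}_\pm,\sigma_{t_k}^2\bm{K}).
\]
This density is smooth and strictly positive on $\R^n$. Hence the regular conditional probability can be defined pointwise for every $\bm{v}$ by
\[
k_{t_k}(\bm{v})=\frac{\phi_+(\bm{v})}{\phi_+(\bm{v})+\phi_-(\bm{v})}.
\]
To verify that this is a version of $\mathbb{P}(\bm{a}_0=\bm{a}_+\mid\bm{a}_{t_k})$, we check the defining identity $\mathbb{E}[\mathds{1}_{\{\bm{a}_0=\bm{a}_+\}}\mathds{1}_{B}(\bm{a}_{t_k})]=\int_B k_{t_k}\,p_{t_k}$ for Borel sets $B$. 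This follows by conditioning on the two atoms of the prior and applying Fubini.

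Second, we compute the ratio. Dividing numerator and denominator by $\phi_+$ gives $k_{t_k}=(1+\phi_-/\phi_+)^{-1}$. The normalizing constants cancel because both components share the covariance $\sigma_{t_k}^2\bm{K}$. Expanding the quadratic forms, with $\bm{d}=\bm{a}_+-\bm{a}_-$, yields
\[
(\bm{v}-\bm{a}_-)^\top\bm{K}^{-1}(\bm{v}-\bm{a}_-)-(\bm{v}-\bm{a}_+)^\top\bm{K}^{-1}(\bm{v}-\bm{a}_+)
=2\,\bm{d}^\top\bm{K}^{-1}\bigl(\bm{v}-(\bm{a}_++\bm{a}_-)/2\bigr).
\]
This uses the symmetry of $\bm{K}^{-1}$ and the identity $\bm{a}_+^\top\bm{K}^{-1}\bm{a}_+-\bm{a}_-^\top\bm{K}^{-1}\bm{a}_-=\bm{d}^\top\bm{K}^{-1}(\bm{a}_++\bm{a}_-)$. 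Therefore
\[
\phi_-/\phi_+=\exp\{-\bm{d}^\top\bm{K}^{-1}(\bm{v}-(\bm{a}_-+\bm{a}_+)/2)/\sigma_{t_k}^2\},
\]
which is exactly \cref{eq:denoiser-weight}.

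Third, since $D^*(\bm{v},t_k)=\mathbb{E}[\bm{a}_0\mid\bm{a}_{t_k}=\bm{v}]$ and $\bm{a}_0$ takes only the values $\bm{a}_\pm$,
\[
D^*(\bm{v},t_k)=k_{t_k}(\bm{v})\,\bm{a}_++(1-k_{t_k}(\bm{v}))\,\bm{a}_-=\bm{a}_-+k_{t_k}(\bm{v})\,\bm{d}.
\]
This is \cref{eq:two-point-denoiser}. As a consistency check against \cref{eq:tweedie-denoiser-appx}, one can differentiate $\log p_{t_k}$: the score is $\bm{K}^{-1}(\bm{a}_-+k_{t_k}\bm{d}-\bm{v})/\sigma_{t_k}^2$, and Tweedie's formula returns the same expression.

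The main subtlety is not the algebra. It is the well-definedness of the conditional probability pointwise for every $\bm{v}$, rather than merely almost everywhere, given that $\bm{a}_0$ has no density. The strict positivity and continuity of the mixture density resolve this, because they single out the continuous version of the conditional probability.
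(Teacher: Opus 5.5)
Your proposal is correct and follows essentially the same route as the paper: you define the posterior weight as the ratio of the $\bm{a}_+$ component density to the strictly positive mixture marginal, verify the defining identity of conditional probability on Borel sets, and cancel the shared Gaussian normalizers and expand the quadratic forms to get the logistic expression, after which $D^*$ is the two-point conditional mean. Your Tweedie-formula cross-check and the remark that positivity and continuity single out a canonical version go slightly beyond the paper, which instead explicitly constructs the regular conditional distribution kernel; neither changes the argument.
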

\begin{proof}
For $\bm{v}\in\R^n$, by definition of the diffusion forward process, we have that 
\begin{equation}
 p(\bm{a}_{t_k}=\bm{v}\mid \bm{a}_0=\bm{a}_\pm)
 =\frac{\exp\{-(\bm{v}-\bm{a}_\pm)^\top \bm{K}^{-1}(\bm{v}-\bm{a}_\pm)/(2\sigma_{t_k}^2)\}}
 {(2\pi)^{n/2}\sigma_{t_k}^n(\det \bm{K})^{1/2}}>0.
 \label{eq:lemma-component-densities}
\end{equation}

By the law of total probability,
\begin{equation}
 p(\bm{a}_{t_k}=\bm{v})=\tfrac12p(\bm{a}_{t_k}=\bm{v}\mid \bm{a}_0=\bm{a}_-)
          +\tfrac12p(\bm{a}_{t_k}=\bm{v}\mid \bm{a}_0=\bm{a}_+)>0.
 \label{eq:noisy-marginal-density}
\end{equation}
Define
\begin{equation}
 k_{t_k}(\bm{v}):=\frac{\tfrac12p(\bm{a}_{t_k}=\bm{v}\mid \bm{a}_0=\bm{a}_+)}{p(\bm{a}_{t_k}=\bm{v})}.
 \label{eq:conditional-weight-definition}
\end{equation}
This function is Borel measurable and takes values in $(0,1)$.
Thus $k_{t_k}(\bm{a}_{t_k})$ is integrable and $\sigma(\bm{a}_{t_k})$-measurable.
For every Borel set $B\subseteq\R^n$,
\begin{equation}
\begin{aligned}
 \E[k_{t_k}(\bm{a}_{t_k})\mathbf1_{\{\bm{a}_{t_k}\in B\}}]
 &=\int_B k_{t_k}(\bm{v})p(\bm{a}_{t_k}=\bm{v})\,d\bm{v}\\
 &=\frac12\int_B p(\bm{a}_{t_k}=\bm{v}\mid \bm{a}_0=\bm{a}_+)\,d\bm{v}\\
 &= \mathbb{P}(\bm{a}_0 = \bm{a}_+)\mathbb{P}(\bm{a}_{t_k} \in B | \bm{a}_0 = \bm{a}_+)\\
 &=\mathbb{P}(\bm{a}_0=\bm{a}_+,\bm{a}_{t_k}\in B)\\
 &=\E[\mathbf1_{\{\bm{a}_0=\bm{a}_+\}}\mathbf1_{\{\bm{a}_{t_k}\in B\}}].
\end{aligned}
\label{eq:conditional-weight-verification}
\end{equation}

Since the events $\{\bm{a}_{t_k}\in B\}$ form $\sigma(\bm{a}_{t_k})$, by the definition of conditional expectation this verifies
$k_{t_k}(\bm{a}_{t_k})
=\E[\mathbf{1}_{\{\bm{a}_0=\bm{a}_+\}}\mid \sigma(\bm{a}_{t_k})]
=\mathbb{P}(\bm{a}_0=\bm{a}_+\mid \sigma(\bm{a}_{t_k}))$
almost surely. Define $Q(\cdot,\cdot)$ by
$Q(\bm{v},C)
=
k_{t_k}(\bm{v})\mathbf{1}_{\{\bm{a}_+\in C\}}
+
(1-k_{t_k}(\bm{v}))\mathbf{1}_{\{\bm{a}_-\in C\}}$
for $\bm{v}\in\mathbb{R}^n$ and Borel set $C\subset\mathbb{R}^n$.
For every fixed $\bm{v}$, the map $C\mapsto Q(\bm{v},C)$ is a probability measure on $\mathbb{R}^n$, and for every Borel set $C\subset\mathbb{R}^n$, the map $\bm{v}\mapsto Q(\bm{v},C)$ is Borel measurable. Moreover, since $\bm{a}_0$ takes values only in $\{\bm{a}_+,\bm{a}_-\}$,
\begin{align*}
\E[\mathbf{1}_{\{\bm{a}_0\in C\}}\mid \sigma(\bm{a}_{t_k})]
&=
\mathbf{1}_{\{\bm{a}_+\in C\}}
\E[\mathbf{1}_{\{\bm{a}_0=\bm{a}_+\}}\mid \sigma(\bm{a}_{t_k})]
\\
&\qquad
+
\mathbf{1}_{\{\bm{a}_-\in C\}}
\E[\mathbf{1}_{\{\bm{a}_0=\bm{a}_-\}}\mid \sigma(\bm{a}_{t_k})]
\\
&=
\mathbf{1}_{\{\bm{a}_+\in C\}}k_{t_k}(\bm{a}_{t_k})
+
\mathbf{1}_{\{\bm{a}_-\in C\}}(1-k_{t_k}(\bm{a}_{t_k}))
\\
&=
Q(\bm{a}_{t_k},C)
\end{align*}
almost surely, since $\bm{1}_{\{\bm{a}_0 = \bm{a}_-\}} = 1 - \bm{1}_{\{\bm{a}_0 = \bm{a}_+\}}$. Hence $Q$ is a regular conditional distribution of $\bm{a}_0$ given $\bm{a}_{t_k}$. In particular,
$\mathbb{P}(\bm{a}_0=\bm{a}_+\mid \bm{a}_{t_k}=\bm{v})
:=Q(\bm{v},\{\bm{a}_+\})
=k_{t_k}(\bm{v})$ is well-defined
for every $\bm{v}\in\mathbb{R}^n$.

Incorporating \cref{eq:lemma-component-densities,eq:noisy-marginal-density} into
\cref{eq:conditional-weight-definition} yields 
\begin{equation}
\begin{aligned}
 k_{t_k}(\bm{v})
 &=\left(1+\frac{p(\bm{a}_{t_k}=\bm{v}\mid \bm{a}_0=\bm{a}_-)}{p(\bm{a}_{t_k}=\bm{v}\mid \bm{a}_0=\bm{a}_+)}\right)^{-1}\\
 &=\left(1+\exp\left\{
 \frac{(\bm{v}-\bm{a}_+)^\top \bm{K}^{-1}(\bm{v}-\bm{a}_+)
       -(\bm{v}-\bm{a}_-)^\top \bm{K}^{-1}(\bm{v}-\bm{a}_-)}{2\sigma_{t_k}^2}
 \right\}\right)^{-1}.
\end{aligned}
\label{eq:conditional-weight-bayes}
\end{equation}

By symmetry of $\bm{K}^{-1}$,
\begin{equation}
\begin{aligned}
    &(\bm{v}-\bm{a}_+)^\top \bm{K}^{-1}(\bm{v}-\bm{a}_+)
    -(\bm{v}-\bm{a}_-)^\top \bm{K}^{-1}(\bm{v}-\bm{a}_-)
    \\
    &=
    \left(
        \bm{v}^\top\bm{K}^{-1}\bm{v}
        -\bm{v}^\top\bm{K}^{-1}\bm{a}_+
        -\bm{a}_+^\top\bm{K}^{-1}\bm{v}
        +\bm{a}_+^\top\bm{K}^{-1}\bm{a}_+
    \right)
    \\
    &\qquad
    -
    \left(
        \bm{v}^\top\bm{K}^{-1}\bm{v}
        -\bm{v}^\top\bm{K}^{-1}\bm{a}_-
        -\bm{a}_-^\top\bm{K}^{-1}\bm{v}
        +\bm{a}_-^\top\bm{K}^{-1}\bm{a}_-
    \right)
    \\
    &=
    -2\bm{v}^\top\bm{K}^{-1}\bm{a}_+
    +2\bm{v}^\top\bm{K}^{-1}\bm{a}_-
    +\bm{a}_+^\top\bm{K}^{-1}\bm{a}_+
    -\bm{a}_-^\top\bm{K}^{-1}\bm{a}_-
    \\
    &=
    -2\bm{v}^\top\bm{K}^{-1}(\bm{a}_+-\bm{a}_-)
    +\bm{a}_+^\top\bm{K}^{-1}\bm{a}_+
    -\bm{a}_-^\top\bm{K}^{-1}\bm{a}_-
    \\
    &=
    -2\bm{v}^\top\bm{K}^{-1}(\bm{a}_+-\bm{a}_-)
    +(\bm{a}_+-\bm{a}_-)^\top
        \bm{K}^{-1}(\bm{a}_++\bm{a}_-)
    \\
    &=
    -2(\bm{a}_+-\bm{a}_-)^\top\bm{K}^{-1}\bm{v}
    +(\bm{a}_+-\bm{a}_-)^\top
        \bm{K}^{-1}(\bm{a}_++\bm{a}_-)
    \\
    &=
    -2(\bm{a}_+-\bm{a}_-)^\top\bm{K}^{-1}
    \left(
        \bm{v}-\frac{\bm{a}_-+\bm{a}_+}{2}
    \right),
\end{aligned}
\label{eq:two-point-log-odds}
\end{equation}
proving \cref{eq:denoiser-weight}. 

Finally, since $\bm{a}_0$ takes only
the values $\bm{a}_-,\bm{a}_+$,
\begin{equation}
\begin{aligned}
 D^*(\bm{v},t_k)&=\E[\bm{a}_0\mid \bm{a}_{t_k}=\bm{v}]\\
 &=(1-k_{t_k}(\bm{v}))\bm{a}_-+k_{t_k}(\bm{v})\bm{a}_+\\
 &=\bm{a}_-+k_{t_k}(\bm{v})(\bm{a}_+-\bm{a}_-).
\end{aligned}
\label{eq:conditional-mean-expansion}
\end{equation}

Substitute $k_{t_k}(\bm{v})=\mathbb{P}(\bm{a}_0=\bm{a}_+\mid \bm{a}_{t_k}=\bm{v})$ to obtain
\cref{lemma:two-point-denoiser} as desired. 
\end{proof}

\begin{lemma}[Strict positivity of the conditional weights]
\label{lemma:positive-weights}
For every $\bm{v}\in\R^n$,
\begin{equation}
 0<\mathbb{P}(\bm{a}_0=\bm{a}_+\mid \bm{a}_{t_k}=\bm{v})<1,\qquad
 0<\mathbb{P}(\bm{a}_0=\bm{a}_-\mid \bm{a}_{t_k}=\bm{v})<1.
 \label{eq:positive-weights}
\end{equation}
\end{lemma}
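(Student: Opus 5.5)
The plan is to read off strict positivity directly from the explicit logistic form established in \cref{lemma:two-point-denoiser}. Fix $\bm{v}\in\R^n$ and set
\[
\ell(\bm{v}) := \frac{(\bm{a}_+-\bm{a}_-)^\top \bm{K}^{-1}\bigl(\bm{v}-(\bm{a}_-+\bm{a}_+)/2\bigr)}{\sigma_{t_k}^2}.
\]
This is a finite real number, since $\bm{K}$ is positive definite, hence invertible, and $\sigma_{t_k}\ge\sigma_{t_0}>0$. By \cref{eq:denoiser-weight} we have $\mathbb{P}(\bm{a}_0=\bm{a}_+\mid \bm{a}_{t_k}=\bm{v}) = (1+e^{-\ell(\bm{v})})^{-1}$.

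Since $e^{-\ell(\bm{v})}\in(0,\infty)$, the denominator $1+e^{-\ell(\bm{v})}$ lies strictly in $(1,\infty)$. Its reciprocal therefore lies strictly in $(0,1)$, which gives the first pair of inequalities.

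For the second pair, I will use that $\bm{a}_0$ is supported on $\{\bm{a}_+,\bm{a}_-\}$. The regular conditional distribution $Q(\bm{v},\cdot)$ constructed in the proof of \cref{lemma:two-point-denoiser} then gives
\[
\mathbb{P}(\bm{a}_0=\bm{a}_-\mid \bm{a}_{t_k}=\bm{v}) = Q(\bm{v},\{\bm{a}_-\}) = 1-k_{t_k}(\bm{v}).
\]
Here $\bm{a}_+\neq\bm{a}_-$ follows from the assumption $\bm{A}\bm{a}_+\neq\bm{A}\bm{a}_-$, so the two indicator terms in $Q$ do not collapse. Because $1-x\in(0,1)$ whenever $x\in(0,1)$, the claim follows. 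Equivalently, $1-k_{t_k}(\bm{v}) = (1+e^{\ell(\bm{v})})^{-1}$, which is positive by the same reasoning.

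As a fallback that avoids the closed form entirely, I could argue from the definition \cref{eq:conditional-weight-definition}. Both component densities in \cref{eq:lemma-component-densities} are strictly positive and finite, so their ratio to the mixture density lies strictly between $0$ and $1$.

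There is no real obstacle here. The only care needed is to check that everything is finite and well-defined: invertibility of $\bm{K}$ and $\sigma_{t_k}>0$, which the standing assumptions of \cref{sec:function_space_diffusion} provide, and $\bm{a}_+\neq\bm{a}_-$, so that the two atoms of $Q$ are distinct.
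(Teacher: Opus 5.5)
Your proposal is correct and essentially matches the paper's proof. The paper reads strict positivity directly off the ratio $p(\bm{a}_{t_k}=\bm{v}\mid\bm{a}_0=\bm{a}_\pm)/\bigl(p(\bm{a}_{t_k}=\bm{v}\mid\bm{a}_0=\bm{a}_+)+p(\bm{a}_{t_k}=\bm{v}\mid\bm{a}_0=\bm{a}_-)\bigr)$ of strictly positive Gaussian densities, which is your fallback and the same fact your logistic form $(1+e^{-\ell(\bm{v})})^{-1}$ encodes; your explicit use of $\bm{a}_+\neq\bm{a}_-$ to identify $Q(\bm{v},\{\bm{a}_-\})=1-k_{t_k}(\bm{v})$ is something the paper leaves implicit, and it is genuinely needed, since if $\bm{a}_+=\bm{a}_-$ the conditional probability would equal $1$.
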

\begin{proof}
By \cref{eq:lemma-component-densities}, both conditional densities
$p(\bm{a}_{t_k}=\bm{v}\mid \bm{a}_0=\bm{a}_+)$ and $p(\bm{a}_{t_k}=\bm{v}\mid \bm{a}_0=\bm{a}_-)$ are strictly positive.
Thus
\begin{equation}
 0<\frac{p(\bm{a}_{t_k}=\bm{v}\mid \bm{a}_0=\bm{a}_\pm)}
 {p(\bm{a}_{t_k}=\bm{v}\mid \bm{a}_0=\bm{a}_+)+p(\bm{a}_{t_k}=\bm{v}\mid \bm{a}_0=\bm{a}_-)}<1.
\end{equation}
By \cref{eq:conditional-weight-definition,eq:noisy-marginal-density},
these ratios are the two conditional probabilities, yielding \cref{eq:positive-weights} as desired. 
\end{proof}

\begin{lemma}[Global existence and uniqueness of the gradient flow]
\label{lemma:sitcom-existence}
For any current state $\bm{a}_{t_k}^{(B)}\in\R^n$, with the exact denoiser $D_{\bm{\theta}}(\cdot, \cdot) = D^*(\cdot, \cdot)$, the gradient flow in
\cref{eq:sitcom-input-flow}, initialized by
$\bm{v}(0)=\bm{a}_{t_k}^{(B)}$, has a solution $\bm{v}(s)$ that exists and is
unique for all $s\ge0$.
\end{lemma}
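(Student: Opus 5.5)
The plan is to write the right-hand side of the flow in \cref{eq:sitcom-input-flow} explicitly using \cref{lemma:two-point-denoiser}, show it is a globally Lipschitz (in fact bounded, smooth) vector field on $\R^n$, and invoke Picard--Lindelöf together with a no-blow-up argument. With the exact denoiser and exact linear operator $\bm{A}=\bm{OG}$, \cref{lemma:two-point-denoiser} gives $D^*(\bm{v},t_k)=\bm{a}_-+k_{t_k}(\bm{v})\bm{d}$ with $\bm{d}=\bm{a}_+-\bm{a}_-$ and $k_{t_k}(\bm{v})=\varsigma(\bm{b}^\top\bm{v}-c)$, where $\varsigma$ is the logistic function, $\bm{b}:=\bm{K}^{-1}\bm{d}/\sigma_{t_k}^2$ and $c:=\bm{b}^\top(\bm{a}_-+\bm{a}_+)/2$. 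Hence the objective is $\Phi(\bm{v})=\frac{1}{2\gamma^2}\|\bm{A}\bm{a}_--\bm{u}_{\mathrm{obs}}+w\,\bm{A}\bm{d}\|^2$ with $w=\varsigma(\bm{b}^\top\bm{v}-c)$, and by the chain rule
\begin{equation*}
-\nabla_{\bm{v}}\Phi(\bm{v})=-\frac{1}{\gamma^2}\,\varsigma'(\bm{b}^\top\bm{v}-c)\,\big(\bm{A}\bm{d}\big)^\top\big(\bm{A}\bm{a}_--\bm{u}_{\mathrm{obs}}+w\,\bm{A}\bm{d}\big)\,\bm{b}.
\end{equation*}
This is a scalar function $g(r)$ of $r=\bm{b}^\top\bm{v}$ times the fixed vector $\bm{b}$.

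Next I would show $g$ is $C^\infty$, bounded, and has bounded derivative on $\R$. Indeed $\varsigma,\varsigma',\varsigma''$ are bounded on $\R$ (by $1$, $1/4$, and $1/(6\sqrt3)$), and the affine factor in $w\in(0,1)$ is bounded by $\|\bm{A}\bm{a}_--\bm{u}_{\mathrm{obs}}\|+\|\bm{A}\bm{d}\|$. So $|g|\le C_0$ and $|g'|\le C_1$ for explicit constants. Consequently the vector field $F(\bm{v})=g(\bm{b}^\top\bm{v})\bm{b}$ satisfies $\|F(\bm{v})-F(\bm{v}')\|\le C_1\|\bm{b}\|^2\|\bm{v}-\bm{v}'\|$, so it is globally Lipschitz on $\R^n$. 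Picard--Lindelöf then gives a unique local solution from $\bm{v}(0)=\bm{a}_{t_k}^{(B)}$. Global Lipschitzness (or simply boundedness, $\|\bm{v}(s)-\bm{v}(0)\|\le C_0\|\bm{b}\|s$) rules out finite-time blow-up, so the maximal solution is defined on all of $[0,\infty)$. Uniqueness follows from Grönwall.

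I expect no serious obstacle. The only point needing care is verifying the uniform bounds on $g'$, which requires differentiating the product $\varsigma'(r)\,(\alpha+\beta\varsigma(r))$ and bounding each term with the bounded logistic derivatives. One could also remark that the flow reduces to the scalar ODE $\dot r=\|\bm{b}\|^2g(r)$ with $\bm{v}(s)=\bm{v}(0)+\bm{b}\,(r(s)-r(0))/\|\bm{b}\|^2$. That reduction gives an alternative proof and will be useful for \cref{lemma:sitcom-scalar-flow}. The case $\bm{b}=\bm{0}$ cannot occur because $\bm{d}\neq\bm{0}$ under the standing assumptions, though the argument would hold trivially there as well.
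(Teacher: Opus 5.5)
Your proof is correct and follows essentially the same route as the paper. Both arguments use \cref{lemma:two-point-denoiser} to write the vector field as a bounded scalar factor times the fixed vector $\bm{b}$, bound its derivative to obtain a global Lipschitz constant (the paper bounds the Hessian, which is a bounded scalar multiple of $\bm{b}\bm{b}^\top$; you bound $g'$), and then apply global Picard--Lindel\"of. The only difference is that the paper first substitutes $\bm{u}_{\mathrm{obs}}=\bm{A}\bm{a}_+$ to reduce the objective to $\frac{\|\bm{A}\bm{d}\|_2^2}{2\gamma^2}(1-k_{t_k}(\bm{v}))^2$, whereas you keep a general $\bm{u}_{\mathrm{obs}}$, so your argument also covers arbitrary (e.g.\ noisy) observations.
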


\begin{proof}

Throughout, set
\begin{equation}
    \bm{b}:=\frac{\bm{K}^{-1}(\bm{a}_+-\bm{a}_-)}{\sigma_{t_k}^2},
    \qquad
    c:=\frac{\|\bm{A}(\bm{a}_+-\bm{a}_-)\|_2^2}{\gamma^2},
    \qquad
    \ell_{t_k}(\bm{v})
    :=\bm{b}^\top
    \left(
        \bm{v}-\frac{\bm{a}_-+\bm{a}_+}{2}
    \right).
    \label{eq:flow-proof-notation}
\end{equation}

By \cref{lemma:two-point-denoiser},
\begin{equation}
    k_{t_k}(\bm{v})
    =
    \left(1+e^{-\ell_{t_k}(\bm{v})}\right)^{-1}.
\end{equation}
Since $\nabla_{\bm{v}}\ell_{t_k}(\bm{v})=\bm{b}$, differentiation gives
\begin{align*}
    \nabla_{\bm{v}}k_{t_k}(\bm{v})
    &=
    -\left(1+e^{-\ell_{t_k}(\bm{v})}\right)^{-2}
    \left(-e^{-\ell_{t_k}(\bm{v})}\bm{b}\right) \\
    &=
    \frac{e^{-\ell_{t_k}(\bm{v})}}
    {\left(1+e^{-\ell_{t_k}(\bm{v})}\right)^2}\bm{b}.
\end{align*}
Moreover,
\begin{equation}
    k_{t_k}(\bm{v})
    =
    \frac{1}{1+e^{-\ell_{t_k}(\bm{v})}},
    \qquad
    1-k_{t_k}(\bm{v})
    =
    \frac{e^{-\ell_{t_k}(\bm{v})}}
    {1+e^{-\ell_{t_k}(\bm{v})}}.
\end{equation}
Therefore,
\begin{equation}
    \nabla_{\bm{v}}k_{t_k}(\bm{v})
    =
    k_{t_k}(\bm{v})
    \bigl(1-k_{t_k}(\bm{v})\bigr)\bm{b}.
    \label{eq:denoiser-weight-gradient}
\end{equation}

Using $\bm{u}_{\mathrm{obs}}=\bm{A}\bm{a}_+$ and
\cref{lemma:two-point-denoiser},
\begin{equation}
\begin{aligned}
    \bm{A}D^*(\bm{v},t_k)-\bm{u}_{\mathrm{obs}}
    &=
    \bm{A}\bm{a}_-
    +k_{t_k}(\bm{v})\bm{A}(\bm{a}_+-\bm{a}_-)
    -\bm{A}\bm{a}_+ \\
    &=
    -(1-k_{t_k}(\bm{v}))
    \bm{A}(\bm{a}_+-\bm{a}_-).
\end{aligned}
\label{eq:sitcom-measurement-residual}
\end{equation}

Hence the measurement objective is
\begin{equation}
    J_{t_k}(\bm{v})
    :=
    \frac{
        \|\bm{A}D^*(\bm{v},t_k)-\bm{u}_{\mathrm{obs}}\|_2^2
    }{2\gamma^2}
    =
    \frac{c}{2}
    \bigl(1-k_{t_k}(\bm{v})\bigr)^2.
    \label{eq:sitcom-scalar-objective}
\end{equation}

The chain rule and \cref{eq:denoiser-weight-gradient} yield
\begin{align}
    \nabla_{\bm{v}}J_{t_k}(\bm{v})
    &=
    \frac{c}{2}\,
    2\bigl(1-k_{t_k}(\bm{v})\bigr)
    \bigl(-\nabla_{\bm{v}}k_{t_k}(\bm{v})\bigr) \notag\\
    &=
    -c\,k_{t_k}(\bm{v})
    \bigl(1-k_{t_k}(\bm{v})\bigr)^2\bm{b}.
    \label{eq:sitcom-objective-gradient}
\end{align}

Substituting the definitions of $\bm{b}$ and $c$ into
$\bm{v}'(s)=-\nabla J_{t_k}(\bm{v}(s))$ gives
\begin{equation}
\begin{aligned}
    \bm{v}'(s)
    &=
    \frac{
        \|\bm{A}(\bm{a}_+-\bm{a}_-)\|_2^2
    }{\gamma^2\sigma_{t_k}^2}
    \mathbb{P}
    \bigl(
        \bm{a}_0=\bm{a}_+
        \mid
        \bm{a}_{t_k}=\bm{v}(s)
    \bigr) \\
    &\hspace{2cm}\times
    \mathbb{P}
    \bigl(
        \bm{a}_0=\bm{a}_-
        \mid
        \bm{a}_{t_k}=\bm{v}(s)
    \bigr)^2
    \bm{K}^{-1}(\bm{a}_+-\bm{a}_-), \\
    \bm{v}(0)
    &=
    \bm{a}_{t_k}^{(B)}.
\end{aligned}
\label{eq:sitcom-explicit-input-flow}
\end{equation}

It remains to show that the vector field
$-\nabla J_{t_k}$ is globally Lipschitz. To compute the Hessian, fix
$i,j\in\{1,\ldots,n\}$. By \cref{eq:sitcom-objective-gradient},
\begin{equation}
    \frac{\partial J_{t_k}}{\partial v_i}(\bm{v})
    =
    -c\,b_i\,k_{t_k}(\bm{v})
    \bigl(1-k_{t_k}(\bm{v})\bigr)^2.
\end{equation}
Since $b_i$ and $c$ are constant with respect to $\bm{v}$, the product
rule gives
\begin{align*}
    \frac{\partial}{\partial v_j}
    \frac{\partial J_{t_k}}{\partial v_i}(\bm{v})
    &=
    -c\,b_i
    \frac{\partial}{\partial v_j}
    \left[
        k_{t_k}(\bm{v})
        \bigl(1-k_{t_k}(\bm{v})\bigr)^2
    \right] \\
    &=
    -c\,b_i
    \left[
        \frac{\partial k_{t_k}}{\partial v_j}(\bm{v})
        \bigl(1-k_{t_k}(\bm{v})\bigr)^2
        +
        k_{t_k}(\bm{v})
        \frac{\partial}{\partial v_j}
        \bigl(1-k_{t_k}(\bm{v})\bigr)^2
    \right].
\end{align*}
By the chain rule,
\begin{equation}
    \frac{\partial}{\partial v_j}
    \bigl(1-k_{t_k}(\bm{v})\bigr)^2
    =
    -2\bigl(1-k_{t_k}(\bm{v})\bigr)
    \frac{\partial k_{t_k}}{\partial v_j}(\bm{v}).
\end{equation}
Substituting this identity gives
\begin{align*}
    \frac{\partial}{\partial v_j}
    \frac{\partial J_{t_k}}{\partial v_i}(\bm{v})
    &=
    -c\,b_i
    \left[
        \bigl(1-k_{t_k}(\bm{v})\bigr)^2
        -
        2k_{t_k}(\bm{v})
        \bigl(1-k_{t_k}(\bm{v})\bigr)
    \right]
    \frac{\partial k_{t_k}}{\partial v_j}(\bm{v}) \\
    &=
    -c\,b_i
    \bigl(1-k_{t_k}(\bm{v})\bigr)
    \bigl(1-3k_{t_k}(\bm{v})\bigr)
    \frac{\partial k_{t_k}}{\partial v_j}(\bm{v}).
\end{align*}

By \cref{eq:denoiser-weight-gradient},
\begin{equation}
    \frac{\partial k_{t_k}}{\partial v_j}(\bm{v})
    =
    k_{t_k}(\bm{v})
    \bigl(1-k_{t_k}(\bm{v})\bigr)b_j.
\end{equation}
Therefore,
\begin{align*}
    \frac{\partial}{\partial v_j}
    \frac{\partial J_{t_k}}{\partial v_i}(\bm{v})
    &=
    -c\,b_i
    \bigl(1-k_{t_k}(\bm{v})\bigr)
    \bigl(1-3k_{t_k}(\bm{v})\bigr)
    k_{t_k}(\bm{v})
    \bigl(1-k_{t_k}(\bm{v})\bigr)b_j \\
    &=
    -c\,k_{t_k}(\bm{v})
    \bigl(1-k_{t_k}(\bm{v})\bigr)^2
    \bigl(1-3k_{t_k}(\bm{v})\bigr)b_i b_j.
\end{align*}

Since
$(\bm{b}\bm{b}^\top)_{ij}=b_i b_j$, we obtain
\begin{equation}
    \nabla_{\bm{v}}^2J_{t_k}(\bm{v})
    =
    -c\,k_{t_k}(\bm{v})
    \bigl(1-k_{t_k}(\bm{v})\bigr)^2
    \bigl(1-3k_{t_k}(\bm{v})\bigr)
    \bm{b}\bm{b}^\top.
    \label{eq:sitcom-hessian}
\end{equation}

By \cref{lemma:positive-weights},
$0<k_{t_k}(\bm{v})<1$. Hence
\begin{equation}
    k_{t_k}(\bm{v})\le1,
    \qquad
    \bigl(1-k_{t_k}(\bm{v})\bigr)^2\le1,
    \qquad
    \left|1-3k_{t_k}(\bm{v})\right|\le2.
\end{equation}
We also have
\begin{equation}
    \|\bm{b}\bm{b}^\top\|_{\mathrm{op}}
    \leq
    \|\bm{b}\|_2^2,
\end{equation}
since for every $\bm{x}\in\R^n$ with $\|\bm{x}\|_2=1$,
Cauchy--Schwarz gives
\begin{align*}
    \|\bm{b}\bm{b}^\top\bm{x}\|_2
    &=
    \|\bm{b}\|_2
    |\bm{b}^\top\bm{x}| \\
    &\le
    \|\bm{b}\|_2^2\|\bm{x}\|_2
    =
    \|\bm{b}\|_2^2. 
\end{align*}

Thus, using \cref{eq:sitcom-hessian},
\begin{align}
    \|\nabla_{\bm{v}}^2J_{t_k}(\bm{v})\|_{\mathrm{op}}
    &\le
    2c\|\bm{b}\|_2^2 \notag\\
    &=
    \frac{
        2
        \|\bm{A}(\bm{a}_+-\bm{a}_-)\|_2^2
        \|\bm{K}^{-1}(\bm{a}_+-\bm{a}_-)\|_2^2
    }{
        \gamma^2\sigma_{t_k}^4
    }
    =:L_{t_k}<\infty
    \label{eq:sitcom-hessian-bound}
\end{align}
for every $\bm{v}\in\R^n$.

Now let $\bm{x},\bm{y}\in\R^n$ and define
\begin{equation}
    \bm{g}(r)
    :=
    \nabla J_{t_k}
    \bigl(\bm{y}+r(\bm{x}-\bm{y})\bigr),
    \qquad r\in[0,1].
\end{equation}
Since $J_{t_k}$ is twice continuously differentiable, the chain rule gives
\begin{equation}
    \bm{g}'(r)
    =
    \nabla^2 J_{t_k}
    \bigl(\bm{y}+r(\bm{x}-\bm{y})\bigr)
    (\bm{x}-\bm{y}).
\end{equation}
The fundamental theorem of calculus therefore gives
\begin{align*}
    \nabla J_{t_k}(\bm{x})-\nabla J_{t_k}(\bm{y})
    &=
    \bm{g}(1)-\bm{g}(0) \\
    &=
    \int_0^1
    \nabla^2J_{t_k}
    \bigl(\bm{y}+r(\bm{x}-\bm{y})\bigr)
    (\bm{x}-\bm{y})\,dr.
\end{align*}
Taking norms and applying \cref{eq:sitcom-hessian-bound},
\begin{align}
    \|\nabla J_{t_k}(\bm{x})
    -\nabla J_{t_k}(\bm{y})\|_2
    &\le
    \int_0^1
    \left\|
        \nabla^2J_{t_k}
        \bigl(\bm{y}+r(\bm{x}-\bm{y})\bigr)
    \right\|_{\mathrm{op}}
    \|\bm{x}-\bm{y}\|_2\,dr \notag\\
    &\le
    \int_0^1
    L_{t_k}\|\bm{x}-\bm{y}\|_2\,dr \notag\\
    &=
    L_{t_k}\|\bm{x}-\bm{y}\|_2.
    \label{eq:sitcom-global-lipschitz}
\end{align}
Thus $-\nabla J_{t_k}$ is globally Lipschitz on $\R^n$. Consequently, the global Picard--Lindel\"of theorem gives a unique solution to \cref{eq:sitcom-input-flow}, as desired. 

\end{proof}

\begin{lemma}[Evolution of the conditional probability under the gradient flow]
\label{lemma:sitcom-scalar-flow}
Suppose $\bm{A}(\bm{a}_+ - \bm{a}_-) \neq 0$. For the solution of the gradient flow in \cref{eq:sitcom-input-flow} initialized by $\bm{v}(0)=\bm{a}_{t_k}^{(B)}$, $\bm{v}(s)$, let $w(s):=k_{t_k}(\bm{v}(s))$. Then, 
\begin{equation}
 \frac{d}{ds}w(s)
 =\frac{\|\bm{A}(\bm{a}_+-\bm{a}_-)\|^2\|\bm{K}^{-1}(\bm{a}_+-\bm{a}_-)\|^2}
 {\gamma^2\sigma_{t_k}^4}
 w(s)^2(1-w(s))^3.
\label{eq:sitcom-weight-flow}
\end{equation}
\end{lemma}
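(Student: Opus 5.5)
We differentiate $w(s)=k_{t_k}(\bm{v}(s))$ along the flow using the chain rule, $w'(s)=\nabla_{\bm{v}}k_{t_k}(\bm{v}(s))^\top \bm{v}'(s)$. Both factors have already been computed in the proof of \cref{lemma:sitcom-existence}. By \cref{lemma:sitcom-existence}, $\bm{v}(s)$ exists, is unique, and is $C^1$ for all $s\ge0$. Since $k_{t_k}$ is smooth (it is a logistic function of an affine function of $\bm{v}$), the composition $w$ is $C^1$ and the chain rule applies.

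The first factor comes from \cref{eq:denoiser-weight-gradient}:
\[
\nabla_{\bm{v}}k_{t_k}(\bm{v})=k_{t_k}(\bm{v})(1-k_{t_k}(\bm{v}))\bm{b}, \qquad \bm{b}=\bm{K}^{-1}(\bm{a}_+-\bm{a}_-)/\sigma_{t_k}^2 .
\]
The second factor comes from \cref{eq:sitcom-objective-gradient}, or equivalently \cref{eq:sitcom-explicit-input-flow}:
\[
\bm{v}'(s)=-\nabla J_{t_k}(\bm{v}(s))=c\,w(s)(1-w(s))^2\bm{b}, \qquad c=\|\bm{A}(\bm{a}_+-\bm{a}_-)\|_2^2/\gamma^2 .
\]
Here $J_{t_k}=\tfrac{c}{2}(1-k_{t_k})^2$, using $\bm{u}_{\mathrm{obs}}=\bm{A}\bm{a}_+$ via \cref{eq:sitcom-measurement-residual}. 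We should confirm that the flow in \cref{eq:sitcom-input-flow}, with its factor $1/(2\gamma^2)$, matches $J_{t_k}$ exactly, so that no stray factor of two appears.

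Taking the inner product gives
\[
w'(s)=c\,\|\bm{b}\|_2^2\,w(s)^2(1-w(s))^3 .
\]
Substituting $c\|\bm{b}\|_2^2=\|\bm{A}(\bm{a}_+-\bm{a}_-)\|^2\|\bm{K}^{-1}(\bm{a}_+-\bm{a}_-)\|^2/(\gamma^2\sigma_{t_k}^4)$ yields \cref{eq:sitcom-weight-flow}.

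The hypothesis $\bm{A}(\bm{a}_+-\bm{a}_-)\neq0$ is not needed for the identity itself. It only makes the coefficient strictly positive, which, together with $0<w<1$ from \cref{lemma:positive-weights}, gives the monotonicity referenced in the remark after \cref{eq:sitcom-input-flow}.

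There is no real obstacle here. The only care needed is bookkeeping: keeping the exponent of $(1-w)$ correct, which is $1+2=3$, and matching the normalization constants of the flow, since the paper's gradient-flow scaling conventions differ slightly between \cref{eq:sitcom-input-flow} and \cref{eq:sitcom-input-flow_appx}.
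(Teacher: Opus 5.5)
Your proposal is correct and matches the paper's proof essentially step for step. The paper also reads off $\bm{v}'(s)=c\,w(s)(1-w(s))^2\bm{b}$ from \cref{eq:sitcom-objective-gradient}, applies the chain rule with \cref{eq:denoiser-weight-gradient}, substitutes $c\|\bm{b}\|_2^2$, and uses $\bm{A}(\bm{a}_+-\bm{a}_-)\neq 0$ only to make the constant strictly positive. One small correction: \cref{eq:sitcom-input-flow} and \cref{eq:sitcom-input-flow_appx} define the same vector field $-\nabla_{\bm{v}}J_{t_k}$, so there is no normalization discrepancy to reconcile.
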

\begin{proof}
 The solution $\bm{v}(s)$ exists and is unique by \cref{lemma:sitcom-existence}. With $\bm{b},c$ as in \cref{eq:flow-proof-notation},
\cref{eq:sitcom-objective-gradient} and the definition of gradient flow give
\begin{equation}
 \bm{v}'(s)=-\nabla J_{t_k}(\bm{v}(s))=cw(s)(1-w(s))^2\bm{b}.
 \label{eq:sitcom-input-vector-field}
\end{equation}
Since $k_{t_k}(\cdot)$ and $\bm{v}(\cdot)$ are continuously differentiable, the chain rule and
\cref{eq:denoiser-weight-gradient} give
\begin{equation}
\begin{aligned}
 w'(s)&=\nabla_{\bm{v}}k_{t_k}(\bm{v}(s))^\top \bm{v}'(s)\\
 &=[w(s)(1-w(s))\bm{b}]^\top[cw(s)(1-w(s))^2\bm{b}]\\
 &=c\|\bm{b}\|^2w(s)^2(1-w(s))^3.
\end{aligned}
\label{eq:sitcom-scalar-chain-rule}
\end{equation}
Substituting the definitions of $\bm{b},c$ gives
\begin{equation}
 w'(s)=Cw(s)^2(1-w(s))^3,\qquad
 C:=\frac{\|\bm{A}(\bm{a}_+-\bm{a}_-)\|^2\|\bm{K}^{-1}(\bm{a}_+-\bm{a}_-)\|^2}
 {\gamma^2\sigma_{t_k}^4}.
 \label{eq:sitcom-scalar-constant}
\end{equation}
The constant is positive because
$\bm{A}(\bm{a}_+-\bm{a}_-)\ne\bm{0}$, $\bm{K}$ is invertible, and
$\gamma,\sigma_{t_k}>0$; this yields \cref{eq:sitcom-weight-flow} as desired.
\end{proof}

\begin{lemma}[Convergence of the conditional probability and denoiser]
\label{lemma:sitcom-convergence}
Suppose $\bm{A}(\bm{a}_+ - \bm{a}_-) \neq 0$. Along the gradient flow in \cref{eq:sitcom-input-flow}, initialized by $\bm{v}(0)=\bm{a}_{t_k}^{(B)}$, as $s\to\infty$,
\begin{equation}
 \mathbb{P}(\bm{a}_0=\bm{a}_+\mid \bm{a}_{t_k}=\bm{v}(s))\longrightarrow1,
 \qquad \mathbb{P}(\bm{a}_0=\bm{a}_-\mid \bm{a}_{t_k}=\bm{v}(s))\longrightarrow0,
 \label{eq:sitcom-weight-limit}
\end{equation}
and
\begin{equation}
 D^*(\bm{v}(s),t_k)\longrightarrow \bm{a}_+.
 \label{eq:sitcom-recovery}
\end{equation}
\end{lemma}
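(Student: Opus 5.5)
The plan is to reduce everything to the scalar ODE of \cref{lemma:sitcom-scalar-flow}. Write $w(s):=k_{t_k}(\bm{v}(s))$. By \cref{lemma:sitcom-existence} the flow exists for all $s\ge0$, and by \cref{lemma:positive-weights} we have $w(s)\in(0,1)$ for every $s$. \cref{lemma:sitcom-scalar-flow} gives $w'(s)=Cw^2(1-w)^3$, and $C>0$ because $\bm{A}(\bm{a}_+-\bm{a}_-)\neq 0$. Hence $w'(s)>0$, so $w$ is strictly increasing and bounded above by $1$. It therefore converges to some limit $w_\infty\in(w(0),1]$.

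Next I will show that $w_\infty=1$, arguing by contradiction. Suppose $w_\infty<1$. Then for all $s\ge0$ we have $w(s)\in[w(0),w_\infty]$, a compact subinterval of $(0,1)$. On this interval $f(w):=Cw^2(1-w)^3$ is bounded below by
\[
\eta:=C\,w(0)^2(1-w_\infty)^3>0.
\]
This gives $w(s)\ge w(0)+\eta s\to\infty$, which contradicts $w<1$. So $w(s)\to1$, which is the first limit in \cref{eq:sitcom-weight-limit}. Since $\mathbb{P}(\bm{a}_0=\bm{a}_-\mid\bm{a}_{t_k}=\bm{v}(s))=1-w(s)$, the second limit follows immediately.

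Finally, \cref{lemma:two-point-denoiser} gives $D^*(\bm{v}(s),t_k)=\bm{a}_-+w(s)(\bm{a}_+-\bm{a}_-)$. Therefore
\[
\|D^*(\bm{v}(s),t_k)-\bm{a}_+\|_2=(1-w(s))\|\bm{a}_+-\bm{a}_-\|_2\to0,
\]
which proves \cref{eq:sitcom-recovery}.

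The only delicate point is excluding a finite limit below $1$, meaning a stationary point of the scalar ODE inside $(0,1)$. This is handled by the positivity of $f$ on $(0,1)$ together with the lower bound on a compact interval described above. The fact that $w$ never reaches $0$ or $1$ in finite time is already guaranteed by \cref{lemma:positive-weights}, so no separate blow-up argument is required.
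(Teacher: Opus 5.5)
Your proposal is correct and follows the paper's proof essentially step for step: monotonicity and boundedness of $w$ via the scalar ODE, ruling out a limit $L<1$ by a positive lower bound on $w'$ that forces linear growth, and then $\|D^*(\bm{v}(s),t_k)-\bm{a}_+\|_2=(1-w(s))\|\bm{a}_+-\bm{a}_-\|_2\to0$. The only cosmetic difference is that the paper bounds $w'$ below by $c_0/2$ only for large $s$, using continuity and $w'(s)\to CL^2(1-L)^3$, whereas you use the uniform bound $Cw(0)^2(1-w_\infty)^3$ over the compact range $[w(0),w_\infty]$; both are valid.
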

\begin{proof}
Set $w(s):=k_{t_k}(\bm{v}(s))$ and take $C$ from
\cref{eq:sitcom-scalar-constant}. Note that $\bm{v}(s)$ exists and is unique by \cref{lemma:sitcom-existence}. 
By \cref{lemma:positive-weights,lemma:sitcom-scalar-flow},
$0<w(s)<1$ and $w'(s)>0$ for every finite $s$.
Thus $w(\cdot)$ is increasing and bounded above, so its limit $L$ exists
and satisfies $0<w(0)\le L\le1$.
Suppose $L<1$. By continuity of the polynomial in
\cref{eq:sitcom-scalar-constant},
\begin{equation}
 w'(s)\longrightarrow CL^2(1-L)^3=:c_0>0.
 \label{eq:sitcom-positive-derivative-limit}
\end{equation}
Hence there is $S\ge0$ such that $w'(s)\ge c_0/2$ for all $s\ge S$.
The fundamental theorem of calculus then gives
\begin{equation}
 w(s)-w(S)=\int_S^s w'(r)\,dr
 \ge\frac{c_0}{2}(s-S),\qquad s\ge S.
 \label{eq:sitcom-limit-contradiction}
\end{equation}
The right-hand side grows without bound as $s\to\infty$, contradicting
$L<1$. Therefore $L=1$, which also gives $1-w(s)\to0$.

Finally, by \cref{lemma:two-point-denoiser},
\begin{equation}
\begin{aligned}
 D^*(\bm{v}(s),t_k)-\bm{a}_+
 &=\bm{a}_-+w(s)(\bm{a}_+-\bm{a}_-)-\bm{a}_+\\
 &=-(1-w(s))(\bm{a}_+-\bm{a}_-),\\
 \|D^*(\bm{v}(s),t_k)-\bm{a}_+\|
 &=(1-w(s))\|\bm{a}_+-\bm{a}_-\|\longrightarrow0,
\end{aligned}
\label{eq:sitcom-output-error}
\end{equation}
because $1-w(s)\to0$, as desired. 
\end{proof}

\begin{lemma}[DDIS preserves the denoiser's kernel component in expectation]
\label{lemma:ddis-kernel}
Fix $t_k>t_{k-1}$ and any current state
$\bm{a}_{t_k}^{(D)}\in\R^n$. The DDIS conditional mean
$\bar{\bm{a}}_{\mathrm D,t_{k-1}}:=\E[\bm{a}_{t_{k-1}}^{(D)}\mid
\bm{a}_{t_k}^{(D)},\bm{u}_{\mathrm{obs}}]$ from \cref{lemma:renoising} satisfies
\begin{equation}
    \bm{P}_{\ker \bm{A}}\bar{\bm{a}}_{\mathrm D,t_{k-1}}
    =\bm{P}_{\ker \bm{A}}D_{\bm{\theta}}(\bm{a}_{t_k}^{(D)},t_k).
    \label{eq:ddis-proposition-kernel}
\end{equation}
\end{lemma}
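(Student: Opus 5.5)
The plan is to read off the conditional mean from \cref{lemma:renoising} and \cref{lemma:ddis-projected-mean}, then project onto $\ker\bm{A}$. By \cref{lemma:renoising}, conditional on $\bm{a}_{t_k}^{(D)},\bm{u}_{\mathrm{obs}}$, the renoised state $\bm{a}_{t_{k-1}}^{(D)}=\tilde{\bm{a}}_0+\sigma_{t_{k-1}}\bm{\varepsilon}_k^{(D)}$ is Gaussian. Its mean is the mean of $q_{D,t_k}$, because $\bm{\varepsilon}_k^{(D)}$ is centered and independent. So $\bar{\bm{a}}_{\mathrm D,t_{k-1}}$ is given explicitly by \cref{eq:ddis-mean}:
\begin{equation*}
\bar{\bm{a}}_{\mathrm D,t_{k-1}}=\bm{M}^{-1}\Big(\tfrac{1}{\gamma^2}\bm{A}^\top\bm{u}_{\mathrm{obs}}+\tfrac{1}{\sigma_{t_k}^2}\hat{\bm{a}}\Big),\qquad \bm{M}:=\tfrac{1}{\gamma^2}\bm{A}^\top\bm{A}+\tfrac{1}{\sigma_{t_k}^2}\bm{I},
\end{equation*}
where $\hat{\bm{a}}:=D_{\bm{\theta}}(\bm{a}_{t_k}^{(D)},t_k)$.

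The key step is to show that $\bm{M}$, and hence $\bm{M}^{-1}$, commutes with $\bm{P}:=\bm{P}_{\ker\bm{A}}$. For $\bm{z}\in\ker\bm{A}$ we have $\bm{M}\bm{z}=\sigma_{t_k}^{-2}\bm{z}$. Also, $\bm{M}$ maps $(\ker\bm{A})^\perp=\mathrm{range}(\bm{A}^\top)$ into itself, since $\bm{A}^\top\bm{A}\bm{A}^\top\bm{y}\in\mathrm{range}(\bm{A}^\top)$. So both subspaces are invariant under the symmetric matrix $\bm{M}$, and it follows that $\bm{P}\bm{M}=\bm{M}\bm{P}$. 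Multiplying on both sides by $\bm{M}^{-1}$ (which exists by \cref{lemma:ddis-projected-mean}) gives $\bm{P}\bm{M}^{-1}=\bm{M}^{-1}\bm{P}$. Moreover, on $\ker\bm{A}$ the inverse acts as multiplication by $\sigma_{t_k}^2$, so $\bm{M}^{-1}\bm{P}=\sigma_{t_k}^2\bm{P}$.

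Now apply $\bm{P}$ to the formula:
\begin{equation*}
\bm{P}\bar{\bm{a}}_{\mathrm D,t_{k-1}}=\bm{M}^{-1}\Big(\tfrac{1}{\gamma^2}\bm{P}\bm{A}^\top\bm{u}_{\mathrm{obs}}+\tfrac{1}{\sigma_{t_k}^2}\bm{P}\hat{\bm{a}}\Big).
\end{equation*}
By \cref{lemma:projection-identities}, $\bm{P}\bm{A}^\top\bm{u}_{\mathrm{obs}}=\bm{0}$. The expression therefore reduces to $\sigma_{t_k}^{-2}\bm{M}^{-1}\bm{P}\hat{\bm{a}}=\bm{P}\hat{\bm{a}}$, which is \cref{eq:ddis-proposition-kernel}.

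The main obstacle is the commutation argument. I would make it airtight by using the orthogonal decomposition $\R^n=\ker\bm{A}\oplus\mathrm{range}(\bm{A}^\top)$. A second route, as a fallback, is to diagonalize $\bm{A}^\top\bm{A}$ and note that $\bm{P}$ is the spectral projector onto its zero eigenspace. Either way, $\bm{P}$ is a function of $\bm{A}^\top\bm{A}$ and so commutes with any function of it, including $\bm{M}^{-1}$.

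The remark about the probability flow ODE needs only that the denoiser estimate enters linearly through the $\sigma_{t_k}^{-2}\hat{\bm{a}}$ term. The same computation therefore applies with $\hat{\bm{a}}$ replaced by the ODE output.
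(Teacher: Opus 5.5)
Your proof is correct, but it reaches the identity by a different route than the paper. The paper never analyzes how $\bm{M}^{-1}$ interacts with $\bm{P}=\bm{P}_{\ker\bm{A}}$. Instead, it multiplies the mean formula \cref{eq:ddis-mean} through by $\bm{M}$ to get the stationarity condition $\gamma^{-2}\bm{A}^\top(\bm{A}\bar{\bm{a}}_{\mathrm D,t_{k-1}}-\bm{u}_{\mathrm{obs}})+\sigma_{t_k}^{-2}(\bar{\bm{a}}_{\mathrm D,t_{k-1}}-\hat{\bm{a}})=\bm{0}$. It then applies $\bm{P}$ and kills the first term with \cref{lemma:projection-identities}, applied to the vector $\bm{A}\bar{\bm{a}}_{\mathrm D,t_{k-1}}-\bm{u}_{\mathrm{obs}}$. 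The only fact needed there is $\bm{P}\bm{A}^\top=\bm{0}$.

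Your argument instead proves $\bm{P}\bm{M}=\bm{M}\bm{P}$ through the invariance of $\ker\bm{A}$ and $\mathrm{range}(\bm{A}^\top)$, together with $\bm{M}^{-1}\bm{P}=\sigma_{t_k}^2\bm{P}$. This costs an extra step but gives strictly more information. Since $\bm{\Sigma}_{\mathrm D,t_k}=\bm{M}^{-1}$, you also get $\bm{P}\bm{\Sigma}_{\mathrm D,t_k}\bm{P}=\sigma_{t_k}^2\bm{P}$ and $\bm{P}\bm{\Sigma}_{\mathrm D,t_k}(\bm{I}-\bm{P})=\bm{0}$. Hence the nullspace component of the DDIS reconstruction $\tilde{\bm{a}}_0^{(t_k)}$ is exactly $\mathcal{N}(\bm{P}\hat{\bm{a}},\sigma_{t_k}^2\bm{P})$ and is independent of the observed component. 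So the measurement term leaves the entire nullspace law unchanged from the denoiser-centred Gaussian, not just its mean.
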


\begin{proof}

By \cref{lemma:ddis-projected-mean}, 
\begin{equation}
 \bar{\bm{a}}_{\mathrm D,t_{k-1}}
 =\left(\frac{\bm{A}^\top \bm{A}}{\gamma^2}+\frac{\bm{I}}{\sigma_{t_k}^2}\right)^{-1}
 \left(\frac{\bm{A}^\top \bm{u}_{\mathrm{obs}}}{\gamma^2}
       +\frac{D_{\bm{\theta}}(\bm{a}_{t_k}^{(D)},t_k)}{\sigma_{t_k}^2}\right).
\end{equation}
Multiplying by the invertible matrix
$\bm{A}^\top \bm{A}/\gamma^2+\bm{I}/\sigma_{t_k}^2$ gives
\begin{equation}
 \frac{\bm{A}^\top \bm{A}\bar{\bm{a}}_{\mathrm D,t_{k-1}}}{\gamma^2}
 +\frac{\bar{\bm{a}}_{\mathrm D,t_{k-1}}}{\sigma_{t_k}^2}
 =\frac{\bm{A}^\top \bm{u}_{\mathrm{obs}}}{\gamma^2}
  +\frac{D_{\bm{\theta}}(\bm{a}_{t_k}^{(D)},t_k)}{\sigma_{t_k}^2}.
 \label{eq:ddis-mean-proposition-system}
\end{equation}
Move both terms on the right to the left and factor:
\begin{equation}
\begin{aligned}
 \bm{0}&=\frac{\bm{A}^\top \bm{A}\bar{\bm{a}}_{\mathrm D,t_{k-1}}-\bm{A}^\top \bm{u}_{\mathrm{obs}}}{\gamma^2}
   +\frac{\bar{\bm{a}}_{\mathrm D,t_{k-1}}-D_{\bm{\theta}}(\bm{a}_{t_k}^{(D)},t_k)}{\sigma_{t_k}^2}\\
 &=\frac{\bm{A}^\top(\bm{A}\bar{\bm{a}}_{\mathrm D,t_{k-1}}-\bm{u}_{\mathrm{obs}})}{\gamma^2}
   +\frac{\bar{\bm{a}}_{\mathrm D,t_{k-1}}-D_{\bm{\theta}}(\bm{a}_{t_k}^{(D)},t_k)}{\sigma_{t_k}^2}.
\end{aligned}
\label{eq:ddis-mean-stationarity}
\end{equation}
Apply the linear map $\bm{P}_{\ker \bm{A}}$. By
\cref{lemma:projection-identities},
$\bm{P}_{\ker \bm{A}}\bm{A}^\top \bm{u}=\bm{0}$ for every
$\bm{u}\in\R^m$, in particular for $\bm{u}=\bm{A}\bar{\bm{a}}_{\mathrm D,t_{k-1}}-\bm{u}_{\mathrm{obs}}$.
Consequently,
\begin{equation}
\begin{aligned}
 \bm{0}&=\frac{\bm{P}_{\ker \bm{A}}\bm{A}^\top(\bm{A}\bar{\bm{a}}_{\mathrm D,t_{k-1}}-\bm{u}_{\mathrm{obs}})}{\gamma^2}
   +\frac{\bm{P}_{\ker \bm{A}}\bar{\bm{a}}_{\mathrm D,t_{k-1}}
             -\bm{P}_{\ker \bm{A}}D_{\bm{\theta}}(\bm{a}_{t_k}^{(D)},t_k)}{\sigma_{t_k}^2}\\
 &=\frac{\bm{P}_{\ker \bm{A}}\bar{\bm{a}}_{\mathrm D,t_{k-1}}
             -\bm{P}_{\ker \bm{A}}D_{\bm{\theta}}(\bm{a}_{t_k}^{(D)},t_k)}{\sigma_{t_k}^2}.
\end{aligned}
\label{eq:ddis-projection-stationarity-proof}
\end{equation}
Since $\sigma_{t_k}^2>0$, we have that 
$\bm{P}_{\ker \bm{A}}\bar{\bm{a}}_{\mathrm D,t_{k-1}}=\bm{P}_{\ker \bm{A}}D_{\bm{\theta}}(\bm{a}_{t_k}^{(D)},t_k)$ as desired. 
\end{proof}

\subsection{Proof of \Cref{prop:ddis_fail_prelim}}
\label{proof:proof_of_ddis_fail_prelim}

\begin{proof}
By \cref{lemma:ddis-kernel}, with
$D_{\bm{\theta}}(\cdot,t_k)=D^*(\cdot,t_k)$, and linearity of
$\bm{P}_{\ker\bm{A}}$,
\begin{align}
    \bm{P}_{\ker\bm{A}}
    (\bar{\bm{a}}_{\mathrm D,t_{k-1}}-\bm{a}_+)
    &=
    \bm{P}_{\ker\bm{A}}\bar{\bm{a}}_{\mathrm D,t_{k-1}}
    -\bm{P}_{\ker\bm{A}}\bm{a}_+ \\
    &=
    \bm{P}_{\ker\bm{A}}D^*(\bm{a}_{t_k}^{(D)},t_k)
    -\bm{P}_{\ker\bm{A}}\bm{a}_+ \\
    &=
    \bm{P}_{\ker\bm{A}}
    \left(D^*(\bm{a}_{t_k}^{(D)},t_k)-\bm{a}_+\right).
\end{align}
Taking norms on both sides proves the claim. The same argument also
applies when $t_{k-1}=t_0$.
\end{proof}

\subsection{Proof of \Cref{prop:exact_denoiser}}
\label{proof:proof_of_exact_denoiser}

\begin{proof}
Let
\begin{equation}
    \bm{d}:=\bm{a}_+-\bm{a}_-.
\end{equation}
By \cref{lemma:two-point-denoiser}, $k_{t_k}(\cdot)$ is  a regular conditional probability
$\mathbb{P}(\bm{a}_0=\bm{a}_+\mid\bm{a}_{t_k}=\cdot)$ and is well-defined, and
\begin{equation}
    D^*(\bm{v},t_k)
    =
    \bm{a}_-+k_{t_k}(\bm{v})\bm{d}.
    \label{eq:proof-exact-denoiser-mixture}
\end{equation}
The orthogonal decomposition of $\bm{d}$ with respect to
$\ker\bm{A}$ is
\begin{equation}
    \bm{d}
    =
    \bm{P}_{(\ker\bm{A})^\perp}\bm{d}
    +
    \bm{P}_{\ker\bm{A}}\bm{d}.
\end{equation}
Substituting this identity into
\cref{eq:proof-exact-denoiser-mixture} gives
\begin{equation}
    D^*(\bm{v},t_k)
    =
    \bm{a}_-
    +k_{t_k}(\bm{v})\bm{P}_{(\ker\bm{A})^\perp}\bm{d}
    +k_{t_k}(\bm{v})\bm{P}_{\ker\bm{A}}\bm{d}.
    \label{eq:proof-exact-denoiser-decomposition}
\end{equation}
Finally, applying $\bm{P}_{\ker\bm{A}}$ and using
$\bm{P}_{\ker\bm{A}}\bm{P}_{(\ker\bm{A})^\perp}=\bm{0}$ and
$\bm{P}_{\ker\bm{A}}^2=\bm{P}_{\ker\bm{A}}$ yields
\begin{equation}
    \bm{P}_{\ker\bm{A}}D^*(\bm{v},t_k)
    =
    \bm{P}_{\ker\bm{A}}\bm{a}_-
    +k_{t_k}(\bm{v})\bm{P}_{\ker\bm{A}}\bm{d},
\end{equation}
as desired. 
\end{proof}

\subsection{Proof of \Cref{corollary:ddis-fail}}
\label{proof:proof_of_ddis_fail}

\begin{proof}
Note that the probability is well-defined by \cref{lemma:two-point-denoiser}. Let
\begin{equation}
    \bm{d}:=\bm{a}_+-\bm{a}_-,
    \qquad
    w_D:=k_{t_k}(\bm{a}_{t_k}^{(D)}).
\end{equation}
By \cref{lemma:two-point-denoiser},
\begin{equation}
    D^*(\bm{a}_{t_k}^{(D)},t_k)-\bm{a}_+
    =
    -(1-w_D)\bm{d}.
    \label{eq:proof-ddis-denoiser-error}
\end{equation}
Incorporating this into  \cref{prop:ddis_fail_prelim}, we have that: 
\begin{equation}
\begin{aligned}
    \left\|
        \bm{P}_{\ker\bm{A}}
        (\bar{\bm{a}}_{\mathrm D,t_{k-1}}-\bm{a}_+)
    \right\|_2
    &=
    \left\|
        \bm{P}_{\ker\bm{A}}
        \left(D^*(\bm{a}_{t_k}^{(D)},t_k)-\bm{a}_+\right)
    \right\|_2 \\
    &=
    (1-w_D)
    \|\bm{P}_{\ker\bm{A}}\bm{d}\|_2.
\end{aligned}
\end{equation}
By \cref{lemma:projection-identities}, we then have 
\begin{equation}
    \|\bar{\bm{a}}_{\mathrm D,t_{k-1}}-\bm{a}_+\|_2
    \geq
    \left\|
        \bm{P}_{\ker\bm{A}}
        (\bar{\bm{a}}_{\mathrm D,t_{k-1}}-\bm{a}_+)
    \right\|_2.
\end{equation}
Hence
\begin{equation}
    \|\bar{\bm{a}}_{\mathrm D,t_{k-1}}-\bm{a}_+\|_2
    \geq
    (1-w_D)
    \|\bm{P}_{\ker\bm{A}}\bm{d}\|_2.
\end{equation}
Finally, by \cref{lemma:positive-weights},
$1-w_D>0$, and by assumption
$\bm{P}_{\ker\bm{A}}\bm{d}\neq\bm{0}$. Therefore,
\begin{equation}
    \|\bar{\bm{a}}_{\mathrm D,t_{k-1}}-\bm{a}_+\|_2
    \geq
    (1-w_D)
    \|\bm{P}_{\ker\bm{A}}\bm{d}\|_2
    >0,
\end{equation}
which proves \cref{corollary:ddis-fail}, as desired. 
\end{proof}

\subsection{Proof of \Cref{prop:sitcom_ker_contract}}
\label{proof:proof_of_sitcom_ker_contract}

\begin{proof}
Under the exact denoiser $D_{\bm{\theta}}(\cdot,t_k)=D^*(\cdot,t_k)$,
\cref{lemma:renoising} gives, for every finite $s\geq0$,
\begin{equation}
    \E\!\left[
        \bm{a}_{t_{k-1}}^{(B)}
        \,\middle|\,
        \bm{a}_{t_k}^{(B)},\bm{u}_{\mathrm{obs}}
    \right]
    =D^*(\bm{v}(s),t_k).
\end{equation}
Let
\begin{equation}
    w_S(s):=k_{t_k}(\bm{v}(s)),
    \qquad
    w_D:=k_{t_k}(\bm{a}_{t_k}^{(D)}),
    \qquad
    \bm{d}:=\bm{a}_+-\bm{a}_-.
\end{equation}
By \cref{lemma:two-point-denoiser},
\begin{equation}
\begin{aligned}
    D^*(\bm{v}(s),t_k)-\bm{a}_+
    &=\bm{a}_-+w_S(s)\bm{d}-\bm{a}_+\\
    &=-(1-w_S(s))\bm{d}.
\end{aligned}
\end{equation}
Therefore
\begin{equation}
    \|\bm{P}_{\ker\bm{A}}
       (D^*(\bm{v}(s),t_k)-\bm{a}_+)\|_2
    =
    (1-w_S(s))\|\bm{P}_{\ker\bm{A}}\bm{d}\|_2.
    \label{eq:proof-sitcom-nullspace-error}
\end{equation}

By \cref{lemma:sitcom-scalar-flow,lemma:positive-weights},
$w_S'(s)>0$ for every finite $s\geq0$. Since
$\bm{v}(0)=\bm{a}_{t_k}^{(B)}$, we have
$w_S(s)>w_S(0)=k_{t_k}(\bm{a}_{t_k}^{(B)})$ for every $s>0$.
Applying \cref{eq:proof-sitcom-nullspace-error} at $s$ and at $0$
therefore gives
\begin{equation}
    \|\bm{P}_{\ker\bm{A}}
       (D^*(\bm{v}(s),t_k)-\bm{a}_+)\|_2
    \leq
    \|\bm{P}_{\ker\bm{A}}
       (D^*(\bm{a}_{t_k}^{(B)},t_k)-\bm{a}_+)\|_2,
    \qquad s>0.
\end{equation}

On the other hand, by \cref{lemma:ddis-kernel},
\begin{equation}
    \bm{P}_{\ker\bm{A}}\bar{\bm{a}}_{\mathrm D,t_{k-1}}
    =
    \bm{P}_{\ker\bm{A}}
    D^*(\bm{a}_{t_k}^{(D)},t_k).
\end{equation}
Applying \cref{lemma:two-point-denoiser} at
$\bm{a}_{t_k}^{(D)}$ yields
\begin{equation}
\begin{aligned}
    \bm{P}_{\ker\bm{A}}
    (\bar{\bm{a}}_{\mathrm D,t_{k-1}}-\bm{a}_+)
    &=
    \bm{P}_{\ker\bm{A}}
    \bigl(D^*(\bm{a}_{t_k}^{(D)},t_k)-\bm{a}_+\bigr)\\
    &=
    -(1-w_D)\bm{P}_{\ker\bm{A}}\bm{d}.
\end{aligned}
\end{equation}
Hence
\begin{equation}
    \|\bm{P}_{\ker\bm{A}}
       (\bar{\bm{a}}_{\mathrm D,t_{k-1}}-\bm{a}_+)\|_2
    =
    (1-w_D)\|\bm{P}_{\ker\bm{A}}\bm{d}\|_2.
    \label{eq:proof-ddis-nullspace-error}
\end{equation}

By \cref{lemma:positive-weights}, $0<w_D<1$, so $1-w_D>0$.
By \cref{lemma:sitcom-convergence},
\begin{equation}
    w_S(s)\longrightarrow1
    \qquad\text{as }s\to\infty.
\end{equation}
Therefore there exists $s^*>0$ such that, for every
$s\geq s^*$,
\begin{equation}
    1-w_S(s)\leq1-w_D.
\end{equation}
Multiplying by $\|\bm{P}_{\ker\bm{A}}\bm{d}\|_2$ and using
\cref{eq:proof-sitcom-nullspace-error,eq:proof-ddis-nullspace-error}
gives
\begin{equation}
    \|\bm{P}_{\ker\bm{A}}
       (D^*(\bm{v}(s),t_k)-\bm{a}_+)\|_2
    \leq
    \|\bm{P}_{\ker\bm{A}}
       (\bar{\bm{a}}_{\mathrm D,t_{k-1}}-\bm{a}_+)\|_2,
\end{equation}
as desired. 
\end{proof}

\subsection{Proof of \Cref{thm:sitcom-success}}
\label{proof:proof_of_sitcom_success}

\begin{proof}
By \cref{lemma:renoising}, for every finite $s$,
\begin{equation}
    \pi_s^{(B)}
    =
    \mathcal N\!\left(
        D^*(\bm{v}(s),t_k),
        \sigma_{t_{k-1}}^2\bm{K}
    \right).
\end{equation}
By \cref{lemma:sitcom-convergence},
$D^*(\bm{v}(s),t_k)\to\bm{a}_+$ as $s\to\infty$. Hence, for every
$\bm{\xi}\in\R^n$, the characteristic function of $\pi_s^{(B)}$
satisfies
\begin{equation}
\begin{aligned}
    \widehat{\pi}_s^{(B)}(\bm{\xi})
    &=
    \exp\!\left(
        i\bm{\xi}^\top D^*(\bm{v}(s),t_k)
        -\frac12\sigma_{t_{k-1}}^2
         \bm{\xi}^\top\bm{K}\bm{\xi}
    \right)\\
    &\longrightarrow
    \exp\!\left(
        i\bm{\xi}^\top\bm{a}_+
        -\frac12\sigma_{t_{k-1}}^2
         \bm{\xi}^\top\bm{K}\bm{\xi}
    \right).
\end{aligned}
\end{equation}
The limit is the characteristic function of
$\mathcal N(\bm{a}_+,\sigma_{t_{k-1}}^2\bm{K})$ and is continuous at
$\bm{\xi}=\bm{0}$. Therefore, L\'evy's continuity theorem gives
\begin{equation}
    \pi_s^{(B)}
    \Rightarrow
    \mathcal N(\bm{a}_+,\sigma_{t_{k-1}}^2\bm{K})
    \qquad\text{as }s\to\infty.
\end{equation}

Finally, when $t_{k-1}=t_0$, for every $\bm{\xi}\in\R^n$,
\begin{equation}
    \exp\!\left(
        i\bm{\xi}^\top\bm{a}_+
        -\frac12\sigma_{t_0}^2\bm{\xi}^\top\bm{K}\bm{\xi}
    \right)
    \longrightarrow
    \exp(i\bm{\xi}^\top\bm{a}_+)
    \qquad\text{as } \sigma_{t_0} \to 0.
\end{equation}
The limit is the characteristic function of $\delta_{\bm{a}_+}$, so
another application of L\'evy's continuity theorem yields
\begin{equation}
    \mathcal N(\bm{a}_+,\sigma_{t_0}^2\bm{K})
    \Rightarrow
    \delta_{\bm{a}_+}.
\end{equation}
This proves \cref{thm:sitcom-success}, as desired. 
\end{proof}

\subsection{Proof of \Cref{prop:wrong-mode-slowdown}}
\label{proof:proof_of_wrong_mode_slowdown}

\begin{proof}
Let $\bm{d}:=\bm{a}_+-\bm{a}_-$,
$\bm{m}:=(\bm{a}_-+\bm{a}_+)/2$,
$Q:=\bm{d}^\top\bm{K}^{-1}\bm{d}$, and
\[
    C:=
    \frac{
        \|\bm{A}\bm{d}\|_2^2
        \|\bm{K}^{-1}\bm{d}\|_2^2
    }{\gamma^2}.
\]
Since $\bm{A}\bm{d}\neq 0$ and $\bm{K}$ is positive definite,
we have $Q,C>0$.

Fix a realization of $\bm{a}_{t_{k+1}}^{(B)}$ satisfying
$k_{t_{k+1}}(\bm{a}_{t_{k+1}}^{(B)})\leq 1/2-\delta$, and set
$p:=k_{t_{k+1}}(\bm{a}_{t_{k+1}}^{(B)})$. By
\cref{lemma:two-point-denoiser},
\begin{align}
    D^*(\bm{a}_{t_{k+1}}^{(B)},t_{k+1})
    &=\bm{a}_-+p\bm{d} \\
    &=\bm{m}-\left(\frac12-p\right)\bm{d}.
\end{align}
Therefore, since no optimization is performed at $t_{k+1}$,
\[
    \bm{a}_{t_k}^{(B)}
    =
    \bm{m}
    -
    \left(\frac12-p\right)\bm{d}
    +
    \sigma_{t_k}\bm{\varepsilon}_k^{(B)},
    \qquad
    \bm{\varepsilon}_k^{(B)}\sim\mathcal{N}(\bm{0},\bm{K}).
\]
Conditionally on $\bm{a}_{t_{k+1}}^{(B)}$,
$\bm{d}^\top\bm{K}^{-1}\bm{\varepsilon}_k^{(B)}
\sim\mathcal{N}(0,Q)$, since
\[
    \Var\left(
        \bm{d}^\top\bm{K}^{-1}\bm{\varepsilon}_k^{(B)}
    \right)
    =
    \bm{d}^\top\bm{K}^{-1}
    \bm{K}
    \bm{K}^{-1}\bm{d}
    =Q.
\]
Thus, with
$Z:=\bm{d}^\top\bm{K}^{-1}\bm{\varepsilon}_k^{(B)}/\sqrt{Q}$,
we have $Z\sim\mathcal{N}(0,1)$ and
\begin{equation}
    \bm{d}^\top\bm{K}^{-1}
    (\bm{a}_{t_k}^{(B)}-\bm{m})
    =
    -Q\left(\frac12-p\right)
    +\sigma_{t_k}\sqrt{Q}\,Z.
    \label{eq:prop4-initial-logit}
\end{equation}

Define
$E:=\{Z\leq\delta\sqrt{Q}/(2\sigma_{t_k})\}$.
Since $1/2-p\geq\delta$, on $E$,
\begin{align}
    \bm{d}^\top\bm{K}^{-1}
    (\bm{a}_{t_k}^{(B)}-\bm{m})
    &\leq
    -Q\delta
    +
    \sigma_{t_k}\sqrt{Q}
    \frac{\delta\sqrt{Q}}{2\sigma_{t_k}} \\
    &=
    -\frac{Q\delta}{2}.
    \label{eq:prop4-logit-bound}
\end{align}
For $x>0$, the standard Gaussian Chernoff bound gives
\[
    \Pr(Z>x)
    =
    \Pr(e^{xZ}>e^{x^2})
    \leq
    e^{-x^2}\E[e^{xZ}]
    =
    e^{-x^2/2}.
\]
Taking $x=\delta\sqrt{Q}/(2\sigma_{t_k})$ therefore yields
\begin{equation}
    \Pr\left(
        E
        \,\middle|\,
        \bm{a}_{t_{k+1}}^{(B)}
    \right)
    \geq
    1-
    \exp\left(
        -\frac{Q\delta^2}{8\sigma_{t_k}^2}
    \right).
    \label{eq:prop4-E-prob}
\end{equation}

Now set $w(s):=k_{t_k}(\bm{v}_{t_k}(s))$. Since
$\bm{v}_{t_k}(0)=\bm{a}_{t_k}^{(B)}$, \cref{lemma:two-point-denoiser}
and \cref{eq:prop4-logit-bound} give, on $E$,
\begin{align}
    w(0)
    &=
    \left(
        1+
        \exp\left\{
            -\frac{
                \bm{d}^\top\bm{K}^{-1}
                (\bm{a}_{t_k}^{(B)}-\bm{m})
            }{\sigma_{t_k}^2}
        \right\}
    \right)^{-1} \\
    &\leq
    \left(
        1+
        \exp\left\{
            \frac{Q\delta}{2\sigma_{t_k}^2}
        \right\}
    \right)^{-1}
    <\frac12.
    \label{eq:prop4-w0-bound}
\end{align}

By Lemma~7,
\begin{equation}
    w'(s)
    =
    \frac{C}{\sigma_{t_k}^4}
    w(s)^2(1-w(s))^3.
    \label{eq:prop4-w-ode}
\end{equation}
By Lemma~5, $0<w(s)<1$ for every finite $s$, so
$w'(s)>0$. By Lemma~8, $w(s)\to1$ as $s\to\infty$.
Hence, on $E$, $\tau_{t_k}<\infty$ and, by continuity,
$w(\tau_{t_k})=1/2$.

Integrating \cref{eq:prop4-w-ode} from $0$ to $\tau_{t_k}$ and
using $u=w(s)$ gives
\begin{align}
    \tau_{t_k}
    &=
    \frac{\sigma_{t_k}^4}{C}
    \int_{w(0)}^{1/2}
    \frac{du}{u^2(1-u)^3} \\
    &\geq
    \frac{\sigma_{t_k}^4}{C}
    \int_{w(0)}^{1/2}\frac{du}{u^2} \\
    &=
    \frac{\sigma_{t_k}^4}{C}
    \left(\frac{1}{w(0)}-2\right),
\end{align}
where the inequality follows from $(1-u)^{-3}\geq1$ for
$u\in(0,1/2]$. By \cref{eq:prop4-w0-bound},
\[
    \frac{1}{w(0)}
    \geq
    1+
    \exp\left(
        \frac{Q\delta}{2\sigma_{t_k}^2}
    \right),
\]
so, on $E$,
\[
    \tau_{t_k}
    \geq
    \frac{\sigma_{t_k}^4}{C}
    \left[
        \exp\left(
            \frac{Q\delta}{2\sigma_{t_k}^2}
        \right)-1
    \right].
\]
Consequently, by \cref{eq:prop4-E-prob},
\begin{align}
    &\Pr\left(
        \tau_{t_k}
        \geq
        \frac{\sigma_{t_k}^4}{C}
        \left[
            \exp\left(
                \frac{Q\delta}{2\sigma_{t_k}^2}
            \right)-1
        \right]
        \,\middle|\,
        \bm{a}_{t_{k+1}}^{(B)}
    \right) \\
    &\qquad\geq
    1-
    \exp\left(
        -\frac{Q\delta^2}{8\sigma_{t_k}^2}
    \right),
\end{align}
as desired.
\end{proof}

\subsection{Proof of \Cref{prop:interior-selection-window}}
\label{proof:proof_of_interior_selection_window}

\begin{proof}
Let $\bm{d}:=\bm{a}_+-\bm{a}_-$, 
$\bm{m}:=(\bm{a}_-+\bm{a}_+)/2$, and
$C:=\|\bm{A}\bm{d}\|_2^2\|\bm{K}^{-1}\bm{d}\|_2^2/\gamma^2$.
Since $\bm{A}\bm{d}\neq 0$, $\bm{K}$ is positive definite, and
$\gamma>0$, we have $C>0$.

By \cref{lemma:two-point-denoiser}, for every $\sigma>0$,
\begin{equation}
    k_\sigma(\bm{v})
    =
    \left(
        1+
        \exp\left\{
            -\frac{
                \bm{d}^\top\bm{K}^{-1}(\bm{v}-\bm{m})
            }{\sigma^2}
        \right\}
    \right)^{-1}.
    \label{eq:prop5-k-sigma}
\end{equation}
Set $h:=\bm{d}^\top\bm{K}^{-1}(\bm{v}^\star-\bm{m})$.
By assumption, $k_\sigma(\bm{v}^\star)<1/2$ for every $\sigma>0$.
Using \cref{eq:prop5-k-sigma},
\[
    k_\sigma(\bm{v}^\star)<\frac12
    \iff
    \exp\left(-\frac{h}{\sigma^2}\right)>1
    \iff
    h<0.
\]
Hence, setting $b:=-h>0$,
\begin{equation}
    w_\sigma(0)
    =
    k_\sigma(\bm{v}^\star)
    =
    \frac{1}{1+\exp(b/\sigma^2)}
    \in
    \left(0,\frac12\right).
    \label{eq:prop5-w0}
\end{equation}

Fix $\sigma>0$, and write $\bm{v}_\sigma(s)$ for the solution of
\cref{eq:sitcom-input-flow} at noise level $\sigma$, initialized by
$\bm{v}_\sigma(0)=\bm{v}^\star$. By
\cref{lemma:sitcom-existence}, this solution exists uniquely for all
$s\geq0$. Let $w_\sigma(s):=k_\sigma(\bm{v}_\sigma(s))$.

By \cref{lemma:sitcom-scalar-flow}, with $\sigma$ in place of
$\sigma_{t_k}$,
\begin{equation}
    w_\sigma'(s)
    =
    \frac{C}{\sigma^4}
    w_\sigma(s)^2(1-w_\sigma(s))^3.
    \label{eq:prop5-w-ode}
\end{equation}
By \cref{lemma:positive-weights}, $0<w_\sigma(s)<1$ for every finite
$s$. Since $C>0$, \cref{eq:prop5-w-ode} gives
$w_\sigma'(s)>0$, so $w_\sigma$ is strictly increasing.
Moreover, by \cref{lemma:sitcom-convergence},
$w_\sigma(s)\to1$ as $s\to\infty$. Together with
\cref{eq:prop5-w0} and continuity of $w_\sigma$, this implies
$0<\tau(\sigma)<\infty$ and
\begin{equation}
    w_\sigma(\tau(\sigma))=\frac12.
    \label{eq:prop5-hit}
\end{equation}

Rearranging \cref{eq:prop5-w-ode}, integrating from $0$ to
$\tau(\sigma)$, and using the change of variables
$u=w_\sigma(s)$ gives
\begin{align}
    \tau(\sigma)
    &=
    \frac{\sigma^4}{C}
    \int_0^{\tau(\sigma)}
    \frac{
        w_\sigma'(s)
    }{
        w_\sigma(s)^2(1-w_\sigma(s))^3
    }\,ds
    \\
    &=
    \frac{\sigma^4}{C}
    \int_{w_\sigma(0)}^{1/2}
    \frac{du}{u^2(1-u)^3}.
    \label{eq:prop5-tau}
\end{align}
The change of variables is valid because $w_\sigma$ is continuously
differentiable and strictly increasing.

We first consider $\sigma\downarrow0$. For
$u\in(0,1/2]$, $(1-u)^{-3}\geq1$, so
\begin{align}
    \tau(\sigma)
    &\geq
    \frac{\sigma^4}{C}
    \int_{w_\sigma(0)}^{1/2}\frac{du}{u^2}
    \\
    &=
    \frac{\sigma^4}{C}
    \left(
        \frac{1}{w_\sigma(0)}-2
    \right)
    \\
    &=
    \frac{\sigma^4}{C}
    \left(
        \exp\left(\frac{b}{\sigma^2}\right)-1
    \right),
    \label{eq:prop5-small-bound}
\end{align}
where the last equality follows from \cref{eq:prop5-w0}. For every
$x>0$, the power series for $e^x$ gives
$e^x-1\geq x^3/6$. Taking $x=b/\sigma^2$ in
\cref{eq:prop5-small-bound},
\begin{equation}
    \tau(\sigma)
    \geq
    \frac{\sigma^4}{C}
    \frac{1}{6}
    \left(\frac{b}{\sigma^2}\right)^3
    =
    \frac{b^3}{6C\sigma^2}
    \longrightarrow\infty
    \qquad\text{as }\sigma\downarrow0.
    \label{eq:prop5-small-limit}
\end{equation}

We next consider $\sigma\to\infty$. For
$u\in(0,1/2]$,
$u^2(1-u)^3\leq u^2\leq1/4$, so
$[u^2(1-u)^3]^{-1}\geq4$. Therefore
\begin{equation}
    \tau(\sigma)
    \geq
    \frac{4\sigma^4}{C}
    \left(
        \frac12-w_\sigma(0)
    \right).
    \label{eq:prop5-large-bound}
\end{equation}
Set $x:=b/\sigma^2$. By \cref{eq:prop5-w0},
\begin{equation}
    \frac12-w_\sigma(0)
    =
    \frac{e^x-1}{2(1+e^x)}.
    \label{eq:prop5-gap}
\end{equation}
For all sufficiently large $\sigma$, $0<x\leq1$. Then
$e^x-1\geq x$ and $e^x\leq e$, so
\[
    \frac12-w_\sigma(0)
    \geq
    \frac{x}{2(1+e)}
    =
    \frac{b}{2(1+e)\sigma^2}.
\]
Combining this with \cref{eq:prop5-large-bound},
\begin{equation}
    \tau(\sigma)
    \geq
    \frac{2b}{C(1+e)}\sigma^2
    \longrightarrow\infty
    \qquad\text{as }\sigma\to\infty.
    \label{eq:prop5-large-limit}
\end{equation}

It remains to show that $\tau$ attains its minimum. Define
$H(q):=\int_q^{1/2}u^{-2}(1-u)^{-3}\,du$ for
$q\in(0,1/2)$. The fundamental theorem of calculus gives
$H'(q)=-q^{-2}(1-q)^{-3}$, so $H$ is continuous. By
\cref{eq:prop5-w0}, $\sigma\mapsto w_\sigma(0)$ is continuous, and
therefore \cref{eq:prop5-tau} gives
\[
    \tau(\sigma)
    =
    \frac{\sigma^4}{C}H(w_\sigma(0)),
\]
so $\tau$ is continuous on $(0,\infty)$.

Fix any $\bar{\sigma}>0$ and set
$M:=\tau(\bar{\sigma})+1$. By
\cref{eq:prop5-small-limit,eq:prop5-large-limit}, there exist
$0<\sigma_{\mathrm{low}}<\bar{\sigma}
<\sigma_{\mathrm{high}}<\infty$ such that
\begin{equation}
    \tau(\sigma)>M
    \qquad
    \text{whenever }
    0<\sigma\leq\sigma_{\mathrm{low}}
    \text{ or }
    \sigma\geq\sigma_{\mathrm{high}}.
    \label{eq:prop5-outside}
\end{equation}
Since $\tau$ is continuous on the compact interval
$[\sigma_{\mathrm{low}},\sigma_{\mathrm{high}}]$, the extreme value
theorem gives some
$\sigma^\star\in[\sigma_{\mathrm{low}},\sigma_{\mathrm{high}}]$ such that
\begin{equation}
    \tau(\sigma^\star)
    =
    \min_{\sigma\in[\sigma_{\mathrm{low}},\sigma_{\mathrm{high}}]}
    \tau(\sigma).
\end{equation}
Since $\bar{\sigma}\in[\sigma_{\mathrm{low}},\sigma_{\mathrm{high}}]$,
we have $\tau(\sigma^\star)\leq\tau(\bar{\sigma})<M$. Thus,
\cref{eq:prop5-outside} gives
$\tau(\sigma)>\tau(\sigma^\star)$ for every
$\sigma\notin[\sigma_{\mathrm{low}},\sigma_{\mathrm{high}}]$, while
minimality on the compact interval gives
$\tau(\sigma)\geq\tau(\sigma^\star)$ for every
$\sigma\in[\sigma_{\mathrm{low}},\sigma_{\mathrm{high}}]$. Therefore,
\begin{equation}
    \tau(\sigma^\star)
    =
    \min_{\sigma>0}\tau(\sigma),
    \qquad
    \sigma^\star
    \in[\sigma_{\mathrm{low}},\sigma_{\mathrm{high}}],
\end{equation}
as desired.
\end{proof}

\subsection{Proof of \Cref{prop:measurement_consistency_grad}}
\label{proof:proof_of_measurement_consistency_grad}

\begin{proof}
Fix $\bm{a}\in\R^n$.
For any $\bm{h}\in\R^n$, define $  \phi(\varepsilon)
    :=
    \mathcal{L}(\bm{a}+\varepsilon\bm{h})$. 
Then
\begin{align}
    \phi(\varepsilon)
    &=
    \frac{1}{2\gamma^2}
    \left\|
        \bm{A}\bm{a}-\bm{u}_{\mathrm{obs}}
        +\varepsilon\bm{A}\bm{h}
    \right\|_2^2 \\
    &=
    \frac{1}{2\gamma^2}
    \left(
        \|\bm{A}\bm{a}-\bm{u}_{\mathrm{obs}}\|_2^2
        +2\varepsilon
        (\bm{A}\bm{h})^\top
        (\bm{A}\bm{a}-\bm{u}_{\mathrm{obs}})
        +\varepsilon^2\|\bm{A}\bm{h}\|_2^2
    \right).
\end{align}
Therefore,
\begin{align}
    \phi'(0)
    &=
    \frac{1}{\gamma^2}
    (\bm{A}\bm{h})^\top
    (\bm{A}\bm{a}-\bm{u}_{\mathrm{obs}}) \\
    &=
    \bm{h}^\top
    \left[
        \frac{1}{\gamma^2}
        \bm{A}^\top
        (\bm{A}\bm{a}-\bm{u}_{\mathrm{obs}})
    \right].
\end{align}
Since
\begin{equation}
    \phi'(0)
    =
    D_{\bm{h}}\mathcal{L}(\bm{a})
    =
    \left\langle
        \nabla_{\bm{a}}\mathcal{L}(\bm{a}),\bm{h}
    \right\rangle
    =
    \bm{h}^\top\nabla_{\bm{a}}\mathcal{L}(\bm{a}),
\end{equation}
for every $\bm{h}\in\R^n$, comparison with the expression above gives
\begin{equation}
    \nabla_{\bm{a}}\mathcal{L}(\bm{a})
    =
    \frac{1}{\gamma^2}
    \bm{A}^\top
    (\bm{A}\bm{a}-\bm{u}_{\mathrm{obs}}),
\end{equation}
which proves the first claim.

We now show that this gradient has no component in $\ker\bm{A}$.
Set
\begin{equation}
    \bm{y}
    :=
    \bm{A}\bm{a}-\bm{u}_{\mathrm{obs}}
    \in\R^m.
\end{equation}
For every $\bm{z}\in\ker\bm{A}$,
\begin{equation}
    \left\langle
        \bm{A}^\top\bm{y},\bm{z}
    \right\rangle
    =
    \left\langle
        \bm{y},\bm{A}\bm{z}
    \right\rangle
    =
    0,
\end{equation}
because $\bm{A}\bm{z}=0$. Hence
\begin{equation}
    \bm{A}^\top\bm{y}
    \in
    (\ker\bm{A})^\perp.
\end{equation}
The orthogonal projection of a vector in $(\ker\bm{A})^\perp$
onto $\ker\bm{A}$ is zero. Consequently,
\begin{align}
    \bm{P}_{\ker\bm{A}}
    \nabla_{\bm{a}}\mathcal{L}(\bm{a})
    &=
    \frac{1}{\gamma^2}
    \bm{P}_{\ker\bm{A}}
    \bm{A}^\top
    (\bm{A}\bm{a}-\bm{u}_{\mathrm{obs}}) \\
    &=0,
\end{align}
as desired.
\end{proof}

\subsection{Proof of \Cref{prop:preconditioning}}
\label{proof:proof_of_preconditioning}

\begin{proof}
Let $\bm{d}:=\bm{a}_+-\bm{a}_-,
    \qquad
    w(s):=k_{t_k}(\bm{v}(s))$. 
Define the Jacobian of the exact denoiser by $ \bm{J}_{D^*}(\bm{v},t_k)
    :=
    \nabla_{\bm{v}}D^*(\bm{v},t_k)
    \in\R^{n\times n}$, 
and define
\begin{equation}
    \bm{P}(\bm{v},t_k)
    :=
    \bm{J}_{D^*}(\bm{v},t_k)
    \bm{J}_{D^*}(\bm{v},t_k)^\top.
    \label{eq:preconditioner_definition}
\end{equation}

Define $ \mathcal{J}_{t_k}(\bm{v})
    :=
    \mathcal{L}(D^*(\bm{v},t_k))$. 
The gradient flow in \cref{eq:sitcom-input-flow} is therefore $ \bm{v}'(s)
    =
    -\nabla_{\bm{v}}\mathcal{J}_{t_k}(\bm{v})
    \big|_{\bm{v}=\bm{v}(s)}$. 
By the multivariate chain rule,
\begin{equation}
    \nabla_{\bm{v}}\mathcal{J}_{t_k}(\bm{v})
    =
    \bm{J}_{D^*}(\bm{v},t_k)^\top
    \nabla_{\bm{a}}\mathcal{L}(\bm{a})
    \big|_{\bm{a}=D^*(\bm{v},t_k)}.
\end{equation}
Consequently,
\begin{equation}
    \bm{v}'(s)
    =
    -
    \bm{J}_{D^*}(\bm{v}(s),t_k)^\top
    \nabla_{\bm{a}}\mathcal{L}(\bm{a})
    \big|_{\bm{a}=\bm{a}(s)},
    \label{eq:preconditioner_v_flow}
\end{equation}
where $\bm{a}(s)
    :=
    D^*(\bm{v}(s),t_k)$. 
Differentiating $\bm{a}(s)$ and applying the chain rule once more gives
\begin{align}
    \bm{a}'(s)
    &=
    \bm{J}_{D^*}(\bm{v}(s),t_k)\bm{v}'(s) \\
    &=
    -
    \bm{J}_{D^*}(\bm{v}(s),t_k)
    \bm{J}_{D^*}(\bm{v}(s),t_k)^\top
    \nabla_{\bm{a}}\mathcal{L}(\bm{a})
    \big|_{\bm{a}=\bm{a}(s)} \\
    &=
    -
    \bm{P}(\bm{v}(s),t_k)
    \nabla_{\bm{a}}\mathcal{L}(\bm{a})
    \big|_{\bm{a}=\bm{a}(s)},
    \label{eq:preconditioned_output_flow}
\end{align}
as desired. 

We next show explicitly that the preconditioner in
\cref{eq:preconditioner_definition} can introduce a nonzero
measurement nullspace component. By \cref{lemma:two-point-denoiser}, we have that 
\begin{equation}
    D^*(\bm{v},t_k)
    =
    \bm{a}_-+k_{t_k}(\bm{v})\bm{d}.
    \label{eq:precond_exact_denoiser}
\end{equation}
Moreover, by \cref{lemma:two-point-denoiser}, we have that 
\begin{equation}
    k_{t_k}(\bm{v})
    =
    \left(
        1+
        \exp\left\{
            -
            \frac{
                \bm{d}^\top\bm{K}^{-1}
                \left(
                    \bm{v}
                    -\frac{\bm{a}_-+\bm{a}_+}{2}
                \right)
            }{\sigma_{t_k}^2}
        \right\}
    \right)^{-1}.
\end{equation}
Thus,
\begin{equation}
    \nabla_{\bm{v}}k_{t_k}(\bm{v})
    =
    \frac{
        k_{t_k}(\bm{v})
        (1-k_{t_k}(\bm{v}))
    }{\sigma_{t_k}^2}
    \bm{K}^{-1}\bm{d}.
    \label{eq:precond_grad_k}
\end{equation}
Differentiating \cref{eq:precond_exact_denoiser} with respect to
$\bm{v}$ therefore gives
\begin{equation}
    \bm{J}_{D^*}(\bm{v},t_k)
    =
    \bm{d}
    \left(
        \nabla_{\bm{v}}k_{t_k}(\bm{v})
    \right)^\top.
    \label{eq:precond_denoiser_jacobian}
\end{equation}
Substituting \cref{eq:precond_grad_k} into
\cref{eq:precond_denoiser_jacobian} and then into
\cref{eq:preconditioner_definition} yields
\begin{align}
    \bm{P}(\bm{v},t_k)
    &=
    \bm{d}
    \left(
        \nabla_{\bm{v}}k_{t_k}(\bm{v})
    \right)^\top
    \nabla_{\bm{v}}k_{t_k}(\bm{v})
    \bm{d}^\top \\
    &=
    \left\|
        \nabla_{\bm{v}}k_{t_k}(\bm{v})
    \right\|_2^2
    \bm{d}\bm{d}^\top \\
    &=
    \frac{
        k_{t_k}(\bm{v})^2
        (1-k_{t_k}(\bm{v}))^2
    }{\sigma_{t_k}^4}
    \|\bm{K}^{-1}\bm{d}\|_2^2
    \bm{d}\bm{d}^\top.
    \label{eq:preconditioner_explicit}
\end{align}

We now evaluate the output space optimization along $\bm{v}(s)$. By
\cref{eq:precond_exact_denoiser},  $ \bm{a}(s)
    =
    \bm{a}_-+w(s)\bm{d}.$
Since $\bm{u}_{\mathrm{obs}}=\bm{A}\bm{a}_+$ and
$\bm{a}_+=\bm{a}_-+\bm{d}$,
\begin{align}
    \bm{A}\bm{a}(s)-\bm{u}_{\mathrm{obs}}
    &=
    \bm{A}\bm{a}_-
    +w(s)\bm{A}\bm{d}
    -\bm{A}\bm{a}_+ \\
    &=
    -(1-w(s))\bm{A}\bm{d}.
    \label{eq:precond_measurement_residual}
\end{align}
By \cref{prop:measurement_consistency_grad},
\begin{align}
    \nabla_{\bm{a}}\mathcal{L}(\bm{a})
    \big|_{\bm{a}=\bm{a}(s)}
    &=
    \frac{1}{\gamma^2}
    \bm{A}^\top
    \left(
        \bm{A}\bm{a}(s)-\bm{u}_{\mathrm{obs}}
    \right) \\
    &=
    -\frac{1-w(s)}{\gamma^2}
    \bm{A}^\top\bm{A}\bm{d}.
    \label{eq:precond_measurement_gradient}
\end{align}
Using \cref{eq:preconditioner_explicit} and
\cref{eq:precond_measurement_gradient} in
\cref{eq:preconditioned_output_flow} gives
\begin{align}
    \bm{a}'(s)
    &=
    \frac{
        w(s)^2(1-w(s))^3
    }{\gamma^2\sigma_{t_k}^4}
    \|\bm{K}^{-1}\bm{d}\|_2^2
    \bm{d}\bm{d}^\top
    \bm{A}^\top\bm{A}\bm{d} \\
    &=
    \frac{
        \|\bm{A}\bm{d}\|_2^2
        \|\bm{K}^{-1}\bm{d}\|_2^2
    }{
        \gamma^2\sigma_{t_k}^4
    }
    w(s)^2(1-w(s))^3
    \bm{d},
    \label{eq:precond_output_derivative}
\end{align}
since $ \bm{d}^\top\bm{A}^\top\bm{A}\bm{d}
    =
    \|\bm{A}\bm{d}\|_2^2$. 
Applying $\bm{P}_{\ker\bm{A}}$ to
\cref{eq:precond_output_derivative} yields
\begin{equation}
    \bm{P}_{\ker\bm{A}}\bm{a}'(s)
    =
    \frac{
        \|\bm{A}\bm{d}\|_2^2
        \|\bm{K}^{-1}\bm{d}\|_2^2
    }{
        \gamma^2\sigma_{t_k}^4
    }
    w(s)^2(1-w(s))^3
    \bm{P}_{\ker\bm{A}}\bm{d}.
    \label{eq:precond_kernel_derivative}
\end{equation}
For every finite $s\geq0$, we have that $w(s) \in (0,1)$ by \cref{lemma:positive-weights}, and furthermore $ \bm{A}\bm{d}\neq0$ and $
    \bm{P}_{\ker\bm{A}}\bm{d}\neq0$ by assumption. 
Since $\bm{K}$ is positive definite, it is invertible, and
$\bm{d}\neq0$ implies $  \|\bm{K}^{-1}\bm{d}\|_2^2>0$. 
Together with $\gamma>0$ and $\sigma_{t_k}>0$, every scalar factor
multiplying $\bm{P}_{\ker\bm{A}}\bm{d}$ in
\cref{eq:precond_kernel_derivative} is strictly positive. Therefore,
$\bm{P}_{\ker\bm{A}}
    \frac{d}{ds}\bm{a}(s)
    =
    \bm{P}_{\ker\bm{A}}\bm{a}'(s)
    \neq0$ 
for every finite $s\geq0$, as desired.
\end{proof}

\subsection{Statement and Proof of \Cref{prop:finite_diff_assumps}}
\label{proof:proof_of_finite_diff_assumps}

We first state the proposition: 
\begin{proposition}
    Assume $0 < z_- < z_+ < 1$ without loss of generality, and $1 \leq m \leq n$. Let $\bm{A} = \bm{OG}$ and $\bm{P}_{\mathrm{ker} \bm{A}}$ be the orthogonal projection matrix of a vector onto $\mathrm{ker} \bm{A}$. Under a centered finite difference discretization of $\mathcal{G}$ as $\bm{G}$ with $n > \max\{\frac{1}{z_-}, \frac{1}{1-z_+}, \frac{2}{z_+ - z_-}\}$, if $\bm{O}$ samples $m$ points uniformly at random, without replacement, from $\bm{u}$, then $\bm{A}\bm{a}_+ \neq \bm{A}\bm{a}_-$ and with probability at least $1-(m/n)^4$, $\bm{P}_{\ker \bm{A}}\bm{a}_+ \neq \bm{P}_{\ker \bm{A}}\bm{a}_- $. 
    \label{prop:finite_diff_assumps}
\end{proposition}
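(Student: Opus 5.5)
The plan is to make everything explicit in terms of the finite-difference matrix. Let $h = 1/(n+1)$ with interior grid points $\bm{x}_i = ih$, and let $\bm{L}$ be the centered second-difference matrix $h^{-2}\,\mathrm{tridiag}(1,-2,1)$ with homogeneous Dirichlet boundary conditions. Then $\bm{G} = \bm{L}^{-1}$, which is symmetric and invertible.

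Since both fields use the same coefficients $a^{(1)} \neq a^{(2)}$ (otherwise $\bm{a}_+ = \bm{a}_-$ and the question is vacuous), the difference is
\[
\bm{d} = \bm{a}_+ - \bm{a}_- = (a^{(1)} - a^{(2)})\,\bm{1}_I,
\]
where $I = \{i : z_- \le \bm{x}_i < z_+\}$ is a block of consecutive indices. The hypothesis $n > \max\{1/z_-, 1/(1-z_+), 2/(z_+-z_-)\}$ is exactly what I will use to guarantee three things: $I$ contains at least two indices, there is a grid point strictly to the left of $I$, and there is a grid point strictly to the right of $I$. Verifying this carefully with the floor/ceiling bookkeeping of $h = 1/(n+1)$ versus $1/n$ is the step I expect to be the fiddliest. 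If the constants turn out slightly off, I would adjust the counting argument rather than the structure of the proof.

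\textbf{Measurements differ.} I claim $\bm{A}\bm{d} \neq 0$ deterministically, for every selection $\bm{O}$. The matrix $-\bm{L}$ is an irreducible, nonsingular M-matrix, so $(-\bm{L})^{-1}$ is entrywise strictly positive. This is the discrete maximum principle; alternatively, it follows from the explicit discrete Green's function $G_{ij} = -h\min(i,j)\,(n+1-\max(i,j))/(n+1)$. Because $\bm{d}$ is nonzero and sign-definite, $\bm{G}\bm{d}$ has every entry nonzero and of the same sign. Hence any selection of $m \ge 1$ entries gives $\bm{O}\bm{G}\bm{d} \neq 0$, that is, $\bm{A}\bm{a}_+ \neq \bm{A}\bm{a}_-$.

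\textbf{Nullspace components differ.} First, $\bm{P}_{\ker\bm{A}}\bm{d} = 0$ if and only if $\bm{d} \in (\ker\bm{A})^\perp = \mathrm{range}(\bm{A}^\top) = \bm{G}\,\mathrm{span}\{\bm{e}_i : i \in S\}$, where $S$ is the random index set of size $m$ chosen by $\bm{O}$ and we use $\bm{G}^\top = \bm{G}$. Applying the invertible $\bm{L}$, this is equivalent to $\mathrm{supp}(\bm{L}\bm{d}) \subseteq S$. Next, write $I = \{p, \ldots, q\}$. By direct computation, $\bm{L}\bm{1}_I$ is supported exactly on the four distinct indices $p-1, p, q, q+1$, with entries $h^{-2}(1, -1, -1, 1)$. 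Distinctness uses $q > p$, and the existence of $p-1$ and $q+1$ uses the grid-point counts above. So the bad event is that all four of these fixed indices land in $S$. For a uniformly random $m$-subset, this has probability
\[
\binom{n-4}{m-4}\Big/\binom{n}{m} = \prod_{j=0}^{3}\frac{m-j}{n-j} \le (m/n)^4,
\]
using $(m-j)/(n-j) \le m/n$ for $m \le n$; the probability is zero if $m < 4$. Therefore $\bm{P}_{\ker\bm{A}}\bm{a}_+ \neq \bm{P}_{\ker\bm{A}}\bm{a}_-$ with probability at least $1 - (m/n)^4$.
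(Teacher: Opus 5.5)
Your proposal is correct and takes essentially the same route as the paper's proof: the same block structure $\bm{d}=c\,\bm{1}_{\{p,\dots,q\}}$, the same sign argument for $\bm{G}$, the same reduction of $\bm{P}_{\ker\bm{A}}\bm{d}=\bm{0}$ to $\{p-1,p,q,q+1\}\subseteq S$, and the same hypergeometric bound. The paper proves the sign of $\bm{G}$ by checking the explicit Green's function rather than citing M-matrix theory, and it shows that the stated bound on $n$ gives exactly $2\le p<q\le n-1$, so your constants need no adjustment; two harmless slips are that your explicit $G_{ij}$ is off by a factor of $h$ (it should be $-h\min(x_i,x_j)(1-\max(x_i,x_j))=-h^3\min(i,j)(n+1-\max(i,j))$, and only its sign matters), and that $a^{(1)}=a^{(2)}$ would make the statement false rather than vacuous, so $a^{(1)}\neq a^{(2)}$ is an implicit hypothesis, which the paper also assumes.
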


\textbf{Discussion}: The condition on $n$ ensures that the resolution is fine enough that there are some grid points between the locations of $z_-$ and $z_+$. Additionally, we ensure that $z_-$ and $z_+$ are not at the end of the grid. 

We then provide the proof, which leverages the classic result by \citet{thomas1949elliptic}: 

\begin{proof}
Let
\begin{equation}
    h:=\frac{1}{n+1},
    \qquad
    x_i:=ih,
    \qquad
    i=0,\ldots,n+1,
\end{equation}
so that $x_0=0$ and $x_{n+1}=1$. Under the centered finite
difference discretization of $u''(\cdot)=a(\cdot)$ on $(0,1)$ with homogeneous Dirichlet
boundary conditions, define $\bm{L}\in\R^{n\times n}$ by
\begin{equation}
    (\bm{L}\bm{u})_i
    :=
    \frac{u_{i-1}-2u_i+u_{i+1}}{h^2},
    \qquad i=1,\ldots,n,
\end{equation}
where $u_0=u_{n+1}=0$. The corresponding discrete solution
operator is $\bm{G}=\bm{L}^{-1}$. 

We first identify the indices on which
$\bm{d}:=\bm{a}_+-\bm{a}_-$ is nonzero. Define
\begin{equation}
    p:=\min\{i\in\{1,\ldots,n\}:x_i\ge z_-\},
    \qquad
    q:=\max\{i\in\{1,\ldots,n\}:x_i<z_+\}.
\end{equation}
The assumption $    n>
    \max\left\{
        \frac{1}{z_-},
        \frac{1}{1-z_+},
        \frac{2}{z_+-z_-}
    \right\}$ 
implies $h=\frac{1}{n+1}<\frac{1}{n}<z_-$, 
and hence $x_1=h<z_-$, so $p\ge2$. Similarly, $h<\frac{1}{n}<1-z_+$, 
so that $ x_n=1-h>z_+$ 
and therefore $q\le n-1$. Finally, we have that $ 2h<\frac{2}{n}<z_+-z_-$.
By the definition of $p$, $    x_{p-1}<z_-\le x_p,$ so that $   x_{p+1}
    =
    x_{p-1}+2h
    <
    z_-+2h
    <
    z_+$, and thus 
\begin{equation}
    2\le p<q\le n-1.
\end{equation}

Since $z_-<z_+$, the two discretized coefficient fields differ
precisely at the grid points $x_i\in[z_-,z_+)$. Hence, setting $c:=a^{(1)}-a^{(2)}\neq0$, 
we have
\begin{equation}
    \bm{d}
    =
    c\sum_{j=p}^{q}\bm{e}_j,
\end{equation}
where $\bm{e}_j$ is the $j$th standard basis vector of $\R^n$.

We first show that $\bm{A}\bm{d}\neq0$. For $i,j\in\{1,\ldots,n\}$, define
\begin{equation}
    g_{ij}
    :=
    -h\min(x_i,x_j)
    \left(1-\max(x_i,x_j)\right),
\end{equation}
and let $\bm{g}_j
    :=
    (g_{1j},\ldots,g_{nj})^\top$. We verify that  $ \bm{L}\bm{g}_j=\bm{e}_j$. Firstly, extend the definition of $g_{ij}$ so that $g_{0j}=g_{n+1,j}=0$. If $i<j$,
then $g_{ij}=-h x_i(1-x_j)$, which yields $ \frac{g_{i-1,j}-2g_{ij}+g_{i+1,j}}{h^2}=0$. Similarly, if $i>j$, then $ g_{ij}=-h x_j(1-x_i)$, which then yields that $ \frac{g_{i-1,j}-2g_{ij}+g_{i+1,j}}{h^2}=0$. When $i=j$, using $x_{j-1}=x_j-h$ and
$x_{j+1}=x_j+h$,
\begin{align}
    g_{j-1,j}-2g_{jj}+g_{j+1,j}
    &=
    -h\Big[
        (x_j-h)(1-x_j)
        -2x_j(1-x_j)
        +x_j(1-x_j-h)
    \Big]
    \\
    &=h^2.
\end{align}
Consequently, $\frac{g_{j-1,j}-2g_{jj}+g_{j+1,j}}{h^2}=1$. Thus $\bm{L}\bm{g}_j=\bm{e}_j$ for every $j$, so $ \bm{G}
    =
    \bm{L}^{-1}
    =
    [\bm{g}_1,\ldots,\bm{g}_n]$. 
In particular, $\bm{G}$ is symmetric, since $g_{ij}=g_{ji}$, and
for every $i,j\in\{1,\ldots,n\}$, $ g_{ij}<0$, 
because $h>0$ and $x_i,x_j\in(0,1)$.

It follows that, for every $i=1,\ldots,n$, $   (\bm{G}\bm{d})_i
    =
    c\sum_{j=p}^{q}g_{ij}$, and since $g_{ij} <0, c \neq 0$, $ (\bm{G}\bm{d})_i\neq0$ for every $i = 1, ..., n$, and thus 
$\bm{A}\bm{d}
    =
    \bm{O}\bm{G}\bm{d}
    \neq0$ as desired. 

It remains to characterize
$\bm{P}_{\mathrm{ker}\,\bm{A}}\bm{d}$. We have that 
\begin{align}
    \bm{P}_{\mathrm{ker}\,\bm{A}}\bm{d}=0
    &\iff 
    \bm{d} \in \operatorname{range}(\bm{A}^T)
    \\
    &\iff
    \bm{d}\in
    \operatorname{range}(\bm{G}\bm{O}^\top)
    \\
    &\iff
    \bm{L}\bm{d}\in
    \operatorname{range}(\bm{O}^\top),
\end{align}
where the last equivalence follows from
$\bm{L}=\bm{G}^{-1}$.

We now compute $\bm{L}\bm{d}$. Since
\begin{equation}
    d_i
    =
    \begin{cases}
        c, & p\le i\le q,\\
        0, & \text{otherwise},
    \end{cases}
\end{equation}
and $2\le p<q\le n-1$, the centered difference gives
\begin{equation}
    (\bm{L}\bm{d})_i
    =
    \begin{cases}
        \dfrac{c}{h^2}, & i=p-1,\\[1ex]
        -\dfrac{c}{h^2}, & i=p,\\[1ex]
        -\dfrac{c}{h^2}, & i=q,\\[1ex]
        \dfrac{c}{h^2}, & i=q+1,\\[1ex]
        0, & \text{otherwise}.
    \end{cases}
\end{equation}
Equivalently, $ \bm{L}\bm{d}
    =
    \frac{c}{h^2}
    \left(
        \bm{e}_{p-1}
        -\bm{e}_p
        -\bm{e}_q
        +\bm{e}_{q+1}
    \right)$. 
Since $c\neq0$ and $p<q$, its support is therefore exactly
\begin{equation}
    B
    :=
    \operatorname{supp}(\bm{L}\bm{d})
    =
    \{p-1,p,q,q+1\},
    \qquad |B|=4.
\end{equation}

Let $S\subset\{1,\ldots,n\},
    \qquad
    |S|=m > 0$
be the set of indices sampled by $\bm{O}$. Since $\bm{O}$ is a
selection matrix, $\operatorname{range}(\bm{O}^\top)
    =
    \operatorname{span}\{\bm{e}_i:i\in S\}$. 
Thus $\bm{L}\bm{d}\in\operatorname{range}(\bm{O}^\top)
    \iff
    B\subset S$, so that $ \bm{P}_{\mathrm{ker}\,\bm{A}}\bm{d}=0
    \iff
    B\subset S$. 
    
If $m<4$, then $\mathbb{P}(B\subset S)=0$. If $m\ge4$, since $S$ is
chosen uniformly from the $m$-element subsets of
$\{1,\ldots,n\}$,
\begin{align}
    \mathbb{P}(B\subset S)
    &=
    \frac{\binom{n-4}{m-4}}{\binom{n}{m}}
    \\
    &=
    \frac{m(m-1)(m-2)(m-3)}
    {n(n-1)(n-2)(n-3)}
    \\
    &=
    \prod_{r=0}^{3}\frac{m-r}{n-r}.
\end{align}
For $r=0,1,2,3$ and $m\le n$, $ \frac{m-r}{n-r}
    \le
    \frac{m}{n}$, since $n(m-r)\le m(n-r)
    \iff
    mr\le nr$. 
Therefore, in either case $m<4$ or $m\ge4$,
\begin{equation}
    \mathbb{P}\left(
        \bm{P}_{\mathrm{ker}\,\bm{A}}\bm{d}=0
    \right)
    =
    \mathbb{P}(B\subset S)
    \le
    \left(\frac{m}{n}\right)^4.
\end{equation}
Taking complements yields
\begin{equation}
    \mathbb{P}\left(
        \bm{P}_{\mathrm{ker}\,\bm{A}}\bm{d}\neq0
    \right)
    \ge
    1-\left(\frac{m}{n}\right)^4.
\end{equation}
Together with $\bm{A}\bm{d}\neq0$, this proves the proposition as desired. 
\end{proof}

\typeout{get arXiv to do 4 passes: Label(s) may have changed. Rerun}
\end{document}